\documentclass[10pt]{article} 
\usepackage[preprint]{tmlr}

\usepackage{amsmath,amsfonts,bm}

\def\eqref#1{equation~\ref{#1}}

\def\1{\bm{1}}

\DeclareMathAlphabet{\mathsfit}{\encodingdefault}{\sfdefault}{m}{sl}
\SetMathAlphabet{\mathsfit}{bold}{\encodingdefault}{\sfdefault}{bx}{n}

\newcommand{\E}{\mathbb{E}}

\newcommand{\KL}{D_{\mathrm{KL}}}

\usepackage{hyperref}
\usepackage{url}
\usepackage{enumitem}
\usepackage{graphicx}
\usepackage{amssymb,amsthm}
\usepackage{upgreek}
\usepackage{nicefrac}  
\usepackage{tikz}
\usetikzlibrary{positioning}
\usepackage{algorithm}
\usepackage{algpseudocode}

\newcommand{\ssp}[1]{\langle #1 \rangle}
\newcommand{\norm}[1]{\|#1\|}

\def\real{\mathbb{R}}

\newcommand{\cP}{\mathcal P}

\newcommand{\cF}{\mathcal F}

\newcommand{\X}{\mathsf E}

\newtheorem{assumption}{Assumption}
\newtheorem{definition}{Definition}
\newtheorem{proposition}{Proposition}

\newtheorem{example}{Example}
\newtheorem{lemma}{Lemma}

\newtheorem{theorem}{Theorem}

\title{Properties and limitations of geometric tempering for gradient flow dynamics}

\author{\name Francesca Romana Crucinio \email francescaromana.crucinio@unito.it \\
      \addr ESOMAS, University of Turin, Italy \\
      Collegio Carlo Alberto, Turin, Italy
      \AND
      \name Sahani Pathiraja \email s.pathiraja@unsw.edu.au \\
      \addr School of Mathematics \& Statistics, UNSW Sydney, Australia
      }

\def\month{MM}  
\def\year{YYYY} 
\def\openreview{\url{https://openreview.net/forum?id=XXXX}} 

\begin{document}

\maketitle

\begin{abstract}
    We consider the problem of sampling from a probability distribution $\pi$. It is well known that this can be written as an optimisation problem over the space of probability distributions in which we aim to minimise the Kullback--Leibler divergence from $\pi$.
    We consider the effect of replacing $\pi$ with a sequence of moving targets $(\pi_t)_{t\ge0}$ defined via geometric tempering on the Wasserstein and Fisher--Rao gradient flows. 
    We show that convergence occurs exponentially in continuous time, providing novel bounds in both cases. We also consider popular time discretisations and explore their convergence properties.
    We show that in the Fisher--Rao case, replacing the target distribution with a geometric mixture of initial and target distribution never leads to a convergence speed up both in continuous time and in discrete time. Finally, we explore the gradient flow structure of tempered dynamics and derive novel adaptive tempering schedules.
\end{abstract}

\section{Introduction}

Sampling from a target probability distribution whose density is known up to a normalisation constant is a fundamental task in computational statistics and machine learning. A natural way to formulate this task is optimisation of a functional measuring the dissimilarity to the target probability distribution, typically the Kullback--Leibler (KL) divergence due to the fact that the KL does not require knowledge of the normalisation constant (\citet[Theorem 4.1]{chen2023sampling} and \cite{crucinio2025note}).

When the space over which the optimisation is carried out is the Wasserstein space, i.e. probability distributions with bounded second moments equipped with the Wasserstein-2 distance \citep{ambrosio2008gradient}, the resulting gradient descent scheme is referred to as Wasserstein (W) gradient flow. 

A second class of gradient flows which has gained popularity in recent years is that of Fisher--Rao (FR) gradient flows, i.e. gradient descent w.r.t. the Fisher--Rao or  Hellinger--Kakutani distance.  
Contrary to the Wasserstein metric, which leads to a diffusive behaviour, the Fisher--Rao metric leads to birth-death dynamics in which mass is created or removed \citep{Lu2019, lu2023birth}.

Convergence of the W gradient flow is guaranteed when the target distribution is sufficiently smooth and verifies a log-Sobolev assumption \citep{vempala2019rapid}. Exponential convergence of the FR gradient flow for KL does not require a log-Sobolev assumption and can be established under a bounded second moment assumption and mild regularity of the ratio between target and initial distribution \citep[Theorem 4.1 and Eq. (B.6)]{chen2023sampling}, and therefore holds for a wider class of targets. 

Combinining the W and the FR gradient flows by making use of the Wasserstein--Fisher--Rao metric \citep{gallouet2017jko} leads to the Wasserstein--Fisher--Rao (WFR) gradient flow. 
The WFR flow combines the diffusive behaviour of the W gradient flow with the FR flow and enjoys more favourable convergence properties, combining the rate of convergence of the W flow with that of the FR flow \citep{domingo-enrich2023an, Lu2019}; its numerical approximation can considerably speed up the convergence of gradient based methods for multimodal targets \citep{Lu2019, lu2023birth}.

In this work we consider \emph{tempered} versions of the Wasserstein, Fisher--Rao and Wasserstein--Fisher--Rao PDEs obtained by considering a time varying target $\pi_{t} \propto \pi^{\lambda_t}\mu_0^{1-\lambda_t}$ for $\lambda_t: \mathbb{R}_+ \rightarrow [0,1]$. This is inspired by recent work that has considered similar tempered targets in methods that locally transport mass \citep{Chehab2024, matelearning, guo2025provable, guocomplexity}.
Our focus is both on methods based on transport as the Wasserstein gradient flow and on methods based on reweighting and birth--death dynamics as the Fisher--Rao gradient flow. While it is known that the FR gradient flow of the KL is intimately tied to tempering or annealing  due to its unit time rescaling \citep{domingo-enrich2023an, chen_efficient_2024}, it is not immediately obvious to what extent adding further tempering schedules improves or degrades convergence.

We therefore study the continuous and discrete time convergence of the tempered W, FR and WFR flows (Section~\ref{sec:2}). We compare our results for the W flow with \cite{Chehab2024} and highlight similarities and improvements. In Section~\ref{app:tempered_functional} we investigate the gradient flow structure of the tempered W and FR flow by jointly optimising tempering schedules and the evolution of the tempered flows. We compare the solution of the gradient flow approach with other approaches commonly used in the literature.

Our main contributions are:
\begin{itemize}
    \item We provide novel results on the tempered W flow which clearly decompose the bound on the KL decay as the usual bound for the untempered W flow with a bias term which only disappears for $\lambda_t\equiv 1$ (Proposition~\ref{prop:tw_conv} and~\ref{prop:tw_conv_discr}). 
    \item We show convergence of the tempered FR dynamics both in continuous and in discrete time (Proposition~\ref{prop:tfr_conv} and \ref{eq:rate_KL_upper_bound}). We also show that combining tempering with the FR dynamics \textit{never} improves the KL decay (Proposition~\ref{prop:fr_faster}).
    \item We explore the gradient flow structure of these tempered dynamics to obtain the steepest descent evolution for $\lambda$. This point of view allows us to compare the standard untempered flows with the tempered dynamics with the optimal tempering schedule. Theoretical and empirical results show that the steepest descent schedule does not improve over the untempered dynamics and is slower than other approaches commonly used in the literature (Section~\ref{app:tempered_functional} and Section~\ref{sec:expe}).
\end{itemize}

\paragraph*{Notation}

We define some notation that will be used throughout the manuscript.
We endow $\real^d$ with the Borel $\sigma$-field $\mathcal{B}(\real^d)$ with respect to
the Euclidean norm 
$\Vert\cdot\Vert$ where $d$ is clear from context.  We denote by $C^1(\mathbb{R}^{d})$ the set of continuously differentiable functions,
for all $f\in C^1(\mathbb{R}^{d})$ we denote the gradient by $\nabla f$. 
Throughout this manuscript we denote the target by $\pi$ and assume $\pi= \exp(-V_\pi)$ for some $V_\pi\in C^1(\real^d)$. The initial distribution for the tempering iterates is denoted by $\mu_0= \exp(-V_0)$ with $V_0\in C^1(\real^d)$.
Given $\pi, \mu_0$ we define $\pi_{t} \propto \pi^{\lambda_t}\mu_0^{1-\lambda_t}$ for $\lambda_t: \mathbb{R}_+ \rightarrow [0,1]$.
We denote by $\cP(\real^d)$ the set of probability measures over
$\mathcal{B}(\real^d)$, and endow this space with the topology of weak convergence. For any $p \in \mathbb{N}$, we denote by
$\cP_p(\real^d) = \{\pi \in \cP(\real^d):\int_{\real^d}
\norm{x}^p d \pi(x) < +\infty\}$ the set of probability measures over
$\mathcal{B}(\real^d)$ with finite $p$-th moment. We denote the subset of $\cP(\real^d)$ of measures which are absolutely continuous w.r.t. Lebesgue by $\cP^{ac}(\real^d)$ and let $\cP^{ac}_p(\real^d) := \cP_p(\real^d) \cap \cP^{ac}(\real^d)$. 
For any $\mu,\nu\in\cP_p(\real^d)$ we define the
$p$-Wasserstein distance $W_p(\mu, \nu)$ between $\mu$ and $\nu$ by
\begin{equation*}
    W_p(\mu, \nu) = \left(\inf_{\gamma \in \mathbf{T}(\mu,\nu)} \int_{\real^d\times \real^d}  \norm{x -y}^p d \gamma (x,y)\right)^{1/p}
\end{equation*}
where
$\mathbf{T}(\mu, \nu)=\{\gamma\in\cP(\real^d\times
  \real^d):\gamma(A \times \real^d) = \mu(A),\ \gamma(\real^d \times
  A) = \nu(A)\ \forall A \in \mathcal{B}(\real^d)\}$ denotes the set of all transport
plans between $\mu$ and $\nu$. In the following, we metrise $\cP_p(\real^d)$ with $W_p$. 
\section{Tempered PDEs for sampling}
\label{sec:2}

We consider a class of PDEs in which the target $\pi$ is replaced by a sequence of moving targets $(\pi_t)_{t\geq 0}$. We refer to these PDEs as \emph{tempered} PDEs. The original non-tempered dynamics can be obtained by replacing $\pi_t$ with $\pi$ throughout. Samplers based on tempered dynamics have become popular in the literature with the hope that the introduction of moving targets  will improve the convergence of the sampler \citep{nusken2024transport, albergo2024nets, Chehab2024, guo2025provable}.

We focus here on the sequence $(\pi_t)_{t\geq 0}$ defined by the geometric tempering iteration $\pi_{t} \propto \pi^{\lambda_t}\mu_0^{1-\lambda_t}$ 
where the tempering schedule $\lambda_t: \mathbb{R}_+ \rightarrow [0,1]$ is non-decreasing in $t$ and weakly differentiable. This 
has been popularised in the sampling literature through its use in sequential Monte Carlo (SMC; \cite{del2006sequential}), annealed importance sampling (AIS; \cite{neal2001annealed}), parallel tempering (PT; \cite{geyer1991markov}) and the homotopy method \citep{daum2008particle}.

The tempering sequence $\lambda_t$ is fixed a priori. Popular options include the linear rate $\lambda_s = s/t$, the exponential rate $\lambda_s = 1-e^{-\alpha s}$ for $\alpha >0$ and the optimal rate for log-concave targets in \citet[Section 4.2]{Chehab2024}.
We empirically compare this sequences in Section~\ref{sec:expe}.

\subsection{Tempered Wasserstein dynamics}
We start by considering the tempered Wasserstein flow of \cite{Chehab2024}.
The corresponding PDE is given by
\begin{align}
\label{eq:tw_flow}
    \partial_t \mu_t(x)&=  \nabla \cdot \left( \mu_t(x)\nabla \log\left(\frac{\mu_t(x)}{\pi_{t}(x)}\right) \right)  \quad t \in [0,\infty).
\end{align}
This PDE is not 
a gradient flow of $\KL(\mu||\pi)$ and thus decay of $\KL(\mu||\pi)$ along the flow needs to be checked directly (similarly to e.g. \citet[Theorem 1]{Chehab2024}).

We now analyse the decay of the Kullback--Leibler divergence along the tempered W PDE and compare it with its standard non-tempered counterpart.
In order to obtain our results we assume that $\pi, \mu_0$ are strongly log-concave. 
These assumptions are strong but somewhat classical in the study of convergence of Langevin based samplers (e.g., \cite{durmus2019analysis, Chehab2024}).

\begin{assumption}
\label{ass:ula} 
The negative log-densities $V_\pi, V_0$ have Lipschitz continuous gradients with Lipschitz constants $L_\pi, L_0$, respectively. In addition, there exists $c_\pi, c_0>0$ such that 
\begin{align*}
\left\langle x - x', \nabla V_\pi(x) - \nabla V_\pi(x')\right\rangle \geq     c_\pi \|x - x'\|^2, \qquad \left\langle x - x', \nabla V_0(x) - \nabla V_0(x')\right\rangle \geq     c_0 \|x - x'\|^2.
\end{align*}
\end{assumption}

Assumption~\ref{ass:ula} implies that $\pi$ satisfies
a log-Sobolev inequality (e.g., \cite{chewi2024analysis}):
\begin{align}
    \label{eq:lsi}
    \KL(\mu||\pi)\leq \frac{C_{\textrm{LSI}}}{2}\int_{\real^d} \mu(x) \Vert\nabla\log \frac{\mu(x)}{\pi(x)}\Vert^2dx.
\end{align}
with log-Sobolev constant $C_{\textrm{LSI}}= c_\pi^{-1} $.
In addition, Assumption~\ref{ass:ula} implies the dissipativity condition 
\begin{align*}
\ssp{x, \nabla V_\pi(x)}\geq a_\pi\|x\|^2-b_\pi,
\end{align*}
for some $a_\pi>0$ and $b_\pi >0$.
Assumption~\ref{ass:ula} also implies that $\nabla V_{t} := (1-\lambda_t)\nabla V_0+\lambda_t \nabla V_\pi$ is Lipschitz continuous with Lipschitz constant $L_{t} = (1-\lambda_t)L_{0}+\lambda_t L_\pi$ and strongly convex with constant $c_{t} \geq (1-\lambda_t)c_{0}+\lambda_t c_\pi$.

\begin{proposition}[Tempered W]
\label{prop:tw_conv}
Under Assumption~\ref{ass:ula}, the tempered W PDE~\eqref{eq:tw_flow} reduces the Kullback--Leibler divergence at rate
\begin{align}
\label{eq:tw_conv}
    \KL(\mu_t||\pi) &\leq e^{-2t c_\pi}\KL(\mu_0||\pi)+A\int_0^t e^{-2(t-s)c_\pi}(1-\lambda_s)ds,
\end{align}
where $A<\infty$ is a constant which only depends on the dissipativity and Lipschitz properties of $V_\pi, V_0$.
The convergence rate of the standard W PDE is recovered for $\lambda_s\equiv 1$.
\end{proposition}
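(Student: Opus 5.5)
The plan is to differentiate $t\mapsto \KL(\mu_t\|\pi)$ along the tempered W PDE~\eqref{eq:tw_flow}, isolate the usual Fisher-information dissipation, and treat the mismatch between $\pi_t$ and $\pi$ as a perturbation. Integrating by parts gives
\begin{align*}
\frac{d}{dt}\KL(\mu_t\|\pi) = -\int \mu_t \Big\langle \nabla \log\frac{\mu_t}{\pi}, \nabla\log\frac{\mu_t}{\pi_t}\Big\rangle dx .
\end{align*}
We then write $\nabla\log(\mu_t/\pi_t) = \nabla\log(\mu_t/\pi) + \nabla V_t - \nabla V_\pi$. The identity $\nabla V_t-\nabla V_\pi=(1-\lambda_t)(\nabla V_0-\nabla V_\pi)$ turns the derivative into
\begin{align*}
\frac{d}{dt}\KL(\mu_t\|\pi) = -I(\mu_t|\pi) - (1-\lambda_t)\int \mu_t\Big\langle \nabla\log\frac{\mu_t}{\pi}, \nabla V_0-\nabla V_\pi\Big\rangle dx ,
\end{align*}
where $I(\mu_t|\pi)$ denotes the relative Fisher information.

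The cross term needs care, because a crude Young inequality would use up half of $I$ and spoil the rate $2c_\pi$. Instead, I integrate it by parts once more, writing $\mu_t\nabla\log(\mu_t/\pi)=\nabla\mu_t+\mu_t\nabla V_\pi$. This gives
\begin{align*}
\int \mu_t\Big\langle \nabla\log\frac{\mu_t}{\pi}, \nabla (V_0-V_\pi)\Big\rangle dx = \int \mu_t\big[-\Delta(V_0-V_\pi) + \langle\nabla V_\pi,\nabla(V_0-V_\pi)\rangle\big]dx .
\end{align*}
Lipschitz gradients bound the Laplacians by $d(L_0+L_\pi)$ (in the weak sense), and the inner product is bounded by $C(1+\|x\|^2)$ with $C$ depending on $L_0,L_\pi,\|\nabla V_0(0)\|,\|\nabla V_\pi(0)\|$. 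The cross term is therefore at most $(1-\lambda_t)\,C'(1+\E_{\mu_t}\|x\|^2)$.

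Next, I need a bound on the second moment uniform in $t$. Along~\eqref{eq:tw_flow} we have $\frac{d}{dt}\E_{\mu_t}\|x\|^2 = 2d - 2\E_{\mu_t}\langle x,\nabla V_t(x)\rangle$. The dissipativity of $V_\pi$ and $V_0$ (both follow from Assumption~\ref{ass:ula}) passes to the convex combination $V_t$ with constants $a=\min(a_0,a_\pi)$ and $b=\max(b_0,b_\pi)$. Grönwall then gives $\E_{\mu_t}\|x\|^2\le \max\{\E_{\mu_0}\|x\|^2,(d+b)/a\}$. Combining, the cross term is bounded by $(1-\lambda_t)A$ with $A$ depending only on the dissipativity and Lipschitz constants (and $d$ and $\mu_0$).

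Finally, the LSI~\eqref{eq:lsi} with $C_{\mathrm{LSI}}=c_\pi^{-1}$ gives $-I(\mu_t|\pi)\le -2c_\pi\KL(\mu_t\|\pi)$. This yields
\begin{align*}
\frac{d}{dt}\KL(\mu_t\|\pi)\le -2c_\pi\KL(\mu_t\|\pi) + A(1-\lambda_t),
\end{align*}
and Grönwall's lemma produces~\eqref{eq:tw_conv}. When $\lambda\equiv1$ the bias term vanishes and the standard rate is recovered.

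I expect the main obstacle to be the rigorous handling of the cross term, namely justifying the integrations by parts and the uniform moment bound, which requires enough decay and regularity of $\mu_t$. I would assume the usual regularity of the solution, or argue by smooth approximation.
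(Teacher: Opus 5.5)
Your proposal is correct and follows essentially the same route as the paper. Both arguments differentiate $\KL(\mu_t\|\pi)$, split off the relative Fisher information with respect to $\pi$, and integrate the $(1-\lambda_t)$ cross term by parts to reach $\int\mu_t\left[\Delta V_0-\Delta V_\pi-\langle\nabla V_\pi,\nabla V_0-\nabla V_\pi\rangle\right]dx$. They then bound this via Lipschitz/linear-growth estimates and a uniform second-moment bound, and conclude with the LSI ($C_{\mathrm{LSI}}=c_\pi^{-1}$) and Gr\"onwall. The differences are minor: you derive the moment bound directly, where the paper cites Lemma~15 of \citet{Chehab2024}, and you use the looser Laplacian bound $d(L_0+L_\pi)$ in place of $d(L_0-c_\pi)$.
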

The proof of Proposition~\ref{prop:tw_conv} can be found in Appendix~\ref{app:tw_conv}.
The first term corresponds to the rate of the classical W PDE, to which a bias term is added due to the fact that $\pi_t$ approaches $\pi$ only as $t\to\infty$. 
The bias term only disappears for $\lambda_s\equiv 1$, corresponding to the standard Wasserstein gradient flow.

To allow comparison with \citet[Theorem 1]{Chehab2024} we report below its statement (adapted to our assumptions):
\begin{theorem}[\citet{Chehab2024}; Theorem 1]
Suppose Assumption~\ref{ass:ula} holds.
Let $(c_t)_{t\geq 0}$ be the convexity constant of $\pi_t$. 
Let $\mu_t$ be the law of~\eqref{eq:tw_flow} with initialization $p_0$ and denote by $\dot{\lambda}_t$ the weak time derivative of the tempering schedule. Then, for all $t\geq 0$:
\begin{equation*}
\KL(\mu_t||\pi)\leq
\exp\Biggl(-2\int_0^t c_s\, ds\Biggr)\KL(p_0||\mu_0)
+A\,(1-\lambda_t)
+A\int_0^t \dot{\lambda}_s 
\exp\Biggl(-2\int_s^t c_v\,\mathrm{d}v\Biggr)\,d s,
\end{equation*}
where $p_0$ denotes the initial distribution of~\eqref{eq:tw_flow} and $A<\infty$ is a constant which only depends on the dissipativity and Lipschitz properties of $V_\pi, V_0$, the second moment of the initial distribution $p_0$, and linearly on the dimension $d$.
\end{theorem}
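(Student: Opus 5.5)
The plan is to work with the \emph{moving} divergence $\KL(\mu_t||\pi_t)$ instead of $\KL(\mu_t||\pi)$, obtain a Grönwall-type differential inequality for it, and at the end compare $\pi_t$ with $\pi$. Write $\pi_t = e^{-V_t}/Z_t$ with $V_t=(1-\lambda_t)V_0+\lambda_t V_\pi$. Since $\pi_0=\mu_0$, the initial value is $\KL(p_0||\mu_0)$, which is the quantity appearing in the statement.

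\textbf{Step 1 (time derivative).} Differentiating along~\eqref{eq:tw_flow} and integrating by parts gives
\begin{equation*}
\frac{d}{dt}\KL(\mu_t||\pi_t) = -\int \mu_t \Big\Vert\nabla\log\frac{\mu_t}{\pi_t}\Big\Vert^2 dx - \int \mu_t\,\partial_t\log\pi_t\, dx .
\end{equation*}
We have $\partial_t\log\pi_t=-\dot\lambda_t(V_\pi-V_0)-\partial_t\log Z_t$ and $\partial_t\log Z_t=-\dot\lambda_t\E_{\pi_t}[V_\pi-V_0]$. Hence the second term equals
\begin{equation*}
\dot\lambda_t\big(\E_{\mu_t}[V_\pi-V_0]-\E_{\pi_t}[V_\pi-V_0]\big).
\end{equation*}

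\textbf{Step 2 (log-Sobolev and Grönwall).} For the first term, $\pi_t$ is $c_t$-strongly log-concave by Assumption~\ref{ass:ula}. It therefore satisfies~\eqref{eq:lsi} with constant $c_t^{-1}$, so the Fisher information is at least $2c_t\KL(\mu_t||\pi_t)$. For the second term, the Lipschitz gradients give
\begin{equation*}
|V_\pi(x)-V_0(x)|\le C(1+\|x\|^2).
\end{equation*}
So it suffices to bound $\sup_t\E_{\mu_t}\|x\|^2$ and $\sup_t\E_{\pi_t}\|x\|^2$; this yields $\dot\lambda_t A$ since $\dot\lambda_t\ge0$. Combining,
\begin{equation*}
\frac{d}{dt}\KL(\mu_t||\pi_t)\le -2c_t\KL(\mu_t||\pi_t)+A\dot\lambda_t .
\end{equation*}
Grönwall's lemma then yields the first and third terms of the bound.

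\textbf{Step 3 (uniform moments).} I expect this to be the main technical obstacle. Each $V_t$ inherits dissipativity from $V_\pi$ and $V_0$:
\begin{equation*}
\langle x,\nabla V_t(x)\rangle\ge a\|x\|^2-b, \qquad a=\min(a_\pi,a_0),\quad b=\max(b_\pi,b_0).
\end{equation*}
Using the Fokker--Planck form of~\eqref{eq:tw_flow}, we get
\begin{equation*}
\frac{d}{dt}\E_{\mu_t}\|x\|^2 = -2\E_{\mu_t}\langle x,\nabla V_t\rangle+2d \le -2a\E_{\mu_t}\|x\|^2+2(b+d),
\end{equation*}
hence $\sup_t\E_{\mu_t}\|x\|^2\le\max\big(\E_{p_0}\|x\|^2,(b+d)/a\big)$. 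The same integration-by-parts argument applied to the stationary measure $\pi_t$ gives $\E_{\pi_t}\|x\|^2\le (b+d)/a$. This is where the dependence on the second moment of $p_0$ and the linear dependence on $d$ in $A$ enter.

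\textbf{Step 4 (back to $\pi$).} The difference of the two divergences is
\begin{equation*}
\KL(\mu_t||\pi)-\KL(\mu_t||\pi_t)=\int\mu_t\log\frac{\pi_t}{\pi}\,dx=(1-\lambda_t)\E_{\mu_t}[V_\pi-V_0]+\log\frac{Z_\pi}{Z_t}.
\end{equation*}
The function $\lambda\mapsto\log Z_\lambda$ has derivative $-\E_{\pi_\lambda}[V_\pi-V_0]$, and by Step 3 this derivative is uniformly bounded. Hence $|\log Z_\pi-\log Z_t|\le C(1-\lambda_t)$. Together with the moment bound on $\mu_t$, the whole difference is at most $A(1-\lambda_t)$. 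Adding this to the Grönwall bound from Step 2 (enlarging $A$ if necessary) gives the stated inequality.
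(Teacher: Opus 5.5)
Your proposal is correct and follows the route the paper attributes to \citet{Chehab2024}: Gr\"onwall on the moving divergence $\KL(\mu_t||\pi_t)$ via the log-Sobolev inequality of $\pi_t$ with constant $c_t^{-1}$, uniform second-moment bounds to control the $\dot\lambda_t$ term, and a final $A(1-\lambda_t)$ correction from $\int\mu_t\log(\pi_t/\pi)$ (the paper only quotes this theorem and contrasts it with its own direct differentiation of $\KL(\mu_t||\pi)$ in Proposition~\ref{prop:tw_conv}). Two details you leave implicit are harmless: you need $\lambda_0=0$ so that $\pi_0=\mu_0$, and additive constants in $V_\pi-V_0$ cancel in the differences of expectations, so $A$ depends only on the stated quantities.
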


The main difference is in the first term, which shows exponential decay: in our case this decay is driven by $c_\pi$ and the initial condition is $\KL(\mu_0||\pi)$, in the case of \citet[Theorem 1]{Chehab2024} the exponential decay is driven by $c_s$ and the initial condition is $\KL(p_0||\mu_0)$. Thus, the first term in \citet[Theorem 1]{Chehab2024} does not correspond to the standard KL decay result for the untempered W flow. This difference is due to the proof technique: instead of decomposing $\KL(\mu_t||\pi)=\KL(\mu_t||\pi_t)+\mathbb{E}_{\mu_t}\left[\log \mu_t/\pi\right]$ and then taking the time derivative $\frac{d}{dt}\KL(\mu_t||\pi)$ as in \cite{Chehab2024}, we take the time derivative $\frac{d}{dt}\KL(\mu_t||\pi)$ and then decompose the result into a bias term and the relative Fisher information w.r.t. $\pi$.

We note that our upper bound in Proposition~\ref{prop:tw_conv} is minimised for $\lambda_t \equiv 1$ contrary to \citet[Corollary 5]{Chehab2024} which shows $\lambda_t \equiv 1$ minimises their upper bound only if $c_\pi\geq c_{\mu_0}$.

In addition, if $p_0=\mu_0$, as it is often the case in practice, the result of \citet[Theorem 1]{Chehab2024} reduces to 
\begin{equation*}
\KL(\mu_t||\pi)\leq
A\,(1-\lambda_t)
+A\int_0^t \dot{\lambda}_s 
\exp\Biggl(-2\int_s^t c_v\,dv\Biggr)\, ds,
\end{equation*}
and may not guarantee an exponential decay to zero as shown in the following example.
\begin{example}
    Let $\mu_0(x)=\mathcal{N}(x; 0,1)$ and $\pi(x) = \mathcal{N}(x; m,1)$ and consider the linear schedule $\lambda_s = s/t$ defined on $[0, t]$. Then $\pi_{s}(x)=\mathcal{N}(x;s/t m,1)$ so that $c_s \equiv 1$.
    Then,
    \begin{align*}
        \int_0^t \dot{\lambda}_s 
\exp\Biggl(-2\int_s^t c_v\,\mathrm{d}v\Biggr)\,\mathrm{d}s = \frac{1}{t}\int_0^t  
\exp(-2(t-2))\, ds = \frac{1-e^{-2t}}{2t}
    \end{align*}
    which behaves like $1/(2t)$ as $t\to \infty$.
\end{example}

\subsection{Tempered Fisher--Rao dynamics}
Given the tempering sequence $\pi_t$, one can also define the tempered FR PDE
\begin{align}
\label{eq:tfr_flow}
\partial_t\mu_t(x) &= \mu_t(x)\left( \log \left( \frac{\pi_t(x)}{\mu_t(x)} \right) - \mathbb{E}_{\mu_t} \left[  \log \left( \frac{\pi_t}{\mu_t} \right) \right] \right).
\end{align}
Similarly to~\eqref{eq:tw_flow}, the PDE~\eqref{eq:tfr_flow} is not a gradient flow of $\KL(\mu||\pi)$. 
The tempered FR flow has analytic solution (see Appendix~\ref{app:fr_sol} for a proof)
\begin{align}
\label{eq:mut_tempered_fr}
\mu_t\propto \mu_0^{e^{-t}+\int_0^te^{s-t} (1-\lambda_s) ds}   \pi^{\int_0^te^{s-t} \lambda_s ds} .    
\end{align}

We now show that \eqref{eq:tfr_flow} reduces $\KL(\mu||\pi)$. We consider the same assumptions required to obtain the convergence results for the standard FR flow in \citet[Theorem 4.1]{chen2023sampling}.

\begin{assumption}
    \label{ass:fr}
    Assume that $\mu_0, \pi$ have bounded second moments $\int_{\real^d} \norm{x}^2\mu_0(x)dx \leq B$, $\int_{\real^d}  \norm{x}^2\pi(x)dx \leq B$
 and 
\begin{align}
\label{eq:assumption_fr}
   e^{-M(1+\norm{x}^2)}\leq  \frac{\mu_0(x)}{\pi(x)}\leq e^{M(1+\norm{x}^2)}
\end{align}
for some $M>0$.
\end{assumption}

Under this assumption, we can show that \eqref{eq:tfr_flow} reduces $\KL(\mu||\pi)$ at a rate which is always worse than that of the standard FR PDE (see Appendix~\ref{app:tfr_conv} for a proof).
\begin{proposition}[Tempered FR]
\label{prop:tfr_conv}
Suppose that Assumption \ref{ass:fr} holds.
The tempered FR PDE~\eqref{eq:tfr_flow} reduces the Kullback--Leibler divergence at rate
\begin{align}
\label{eq:tfr_conv}
    \KL(\mu_t||\pi) \leq M(1-\int_0^te^{s-t} \lambda_s ds)\left[2+B+B\exp\left( M(1+B)(1-\int_0^te^{s-t} \lambda_s ds)\right)\right].
\end{align}
The convergence result for the standard FR flow in \citet[Theorem 4.1]{chen2023sampling} is recovered setting $\lambda_s\equiv 1$.
\end{proposition}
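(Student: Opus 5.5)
The plan is to work directly from the closed-form solution~\eqref{eq:mut_tempered_fr} and reduce everything to a one-parameter geometric interpolation between $\mu_0$ and $\pi$, mimicking \citet[Theorem 4.1]{chen2023sampling}. Set $\theta_t := \int_0^t e^{s-t}\lambda_s\,ds \in [0,1)$. Since $e^{-t}+\int_0^t e^{s-t}(1-\lambda_s)ds = 1-\theta_t$, we get
\begin{align*}
\mu_t = \frac{\mu_0^{1-\theta_t}\pi^{\theta_t}}{Z_t} = \frac{\pi\, r^{1-\theta_t}}{Z_t}, \qquad r := \frac{\mu_0}{\pi}, \qquad Z_t = \E_\pi\bigl[r^{1-\theta_t}\bigr].
\end{align*}
Writing $\epsilon := 1-\theta_t$, the claim becomes the bound $\KL(\mu_t||\pi) \le M\epsilon\,[2+B+B e^{M(1+B)\epsilon}]$. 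This is exactly the form for the untempered FR flow with $\epsilon=e^{-t}$, and $\lambda_s\equiv 1$ gives $\theta_t=1-e^{-t}$.

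First compute $\KL(\mu_t||\pi) = \epsilon\,\E_{\mu_t}[\log r] - \log Z_t$. For the $\log Z_t$ term, apply Jensen: $\log Z_t = \log \E_\pi[r^\epsilon] \ge \epsilon\,\E_\pi[\log r] \ge -\epsilon M(1+B)$. This uses the lower bound in~\eqref{eq:assumption_fr} and the second moment bound on $\pi$. Hence $-\log Z_t \le M\epsilon(1+B)$.

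For the first term, use the upper bound $\log r \le M(1+\norm{x}^2)$, so that $\epsilon\,\E_{\mu_t}[\log r] \le M\epsilon\,(1+\E_{\mu_t}\norm{x}^2)$. The main obstacle is controlling the second moment of $\mu_t$ uniformly in the interpolation. I would write
\begin{align*}
\E_{\mu_t}\norm{x}^2 = \frac{\E_\pi[\norm{x}^2 r^\epsilon]}{Z_t}.
\end{align*}
For the numerator, apply H\"older/Jensen to the concave map $u\mapsto u^\epsilon$ under the tilted measure $\norm{x}^2\pi/\E_\pi\norm{x}^2$. This gives $\E_\pi[\norm{x}^2 r^\epsilon] \le (\E_\pi\norm{x}^2)^{1-\epsilon}(\E_{\mu_0}\norm{x}^2)^{\epsilon} \le B$. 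For the denominator, use $1/Z_t \le e^{M(1+B)\epsilon}$ from the previous step. Together these give $\E_{\mu_t}\norm{x}^2 \le B e^{M(1+B)\epsilon}$.

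Summing the two pieces yields
\begin{align*}
\KL(\mu_t||\pi) \le M\epsilon\,\bigl[1 + B e^{M(1+B)\epsilon}\bigr] + M\epsilon(1+B) = M\epsilon\,\bigl[2+B+Be^{M(1+B)\epsilon}\bigr],
\end{align*}
which is~\eqref{eq:tfr_conv}. The only delicate points are verifying that the exponent identity holds and that all expectations are finite, which Assumption~\ref{ass:fr} guarantees. Substituting $\lambda_s\equiv 1$ gives $\epsilon=e^{-t}$ and recovers \citet[Theorem 4.1]{chen2023sampling}.
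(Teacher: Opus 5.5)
Your proposal is correct and follows essentially the same route as the paper. The paper uses the closed form~\eqref{eq:mut_tempered_fr}, a Jensen lower bound $Z_t \ge e^{-M(1+B)(1-\theta_t)}$ on the normalising constant, the upper bound in~\eqref{eq:assumption_fr} on $\log(\mu_0/\pi)$, and a H\"older bound $\E_{\mu_t}\norm{x}^2 \le B e^{M(1+B)(1-\theta_t)}$ (its Proposition~\ref{prop:tfr_moment}, which your tilted-measure Jensen argument reproduces); the only difference is that the paper bounds $\log(\mu_t/\pi)$ pointwise before integrating, while you first split $\KL(\mu_t||\pi)=\epsilon\,\E_{\mu_t}[\log r]-\log Z_t$, and your $Z_t=\E_\pi[r^{1-\theta_t}]$ is the correct form (the appendix writes $(\pi/\mu_0)^{1-\theta_t}$, a harmless typo given the two-sided assumption).
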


Since $1-\int_0^te^{s-t} \lambda_s ds-e^{-t} = \int_0^te^{s-t} (1-\lambda_s) ds\geq 0$, we conclude that the tempered FR PDE does not improve on the convergence rate of the standard FR PDE given by $Me^{-t}\left[2+B+B\exp\left( M(1+B)e^{-t}\right)\right]$. 

Comparing~\eqref{eq:mut_tempered_fr} with the evolution along the standard FR PDE (see, e.g., \cite{chen2023sampling})
\begin{align}
\label{eq:mut_fr}
    \mu_t^{\textrm{FR}}\propto \mu_0^{e^{-t}}\pi^{1-e^{-t}},
\end{align}
it is easy to observe that both evolutions belong to the family of geometric interpolations
\begin{align}
\label{eq:alpha_tempering}
    \rho_\alpha(x) := \frac{\mu_0(x)^{1-\alpha} \pi(x)^\alpha}{Z(\alpha)}, \quad 
Z_\alpha := \int \mu_0(x)^{1-\alpha} \pi(x)^\alpha \, dx, \quad \alpha \in [0,1].
\end{align}

This observation allows us to quantify the difference in KL between the tempered FR (T-FR) and standard FR flows (see Appendix~\ref{app:kl_tempering} for a proof).
\begin{proposition}
\label{prop:fr_faster}
Let $\mu_t^{\textrm{T-FR}}$ denote the evolution of the tempered FR flow~\eqref{eq:mut_tempered_fr} and $\mu_t^{\textrm{FR}}$ the evolution of the standard FR flow~\eqref{eq:mut_fr}.
Then
\begin{equation*}
 \KL(\mu_t^{\textrm{FR}} || \pi)-\KL(\mu_t^{\textrm{T-FR}} || \pi)= -\int_{\int_0^te^{s-t} \lambda_s ds}^{1-e^{-t}} (1-u)\, \mathrm{Var}_{\rho_u}\left(\log\frac{\mu_0}{\pi}\right)\,du \le 0,
\end{equation*}
for all $t\geq 0$.
\end{proposition}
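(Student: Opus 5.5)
The plan is to reduce the comparison to a one-dimensional calculus statement along the geometric path $(\rho_\alpha)_{\alpha\in[0,1]}$ of~\eqref{eq:alpha_tempering}. First I identify both flows as members of this family. For the standard FR flow, \eqref{eq:mut_fr} gives directly $\mu_t^{\textrm{FR}}=\rho_{\alpha_{\textrm{FR}}}$ with $\alpha_{\textrm{FR}}:=1-e^{-t}$. For the tempered flow, set $\alpha_{\textrm{T}}:=\int_0^t e^{s-t}\lambda_s\,ds$. Using $\int_0^t e^{s-t}ds=1-e^{-t}$, the exponent of $\mu_0$ in \eqref{eq:mut_tempered_fr} is
\begin{equation*}
e^{-t}+\int_0^t e^{s-t}(1-\lambda_s)\,ds = 1-\alpha_{\textrm{T}} .
\end{equation*}
Hence $\mu_t^{\textrm{T-FR}}=\rho_{\alpha_{\textrm{T}}}$. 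Since $\lambda_s\le 1$, we also get $\alpha_{\textrm{T}}\le \alpha_{\textrm{FR}}$, which is exactly the remark made after Proposition~\ref{prop:tfr_conv}.

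Next I define $f(\alpha):=\KL(\rho_\alpha||\pi)$ and compute its derivative. Write $h:=\log(\mu_0/\pi)$, so that $\rho_\alpha = \pi\, e^{(1-\alpha)h}/Z(\alpha)$ and $\log Z(\alpha)=\log\int \pi e^{(1-\alpha)h}dx$. Then
\begin{equation*}
f(\alpha)=(1-\alpha)\,\E_{\rho_\alpha}[h]-\log Z(\alpha).
\end{equation*}
The standard exponential-family identities give $\frac{d}{d\alpha}\log Z(\alpha)=-\E_{\rho_\alpha}[h]$ and $\frac{d}{d\alpha}\E_{\rho_\alpha}[h]=-\mathrm{Var}_{\rho_\alpha}(h)$. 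Substituting these, the two mean terms cancel and
\begin{equation*}
f'(\alpha)=-(1-\alpha)\,\mathrm{Var}_{\rho_\alpha}(h).
\end{equation*}
By the fundamental theorem of calculus,
\begin{equation*}
\KL(\mu_t^{\textrm{FR}}||\pi)-\KL(\mu_t^{\textrm{T-FR}}||\pi)=f(\alpha_{\textrm{FR}})-f(\alpha_{\textrm{T}})=\int_{\alpha_{\textrm{T}}}^{\alpha_{\textrm{FR}}}f'(u)\,du .
\end{equation*}
This is the claimed formula. Its sign is nonpositive because the integrand $(1-u)\mathrm{Var}_{\rho_u}(h)$ is nonnegative for $u\in[0,1]$ and $\alpha_{\textrm{T}}\le\alpha_{\textrm{FR}}$.

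The main obstacle is justifying the differentiation under the integral sign. This requires $Z(\alpha)<\infty$ together with finiteness and local uniform integrability of $h$ and $h^2$ under $\rho_\alpha$ for $\alpha$ in a neighbourhood of $[\alpha_{\textrm{T}},\alpha_{\textrm{FR}}]$. I would handle this with H\"older's inequality: $Z(\alpha)\le 1$ since $\int\mu_0^{1-\alpha}\pi^{\alpha}\le(\int\mu_0)^{1-\alpha}(\int\pi)^{\alpha}$. For the moments, I would use $|h(x)|\le M(1+\norm{x}^2)$ from \eqref{eq:assumption_fr}, which gives $|h|^k e^{(1-\alpha)h}\pi \le C\,(1+\norm{x}^2)^k\,\mu_0^{1-\alpha}\pi^{\alpha}$. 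Finiteness of the variance then follows if $\rho_\alpha$ has finite fourth moments, which holds e.g.\ in the strongly log-concave setting of Assumption~\ref{ass:ula}; otherwise I would add it as a mild standing integrability assumption. Given such moment bounds, dominated convergence yields the two derivative identities above. The endpoint cases $u\in\{0,1\}$ are handled by continuity of $f$.
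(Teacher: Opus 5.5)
Your proposal is correct and follows essentially the paper's proof. Both identify the two flows as $\rho_\alpha$ at $\alpha=\int_0^t e^{s-t}\lambda_s\,ds$ and $\alpha=1-e^{-t}$, write $\KL(\rho_\alpha\|\pi)=(1-\alpha)\E_{\rho_\alpha}[h]-\log Z_\alpha$, differentiate to get $-(1-\alpha)\mathrm{Var}_{\rho_\alpha}(h)$, and integrate between the two values; your justification of differentiating under the integral sign (H\"older for $Z_\alpha$, quadratic growth of $h$ plus moment bounds) is extra rigor that the paper omits.
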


This shows that the standard FR flow is always faster than the tempered FR flow, the difference in speed depends on the choice of $\lambda_s$ and on the ratio $\mu_0/\pi$.

\subsection{Tempered Wasserstein--Fisher--Rao dynamics}

Given the tempered W and tempered FR PDE one can naturally define a tempered WFR PDE
\begin{align}
\label{eq:twfr_flow}
\partial_t\mu_t &= \nabla \cdot \left( \mu_t(x)\nabla \log\left(\frac{\mu_t(x)}{\pi_t(x)}\right) \right)+\mu_t(x)\left( \log \left( \frac{\pi_t(x)}{\mu_t(x)} \right) - \mathbb{E}_{\mu_t} \left[  \log \left( \frac{\pi_t}{\mu_t} \right) \right] \right).
\end{align}

By defining the derivative of $\KL(\mu||\pi)$ along the tempered W flow~\eqref{eq:tw_flow} and that along the tempered FR flow~\eqref{eq:tfr_flow} as
\begin{align*}
    \frac{d}{dt} \KL(\mu_t^{\textrm{T-W}}||\pi) &:=-\int_{\real^d}  \mu_t\Vert\nabla \log \frac{\mu_t}{\pi} \Vert^2 +(1-\lambda_t)\int_{\real^d}  \log \frac{\mu_t}{\pi} \nabla \cdot \left( \mu_t\nabla \log\left(\frac{\pi}{\mu_0}\right)\right)\\
    \frac{d}{dt} \KL(\mu_t^{\textrm{T-FR}}||\pi) &:=-  \textrm{Var}_{\mu_t}\left(\log \frac{\mu_t}{\pi}\right)-(1-\lambda_t)  \textrm{Cov}_{\mu_t}\left(\log \frac{\mu_t}{\pi}, \log\frac{\pi}{\mu_0}\right)
\end{align*}
we can write
\begin{align*}
    \frac{d}{dt} \KL(\mu_t^{\textrm{T-WFR}}||\pi) &= \frac{d}{dt} \KL(\mu_t^{\textrm{T-W}}||\pi)+\frac{d}{dt} \KL(\mu_t^{\textrm{T-FR}}||\pi).
\end{align*}
This is directly comparable to the property that the WFR metric approximately decomposes as the sum of the W and FR metric as shown in \citep{gallouet2017jko}.

Exploiting the fact that the time derivative of $\KL$ along the tempered WFR PDE factorises into two terms as in the case of the standard WFR flow, if $\mu_t$ follows the tempered WFR flow~\eqref{eq:twfr_flow}, 
\begin{align}
\label{eq:twfr_conv}
\KL(\mu_t|\pi)\leq \min\left\lbrace \KL(\mu_t^{\textrm{T-W}}||\pi), \KL(\mu_t^{\textrm{T-FR}}||\pi)\right\rbrace;
\end{align}
where an upper bound to $\KL(\mu_t^{\textrm{T-W}}||\pi)$ is given in~\eqref{eq:tw_conv} and one for $\KL(\mu_t^{\textrm{T-FR}}||\pi)$ in~\eqref{eq:tfr_conv}.

\subsection{Multivariate Gaussian case}
Our results as well as those in \cite{Chehab2024} suggest that the tempered dynamics do not improve on the gradient flow ones discussed in the previous section. However, our convergence results only obtain upper bounds for the decay of $\KL$ which are not necessarily tight. To showcase that the tempered PDEs do not improve over the standard ones, we consider the special case of transporting a Gaussian $\mu_0(x) = \mathcal{N}(x; m_0, \Sigma_0)$ to a target Gaussian $\pi(x) = \mathcal{N}(x; m_\pi, \Sigma_\pi)$, for which the W, FR, WFR, and their tempered versions can be solved analytically (see Appendix~\ref{sec:gaussian}).   

In particular, observing that both the tempered W flow and the tempered FR flow preserve gaussianity, we have that $\mu_t(x) = \mathcal{N}(x; m_t, \Sigma_t)$ for each time $t\geq 0$ for \eqref{eq:tw_flow}, \eqref{eq:tfr_flow} and \eqref{eq:twfr_flow}.
The evolution of $m_t$ and $\Sigma_t$ is given by the following ODEs for the tempered W flow
\begin{align*}
\dot{m_t} &=  -(\lambda_t\Sigma_\pi^{-1}+(1-\lambda_t)\Sigma_0^{-1})\left(m_t-\left((1-\lambda_t)m_0\Sigma_\pi + \lambda_t m_\pi \Sigma_0 \right) \left(\lambda_t\Sigma_0 + (1 - \lambda_t)\Sigma_\pi \right)^{-1}\right)\\
\dot{\Sigma_t} &= -(\lambda_t\Sigma_\pi^{-1}+(1-\lambda_t)\Sigma_0^{-1})\Sigma_t-\Sigma_t(\lambda_t\Sigma_\pi^{-1}+(1-\lambda_t)\Sigma_0^{-1}) +2I,
\end{align*}
and for the tempered FR flow by
\begin{align*}
    \dot{m}_t &= -\Sigma_t (\lambda_t\Sigma_\pi^{-1}+(1-\lambda_t)\Sigma_0^{-1})(m_t - m_\pi) \\
    \dot{\Sigma}_t &=  -\Sigma_t(\lambda_t\Sigma_\pi^{-1}+(1-\lambda_t)\Sigma_0^{-1})\Sigma_t + \Sigma_t .
\end{align*}
The standard (i.e. non-tempered) flows are obtained by setting $\lambda_t\equiv 1$ in the above equations. The tempered WFR flow is obtained by summing the r.h.s. of the W and the FR flows.

\begin{figure}
    \centering
    \includegraphics[width=\linewidth]{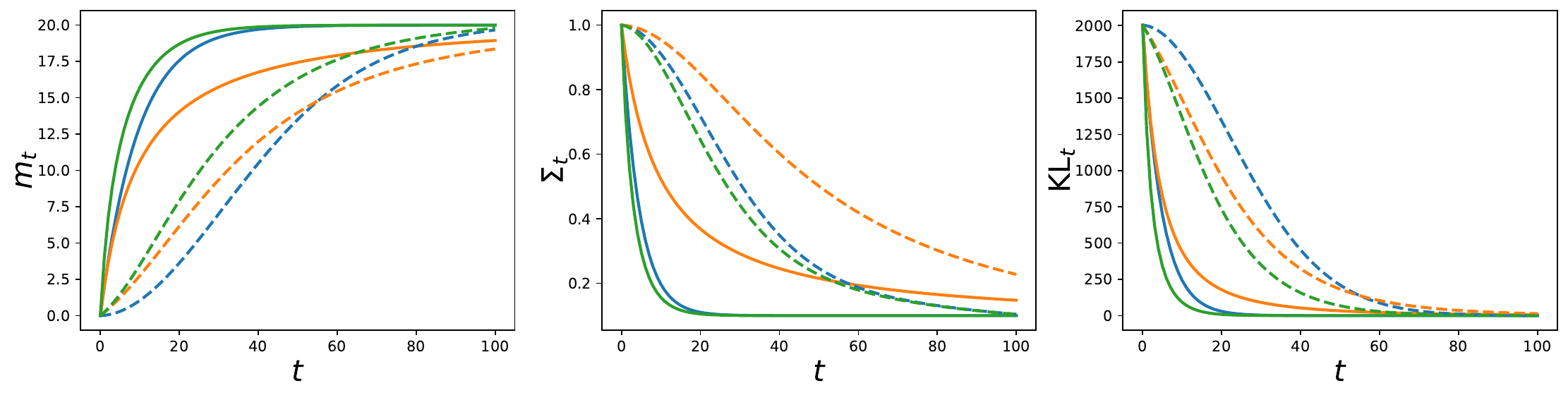}
    \includegraphics[width=\linewidth]{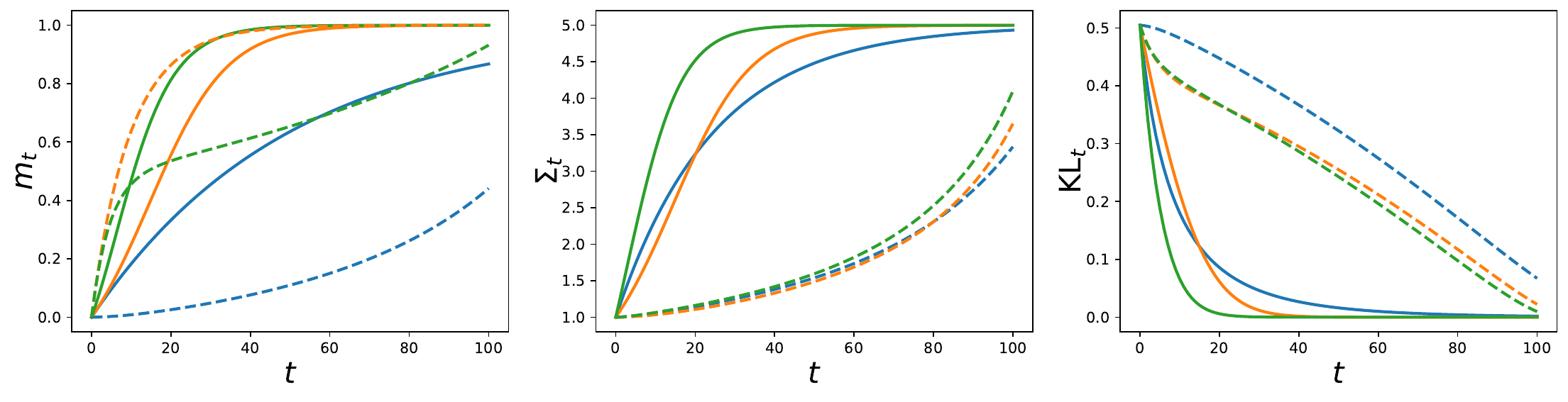}
    \includegraphics[width = 0.7\textwidth]{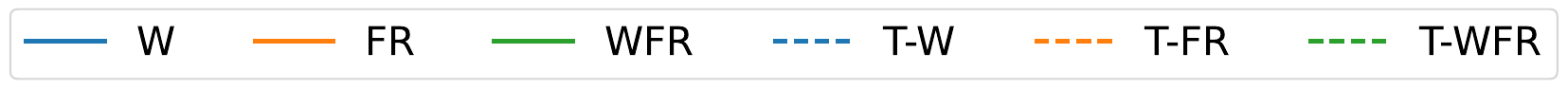}
    \caption{Evolution of mean, variance and $\KL(\mu_t||\pi)$ with $\mu_t$ given by the PDE flows in~\eqref{eq:tw_flow}, \eqref{eq:tfr_flow} and~\eqref{eq:twfr_flow} in the 1D Gaussian case with $\mu_0(x) = \mathcal{N}(x; 0, 1)$ and $\pi(x) = \mathcal{N}(x; 20, 0.1)$ (first row), $\pi(x) = \mathcal{N}(x; 1, 5)$ (second row).}
    \label{fig:gaussians_exact}
\end{figure}
Figure~\ref{fig:gaussians_exact} shows the evolution of mean, covariance and Kullback--Leibler divergence along different PDE flows in the case of 1D Gaussians. For the tempered PDEs we use the linear schedule $\lambda_s = s/t$. The initial distribution is a standard Gaussian and we consider two different targets: for target 1 (first row) we set $m_\pi = 20,\Sigma_\pi = 0.1$.
In this case the W flow is faster than the FR one and the WFR flow behaves more like the W flow (Figure~\ref{fig:gaussians_exact} top row).
For the second target (second row) we have $m_\pi = 1,\Sigma_\pi = 5$;
the FR flow is faster than the W flow for all sufficiently large $t$, and the WFR flow more closely follows the FR flow.
Figure~\ref{fig:gaussians_exact} confirms our theoretical results: the tempered PDEs result in a slower convergence to the target than the corresponding standard flows.

\subsection{Discrete Time Convergence}

The previous sections suggest that tempering does not improve the continuous time dynamics given by the W, FR and WFR flows.
In this section, we consider time discretisations of the W and FR dynamics and show that, also in this case, tempering does not allow us to obtain sharper decay bounds for the W flow and decelerates in convergence of the FR flow.

\subsubsection{Tempered Unadjusted Langevin Algorithm}
Tempered unadjusted Langevin algorithms (Tempered ULA) \cite{Chehab2024} provide an approximation of the tempered Wasserstein PDE~\eqref{eq:tw_flow} by using an Euler discretisation of the associated SDE
\begin{align}
    \label{eq:templanSDE}
     dX_t&= \nabla\log\pi_t(X_t)dt + \sqrt{2}dB_t.
\end{align}
Denote by $\pi_n\propto \pi^{\lambda_{n}}\mu_0^{1-\lambda_{n}}$ obtained by considering a time discretisation $0=\lambda_{0}\leq \lambda_{1}\leq \dots\leq \lambda_{n}\leq \lambda_{{n+1}}\leq \dots$ of $\lambda_t:\real_+\to [0, 1]$ with $\lambda_n \equiv \lambda_{t_n}$.
The Tempered ULA dynamics for discretisation steps $(\gamma_n)_{n\geq 0}$ are given by
\begin{align}
\label{eq:tula_continuous}
    X_{n} = X_{n-1} + \gamma_n\nabla\log\pi_n(X_{n-1})+\sqrt{2\gamma_n}\xi_{n},
\end{align}
where $\xi_{n}\sim \mathcal{N}(0, \textsf{Id})$. Denoting by $\mu_{n-1}$ the law of $X_{n-1}$, the update~\eqref{eq:tula_continuous} corresponds to 
\begin{align*}
   \mu_{n}(x) =(4\uppi\gamma_n)^{-d/2}\int_{\real^d}  \exp \left(-\frac{1}{4\gamma_n} \|x - y -\gamma_n \nabla \log \pi_n(y) \|^2   \right) \mu_{n-1}(y) dy 
\end{align*}
where $\mu_n$ denotes the law of $X_n$.

Under Assumption~\ref{ass:ula} we can then obtain an equivalent result to Proposition~\ref{prop:tw_conv} in discrete time, decomposing the KL decay into the classical rate for standard W flows and a bias term (see Appendix~\ref{app:tw_conv_discr} for a proof).

\begin{proposition}
\label{prop:tw_conv_discr}
Under Assumption~\ref{ass:ula}, if $\gamma_n \leq \min(1, \frac{a_\pi\wedge a_0}{2(L_\pi+L_0)^2}, \frac{c_n}{4L_n^2})$, then
\begin{align*}
    \KL(\mu_{n}||\pi)&\leq \KL(\mu_{0}||\pi)e^{-\sum_{k=1}^n \gamma_k  c_{k}}+\sum_{k=0}^{n-1}(6\gamma_k^2 d L_{k}^2+(1-\lambda_k)A')e^{-\sum_{i=1}^k \gamma_i  c_{i}},
\end{align*}
where $A'$ 
depends on the constants
from Assumption~\ref{ass:ula}, the second moment of the initial distribution $\mu_0$, and linearly on the dimension.
\end{proposition}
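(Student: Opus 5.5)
We follow the one-step interpolation argument of \cite{vempala2019rapid} for ULA and add a bias term coming from tempering, mirroring the continuous-time proof of Proposition~\ref{prop:tw_conv}. On step $n$ we interpolate
\[
X_{n-1+s} = X_{n-1} + s\nabla\log\pi_n(X_{n-1}) + \sqrt{2}\,(B_{s}-B_0), \qquad s\in[0,\gamma_n],
\]
so that $X_{n}$ is recovered at $s=\gamma_n$. The law $\rho_s$ of $X_{n-1+s}$ satisfies the Fokker--Planck equation
\[
\partial_s\rho_s=\nabla\cdot\bigl(\rho_s(\nabla\log\rho_s - \E[\nabla\log\pi_n(X_{n-1})\mid X_{n-1+s}=\cdot])\bigr).
\]
We then compute $\frac{d}{ds}\KL(\rho_s||\pi)$ and write $\nabla\log\pi_n = \nabla\log\pi + (1-\lambda_n)\nabla\log(\mu_0/\pi)$. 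This gives
\[
\frac{d}{ds}\KL(\rho_s||\pi) = -\int\rho_s\Bigl\|\nabla\log\frac{\rho_s}{\pi}\Bigr\|^2 + \E\Bigl\langle \nabla\log\pi_n(X_{n-1})-\nabla\log\pi_n(X_{n-1+s}),\nabla\log\frac{\rho_s}{\pi}(X_{n-1+s})\Bigr\rangle + (1-\lambda_n)\E\Bigl\langle\nabla\log\frac{\mu_0}{\pi}(X_{n-1+s}),\nabla\log\frac{\rho_s}{\pi}(X_{n-1+s})\Bigr\rangle .
\]
The first term is the Fisher information w.r.t.\ $\pi$. We bound the second term by Young's inequality, giving $\tfrac14 I+\E\|\cdot\|^2$, and then use the $L_n$-Lipschitz property of $\nabla V_n$ together with the standard estimate $\E\|X_{n-1+s}-X_{n-1}\|^2\le 2s^2\E\|\nabla V_n(X_{n-1})\|^2+4sd$. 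Combined with the stepsize condition $\gamma_n\le c_n/(4L_n^2)$, this produces the discretisation error $\lesssim \gamma_n^2 dL_n^2$ (constant $6$ after bookkeeping, as in Vempala--Wibisono), at the cost of a fraction of the Fisher information.

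The third term is the tempering bias. We bound it by Young's inequality, giving $\tfrac14 I + (1-\lambda_n)^2\E\|\nabla V_0-\nabla V_\pi\|^2(X_{n-1+s})$. Since $1-\lambda_n\le1$, this becomes $(1-\lambda_n)$ times
\[
(L_0+L_\pi)^2\,\E\|X_{n-1+s}\|^2 + \text{const}.
\]
To make the bias constant uniform in $n$ we need a uniform-in-$n$ second-moment bound for the iterates. We obtain it from dissipativity of $V_n=(1-\lambda_n)V_0+\lambda_n V_\pi$, which satisfies $\langle x,\nabla V_n(x)\rangle\ge (a_\pi\wedge a_0)\|x\|^2-(b_\pi\vee b_0)$. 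Expanding $\E\|X_n\|^2$ and using $\gamma_n\le (a_\pi\wedge a_0)/(2(L_\pi+L_0)^2)$ and $\gamma_n\le1$ gives a contraction of the form
\[
\E\|X_n\|^2\le (1-\gamma_n a)\E\|X_{n-1}\|^2+\gamma_n C(d,b).
\]
Hence $\sup_n\E\|X_n\|^2\le \max(\E_{\mu_0}\|x\|^2, C/a)$, which is linear in $d$, and the same bound holds along the interpolation. This fixes $A'$ and gives its dependence on $d$ and on the second moment of $\mu_0$, as in the statement.

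With the remaining Fisher information, we use the LSI for $\pi$ (constant $c_\pi^{-1}$), or more loosely $c_n$ after comparison, to get $\frac{d}{ds}\KL(\rho_s||\pi)\le -c\,\KL(\rho_s||\pi) + 6\gamma_n dL_n^2 + (1-\lambda_n)A'$. Grönwall over $s\in[0,\gamma_n]$ then yields
\[
\KL(\mu_n||\pi)\le e^{-\gamma_n c_n}\KL(\mu_{n-1}||\pi) + \gamma_n\bigl(6\gamma_n dL_n^2+(1-\lambda_n)A'\bigr),
\]
and unrolling the recursion (absorbing $\gamma_k\le1$) gives the stated sum. We expect the main obstacle to be the rate constant in the exponent. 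The LSI naturally gives $c_\pi$, whereas the statement uses $c_k$. Reconciling the two requires either arguing with the strong convexity of $\pi_n$ in the discretisation-error control, or simply noting which constant the Grönwall step produces after splitting the Fisher information. Indices must also be matched carefully, since the statement's sums use $\lambda_k$ with $k$ from $0$ to $n-1$ against exponents indexed from $1$.
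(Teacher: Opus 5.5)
\textbf{The gap is the contraction rate.} Your Gr\"onwall inequality is for $\KL(\rho_s\|\pi)$, and the only functional inequality you use is the LSI of $\pi$. After spending $\tfrac14$ of the Fisher information on each Young inequality you have $-\tfrac12 I_\pi(\rho_s)\le -c_\pi\KL(\rho_s\|\pi)$. So what you actually prove is $\KL(\mu_n\|\pi)\le e^{-\gamma_n c_\pi}\KL(\mu_{n-1}\|\pi)+\cdots$. The statement has $e^{-\gamma_n c_n}$ with $c_n\ge(1-\lambda_n)c_0+\lambda_n c_\pi$, which is strictly larger than $c_\pi$ whenever $c_0>c_\pi$ and $\lambda_n<1$, so your recursion does not imply it.

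Neither remedy you float closes this. The exponent is fixed by the inequality applied to the dissipation term, not by the discretisation-error control. And an LSI for $\KL(\cdot\|\pi)$ with constant $c_n^{-1}$ is false when $c_n>c_\pi$ (take $\pi$ Gaussian). The same mismatch undermines ``constant $6$ as in Vempala--Wibisono''. Their absorption of $\gamma_n^2L_n^2\E\|\nabla V_n(X_{n-1})\|^2$ uses $\E_\rho\|\nabla V_n\|^2\le \frac{4L_n^2}{c_n}\KL(\rho\|\pi_n)+2dL_n$, i.e.\ KL relative to the drift's own target $\pi_n$. That is also where the step condition $\gamma_n\le c_n/(4L_n^2)$ comes from.

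\textbf{The missing idea} is to measure dissipation against the current tempered target $\pi_n$, which is $c_n$-strongly log-concave and $L_n$-smooth. You return to $\pi$ only through $\KL(\rho\|\pi_n)=\KL(\rho\|\pi)+\E_\rho\log(\pi/\pi_n)$, whose correction is $O(1-\lambda_n)$.

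The paper does this for a whole step:
\begin{itemize}
\item it writes $\KL(\mu_n\|\pi)=\KL(\mu_n\|\pi_n)+\int\mu_n\log(\pi_n/\pi)$;
\item it applies Lemma~\ref{lem:vempala} with $\nu=\pi_n$ to get $e^{-\gamma_nc_n}\KL(\mu_{n-1}\|\pi_n)+6\gamma_n^2dL_n^2$, then switches back to $\pi$;
\item it bounds $(1-\lambda_n)\int(\mu_{n-1}-\mu_n)\log(\pi/\mu_0)$ by $A'$ via Lemma~\ref{lem:chehab_prop14} and the second-moment bound of Lemma~\ref{lem:chehab_lemma16}. Your dissipativity argument reproduces that second-moment bound.
\end{itemize}

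You can also do this inside your interpolation. Set $h=\nabla\log(\rho_s/\pi_n)$ and $\delta_s=\nabla\log\pi_n(X_{n-1+s})-\nabla\log\pi_n(X_{n-1})$. Then
$\frac{d}{ds}\KL(\rho_s\|\pi)\le-\tfrac12\E\|h\|^2+\tfrac32\E\|\delta_s\|^2+\tfrac32(1-\lambda_n)^2\E\|\nabla V_0-\nabla V_\pi\|^2$,
and you apply the LSI of $\pi_n$ to $\E\|h\|^2=I_{\pi_n}(\rho_s)$. This version keeps the genuine advantage of your route. Your per-step bias is $\gamma_n(1-\lambda_n)A'$ instead of the paper's $(1-\lambda_n)A'$, which is consistent with Proposition~\ref{prop:tw_conv} as $\gamma_n\to0$.

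In either version, $\E_\rho\log(\pi/\pi_n)=(1-\lambda_n)\E_\rho\log(\pi/\mu_0)+\log Z_n$, with $Z_n=\int\mu_0^{1-\lambda_n}\pi^{\lambda_n}$, has no definite sign. Bound its absolute value using $-(1-\lambda_n)\KL(\pi\|\mu_0)\le\log Z_n\le0$ rather than dropping it. The paper's use of $e^{-\gamma_nc_n}\le1$ at that point tacitly assumes the term is nonnegative.

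The index mismatch you flagged is in the statement itself: the recursion unrolls to $\sum_{k=1}^n(\cdots)e^{-\sum_{i=k+1}^n\gamma_ic_i}$.
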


To allow comparison with \citet[Theorem 3]{Chehab2024} we report below its statement (adapted to our assumptions):
\begin{theorem}[\citet{Chehab2024}; Theorem 3]
Suppose Assumption~\ref{ass:ula} holds.
Let $(c_k)_{k\geq 0}$ be the convexity constant of $\pi_k$. 
Let $X_0\sim p_0$ and $X_n$ be given by~\eqref{eq:tula_continuous}. Then, if $\gamma_n \leq \min(1, \frac{a_\pi\wedge a_0}{2(L_\pi+L_0)^2}, \frac{c_n}{4L_n^2})$,
\begin{align*}
    \KL(\mu_{n}||\pi)&\leq \KL(p_0||\mu_0)e^{-\sum_{k=1}^n \gamma_k  c_{k}}+6\sum_{k=1}^{n}\gamma_k^2 d L_{k}^2e^{-\sum_{i=k+1}^n \gamma_i  c_{i}}\\
    &+A'(1-\lambda_k)+A'\sum_{i=1}^{n} (\lambda_{i}-\lambda_{i-1})
\exp\Biggl(-2\sum_{j=i}^{k} \gamma_j c_j\Biggr)
\end{align*}
where $A'$ 
depends on the constants
from Assumption~\ref{ass:ula}, the second moment of the initial distribution $p_0$, and linearly on the dimension.
\end{theorem}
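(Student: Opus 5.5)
The statement just above is \citet[Theorem 3]{Chehab2024}, reported for comparison. The plan is to reprove it by the one-step interpolation technique of \cite{vempala2019rapid}, adapted to a moving target as in \cite{Chehab2024}. The key is to track $\KL(\mu_n||\pi_n)$ rather than $\KL(\mu_n||\pi)$, and to convert to $\pi$ only at the end.

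\textbf{Step 1: one-step contraction towards $\pi_n$.} On $[0,\gamma_n]$, write the update~\eqref{eq:tula_continuous} as the interpolated SDE $dX_s=\nabla\log\pi_n(X_0)\,ds+\sqrt2\,dB_s$ started from $\mu_{n-1}$. Differentiating $\KL(\mathrm{law}(X_s)||\pi_n)$ gives minus the relative Fisher information plus a cross term. Bound the cross term by Young's inequality together with $\E\|\nabla\log\pi_n(X_s)-\nabla\log\pi_n(X_0)\|^2\le L_n^2\,\E\|X_s-X_0\|^2$. Then use the log-Sobolev inequality for $\pi_n$ with constant $c_n^{-1}$, which follows from $c_n$-strong convexity. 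With $\gamma_n\le c_n/(4L_n^2)$, the standard computation yields
\begin{equation*}
\KL(\mu_n||\pi_n)\le e^{-\gamma_n c_n}\KL(\mu_{n-1}||\pi_n)+6\gamma_n^2 dL_n^2 .
\end{equation*}
This part is routine.

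\textbf{Step 2: change of reference measure.} Write $\KL(\mu_{n-1}||\pi_n)=\KL(\mu_{n-1}||\pi_{n-1})+\E_{\mu_{n-1}}[\log(\pi_{n-1}/\pi_n)]$. Since $\log(\pi_{n-1}/\pi_n)=(\lambda_n-\lambda_{n-1})(V_\pi-V_0)+\log(Z_n/Z_{n-1})$, this correction is $(\lambda_n-\lambda_{n-1})$ times an expectation of $V_\pi-V_0$ under $\mu_{n-1}$, plus a normalising-constant difference. The normalising-constant difference is itself bounded by $(\lambda_n-\lambda_{n-1})$ times a supremum over $\lambda$ of $|\E_{\pi_\lambda}[V_\pi-V_0]|$, using $\frac{d}{d\lambda}\log Z_\lambda=-\E_{\pi_\lambda}[V_\pi-V_0]$. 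Both quantities are controlled by second moments, through the quadratic growth of $V_\pi,V_0$ implied by the Lipschitz-gradient condition.

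\textbf{Step 3: uniform moment bound (main obstacle).} We need $\sup_n\E_{\mu_n}\|X\|^2\le C(1+d)$, with $C$ depending on $p_0$. I would prove it by expanding $\|X_n\|^2$ along~\eqref{eq:tula_continuous}. Using dissipativity of $V_n=(1-\lambda_n)V_0+\lambda_nV_\pi$ with constant at least $a_\pi\wedge a_0$, and the step condition $\gamma_n\le (a_\pi\wedge a_0)/(2(L_\pi+L_0)^2)$ to absorb the $\gamma_n^2\|\nabla V_n\|^2$ term, we get a recursion $m_n\le(1-\gamma_n a)m_{n-1}+\gamma_n(2b+2d)$. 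This is where $A'$ and the linear dependence on $d$ arise.

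\textbf{Step 4: unroll and convert to $\pi$.} Combining Steps 1--3 gives
\begin{equation*}
\KL(\mu_n||\pi_n)\le e^{-\gamma_nc_n}\KL(\mu_{n-1}||\pi_{n-1})+A'(\lambda_n-\lambda_{n-1})+6\gamma_n^2dL_n^2 .
\end{equation*}
Unrolling from $\KL(p_0||\pi_0)=\KL(p_0||\mu_0)$ produces the first two terms of the statement and the $\lambda$-increment sum with exponential weights. Finally, $\KL(\mu_n||\pi)=\KL(\mu_n||\pi_n)+\E_{\mu_n}[\log(\pi_n/\pi)]$, and the last term is bounded by $A'(1-\lambda_n)$ exactly as in Step 2. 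This gives the $A'(1-\lambda_k)$ term, with $k=n$ in the statement. The exponent constants, such as the factor $2$ in the weights, follow from the same bookkeeping and may need adjusting.
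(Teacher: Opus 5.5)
The paper does not prove this statement. It quotes it from \citet{Chehab2024} only to contrast it with Proposition~\ref{prop:tw_conv_discr}.

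Your route is the one that produces this shape of bound: track $\KL(\mu_n\|\pi_n)$, pay $A'(\lambda_n-\lambda_{n-1})$ at each change of reference measure, start from $\KL(p_0\|\pi_0)=\KL(p_0\|\mu_0)$, and convert to $\pi$ once at the end at cost $A'(1-\lambda_n)$. Steps 1--3 are sound, and Step 3 is exactly Lemma~\ref{lem:chehab_lemma16}. Two small additions to Step 2:
\begin{itemize}
\item You also need the uniform bound $\E_{\pi_\lambda}\|x\|^2\le (d+b_\pi\vee b_0)/(a_\pi\wedge a_0)$, which follows from dissipativity of $V_\lambda$.
\item If you write the correction as $\int_{\lambda_{n-1}}^{\lambda_n}(\E_{\mu_{n-1}}-\E_{\pi_\lambda})[V_\pi-V_0]\,d\lambda$, the constants $\inf V_\pi,\inf V_0$ cancel.
\end{itemize}
Your choice is the opposite of the paper's own proof of Proposition~\ref{prop:tw_conv_discr}. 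That proof converts $\KL(\mu_{n-1}\|\pi_n)$ back to $\KL(\mu_{n-1}\|\pi)$ at every step, so its per-step bias is $(1-\lambda_n)A'$ and its initial term is $\KL(\mu_0\|\pi)$. That recovers the untempered ULA bound at $\lambda\equiv 1$. Your recursion instead gives a bias driven by the increments, and an initial term that vanishes when $p_0=\mu_0$.

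The step that does not go through is your last sentence, where you say the factor $2$ follows from the same bookkeeping. The only contraction in your recursion is the one-step factor $e^{-\gamma_jc_j}$ of Lemma~\ref{lem:vempala}. Unrolling it gives weights $\exp(-\sum_{j=i}^n\gamma_jc_j)$, not $\exp(-2\sum_{j=i}^n\gamma_jc_j)$. Even that requires keeping the factor $e^{-\gamma_ic_i}$ in front of the $i$-th increment; your Step 4 display discards it, which makes the sum start at $j=i+1$. Since $e^{-2x}\le e^{-x}$, you prove a strictly weaker inequality than the one displayed.

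More bookkeeping will not recover the $2$. It would need a one-step KL contraction of $e^{-2\gamma_jc_j}$, which is the continuous-time Langevin rate. The interpolation argument behind Lemma~\ref{lem:vempala} gives at best $e^{-2(1-\eta)\gamma_jc_j}$ with error terms of order $1/\eta$, and the lemma as stated records $e^{-\gamma_jc_j}$.

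This same contraction is what produces the first term, which the statement itself writes with factor $1$. Together with the undefined index $k$, this suggests the $2$ was carried over from the continuous-time \citet[Theorem 1]{Chehab2024}. You should state explicitly that you prove the bound with $k=n$ and weights $\exp(-\sum_{j=i}^{n}\gamma_jc_j)$, rather than claim the displayed constant.
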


Also in discrete time case, our result differs from \citet[Theorem 3]{Chehab2024}. In Proposition~\ref{prop:tw_conv_discr} the first two terms correspond to exponential decay driven by $c_\pi$ with initial condition $ \KL(\mu_{0}||\pi)$ and a time discretisation bias which is the rate of standard ULA (e.g. \citet[Theorem 1]{vempala2019rapid}); the following terms are the bias which disappears if $\lambda_k \equiv1$.
This is still the case when $p_0=\mu_0$, contrary to \citet[Theorem 3]{Chehab2024}.

\subsubsection{Tempered Entropic Mirror Descent}

Consider the tempered FR PDE~\eqref{eq:tfr_flow}; instead of using the exact evolution~\eqref{eq:mut_tempered_fr} consider an explicit time discretisation with discretisation steps $(\gamma_n)_{n\ge 0}$
\begin{align}
\label{eq:fr_discr}
     \log(\mu_{n}) - \log(\mu_{n-1}) 
    &= 
    - \gamma_n\left(\log\left(\frac{\mu_{n-1}}{\pi_{n-1}}\right)-\mathbb{E}_{\mu_{n-1}} \left[ \log\left(\frac{\mu_{n-1}}{\pi_{n-1}}\right) \right]\right),
\end{align}
where $\pi_n \propto \pi^{\lambda_n}\mu_0^{1-\lambda_n}$
Observing that
$\mathbb{E}_{\mu_{n-1}} \left[ \log\left(\frac{\mu_{n-1}}{\pi_{n-1}}\right) \right]$ is just a normalising constant, we obtain the sequence of distributions
\begin{align}
\label{eq:tempered_md}
    \mu_{n}\propto\mu_{n-1}^{(1-\gamma_n)}\pi_{n-1}^{\gamma_n} = \mu_{n-1}^{(1-\gamma_n)}\pi^{\lambda_{n-1}\gamma_n} \mu_0^{(1-\lambda_{n-1})\gamma_n}.
\end{align}
Proceeding recursively the above becomes 
\begin{align*}
    \mu_n\propto \mu_0^{1-\sum_{k=1}^n \gamma_k\lambda_{k-1}\prod_{j=k+1}^{n} (1-\gamma_j)}\pi^{\sum_{k=1}^n \gamma_k\lambda_{k-1}\prod_{j=k+1}^{n} (1-\gamma_j)},
\end{align*}
showing that a time discretisation of the tempered FR PDE belongs to the family of geometric tempering~\eqref{eq:alpha_tempering} with $\alpha_n = \sum_{k=1}^n \gamma_k\lambda_{k-1}\prod_{j=k+1}^{n} (1-\gamma_j)$.

We point out that, as for the classical FR flow, this time discretisation leads to a discrete time sequence of distributions but not to a practical algorithm. In Section~\ref{sec:expe} we consider an importance based numerical approximation of the (tempered) FR flow.

Observing that~\eqref{eq:tempered_md}
corresponds to one step of entropic mirror descent with time step $\alpha_n = \sum_{k=1}^n \gamma_k\lambda_{k-1}\prod_{j=k+1}^{n} (1-\gamma_j)$ applied to the Kullback--Leibler divergence (see \cite{chopin2023connection}),
allows us to obtain the following result, which is a direct corollary of  \citet[Proposition 1]{chopin2023connection} with $\gamma_n$ replaced by $\alpha_n$ therein.
\begin{proposition}
\label{eq:rate_KL_upper_bound}
Let $\alpha_n = \sum_{k=1}^n \gamma_k\lambda_{k-1}\prod_{j=k+1}^{n} (1-\gamma_j)$. Then the time discretisation of the tempered FR flow~\eqref{eq:tempered_md} satisfies
\begin{equation*}
    \KL(\mu_n||\pi) \le (\alpha_1)^{-1} \prod_{k=1}^n(1-\alpha_k) \KL(\pi||\mu_0).
\end{equation*}
\end{proposition}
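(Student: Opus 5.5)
The plan is to reduce the statement to the untempered entropic mirror descent analysis of \citet[Proposition 1]{chopin2023connection}, which concerns iterates of the form $\nu_n\propto \nu_{n-1}^{1-\beta_n}\pi^{\beta_n}$. The argument has three steps: identify the tempered iterates~\eqref{eq:tempered_md} as such a sequence, prove a one-step inequality, and telescope it.

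\textbf{Identification.} By the recursion following~\eqref{eq:tempered_md}, $\mu_n=\rho_{\alpha_n}$ belongs to the geometric family~\eqref{eq:alpha_tempering}. Any two members of this family are related by a geometric step: $\rho_{\alpha_n}\propto \rho_{\alpha_{n-1}}^{1-\beta_n}\pi^{\beta_n}$ with $1-\alpha_n=(1-\beta_n)(1-\alpha_{n-1})$. This is the sense in which~\eqref{eq:tempered_md} is a mirror descent step on $\KL(\cdot||\pi)$. Following the paper's reading of \citet{chopin2023connection}, I take the step sizes to be the $\alpha_k$ themselves, so that the contraction factors $(1-\alpha_k)$ appear in the final product. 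This is the bookkeeping I would double-check first.

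\textbf{One-step inequality.} Let $\nu_n\propto\nu_{n-1}^{1-\beta}\pi^{\beta}$ with normaliser $Z=\int \nu_{n-1}^{1-\beta}\pi^{\beta}$; note $Z\le 1$ by H\"older. Write $\log\nu_n=(1-\beta)\log\nu_{n-1}+\beta\log\pi-\log Z$. Three identities follow:
\begin{align*}
\KL(\pi||\nu_n)&=(1-\beta)\KL(\pi||\nu_{n-1})+\log Z,\\
\KL(\nu_n||\pi)&=(1-\beta)\E_{\nu_n}[\log(\nu_{n-1}/\pi)]-\log Z,\\
\KL(\nu_n||\nu_{n-1})&=\beta\,\E_{\nu_n}[\log(\pi/\nu_{n-1})]-\log Z.
\end{align*}
Eliminating $\E_{\nu_n}[\log(\nu_{n-1}/\pi)]$ between the last two identities gives
\[
\log Z=-\beta\KL(\nu_n||\pi)-(1-\beta)\KL(\nu_n||\nu_{n-1})\le -\beta\KL(\nu_n||\pi).
\]
Substituting into the first identity yields the key inequality
\[
\KL(\pi||\nu_n)\le(1-\beta)\KL(\pi||\nu_{n-1})-\beta\KL(\nu_n||\pi).
\]
Separately, $\KL(\nu_n||\pi)$ is non-increasing along the iterates. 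This follows from the analogous comparison using $\KL(\nu_n||\pi)\le\KL(\nu_{n-1}||\pi)$, e.g.\ by convexity of $\alpha\mapsto \log Z_\alpha$ along the geometric path~\eqref{eq:alpha_tempering}.

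\textbf{Telescoping.} Drop the nonnegative term $\KL(\pi||\nu_n)$ and unroll the inequality with steps $\alpha_k$. This gives
\[
\alpha_n\KL(\mu_n||\pi)\le\prod_{k=1}^{n}(1-\alpha_k)\KL(\pi||\mu_0),
\]
up to the precise indexing of the product. To replace $\alpha_n^{-1}$ by $\alpha_1^{-1}$, I would combine the monotonicity of $\KL(\mu_k||\pi)$ with the accumulated negative terms $-\alpha_k\KL(\mu_k||\pi)$ for $k\le n$, exactly as in \citet{chopin2023connection}.

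\textbf{Main obstacle.} The delicate point is this final step together with the identification of the effective step sizes. Because $\alpha_n$ need not be monotone in $n$, the passage from $\alpha_n^{-1}$ to $\alpha_1^{-1}$ must be checked against the precise hypotheses of \citet[Proposition 1]{chopin2023connection}. If those hypotheses fail, I would fall back on stating the bound with $\alpha_n^{-1}$.
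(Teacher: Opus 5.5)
The step that fails is the one you flag as bookkeeping: the mirror-descent step sizes of the tempered recursion are not the $\alpha_k$. Your own identification says $\mu_k\propto\mu_{k-1}^{1-\beta_k}\pi^{\beta_k}$ with $1-\beta_k=(1-\alpha_k)/(1-\alpha_{k-1})$ and $\alpha_0=0$. Since $\alpha_k=(1-\gamma_k)\alpha_{k-1}+\gamma_k\lambda_{k-1}$, this means $\beta_k=\gamma_k(\lambda_{k-1}-\alpha_{k-1})/(1-\alpha_{k-1})\in[0,1]$ for $\gamma_k\in[0,1]$. In particular $\alpha_k$ is non-decreasing, so your worry about non-monotonicity is moot. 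Your one-step inequality is correct. Unrolled with the true steps, however, the contraction factors telescope to $\prod_{k=1}^n(1-\beta_k)=1-\alpha_n$, and what you actually obtain is
\[
\KL(\mu_n||\pi)\le\frac{1-\alpha_n}{\alpha_n}\KL(\pi||\mu_0)\le\frac{1-\alpha_n}{\alpha_1}\KL(\pi||\mu_0).
\]
The same bound follows from a single application of your inequality from $\mu_0$ with step $\alpha_n$, because $\mu_n=\rho_{\alpha_n}$. The factor $\prod_{k=1}^n(1-\alpha_k)$ belongs to a different sequence, $\nu_k\propto\nu_{k-1}^{1-\alpha_k}\pi^{\alpha_k}$. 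Its $n$-th iterate is $\rho_{1-\prod_k(1-\alpha_k)}$, which is not $\rho_{\alpha_n}$.

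No re-indexing can rescue the target, because the stated inequality, and your fallback with $\alpha_n^{-1}$, is false. Take $\mu_0=\mathcal{N}(0,1)$, $\pi=\mathcal{N}(m,1)$, $\gamma_k\equiv1/2$ and $\lambda_k\equiv1$. Then $\alpha_k=1-2^{-k}$ and $\rho_\alpha=\mathcal{N}(\alpha m,1)$, so $\KL(\mu_n||\pi)=2^{-2n}m^2/2$. The claimed bound is $2\cdot 2^{-n(n+1)/2}m^2/2$, which is already smaller at $n=4$: $2^{-9}$ versus $2^{-8}$, times $m^2/2$. Under the paper's convention $\lambda_0=0$ you instead get $\alpha_1=\gamma_1\lambda_0=0$, and the right-hand side is not finite.

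The paper's own proof is the same one-line substitution $\gamma_n\to\alpha_n$ in \citet[Proposition 1]{chopin2023connection} that you adopt, and it fails for the same reason. The paper itself observes that $\mu_n$ is \emph{one} mirror step of size $\alpha_n$ from $\mu_0$, not $n$ steps of sizes $\alpha_1,\dots,\alpha_n$. The provable statement is the display above. For $\lambda\equiv1$ it reduces to $\gamma_1^{-1}\prod_{k=1}^n(1-\gamma_k)\KL(\pi||\mu_0)$, the untempered rate. In general the decay is governed by $1-\alpha_n$, not by $\prod_k(1-\alpha_k)$.
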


It follows that, since $\alpha_n\leq 1$ for all $n\geq 1$, the tempered FR flow reduces KL at a geometric rate  $\prod_{k=1}^n(1-\alpha_k)\to 0$ as $n\to \infty$.

To compare the tempered FR dynamics with the standard FR dynamics we recall that the discretisation of the standard FR flow 
\begin{align*}
    \mu_n\propto \mu_0^{\prod_{k=1}^n(1-\gamma_k)}\pi^{1-\prod_{k=1}^n(1-\gamma_k)}
\end{align*}
belongs to the family of geometric tempering~\eqref{eq:alpha_tempering} with $\tilde{\alpha}_n = 1-\prod_{k=1}^n(1-\gamma_k)$. Using that 
 $$\tilde{\alpha}_n=1-\prod_{k=1}^n(1-\gamma_k)=\sum_{k=1}^n \gamma_k\prod_{j=k+1}^{n} (1-\gamma_j)\geq \sum_{k=1}^n \gamma_k\lambda_{k-1}\prod_{j=k+1}^{n} (1-\gamma_j)=\alpha_n,$$ 
and the fact that  if $\alpha_1>\alpha_2$ then (see Appendix~\ref{app:kl_tempering} for a proof) $\KL(\rho_{\alpha_1} || \pi)-\KL(\rho_{\alpha_2} || \pi)< 0$,
we conclude that the discretisation of the tempered FR PDE at iteration $n$ has larger KL divergence from $\pi$ than the discretisation of the standard FR PDE.

\section{Gradient flow structure of tempered PDEs}
\label{app:tempered_functional}

The tempered PDEs described above do not correspond to gradient flows of $\KL(\mu||\pi)$. To identify a gradient flow structure, we consider the functional $\cF(\mu, \lambda): \cP(\real^d)\times [0,1]\to \real$ defined as  $\cF(\mu, \lambda):= \KL(\mu|| \pi_\lambda)+\KL(\pi_\lambda||\pi)$ where $\pi_\lambda \propto \mu_0^{1-\lambda}\pi^\lambda$. The unique minimiser of $\cF(\mu, \lambda)$ is $(\mu, \lambda) = (\pi, 1)$ for which $\cF^\star = \cF(\pi, 1)=0$. Thus $\cF(\mu, \lambda)$ admits the same global minimiser of $\KL(\mu||\pi)$.

A gradient flow of $\cF(\mu, \lambda)$ optimises over both $\mu$ and $\lambda$, leading to a sequence $\lambda_t$ which is not fixed a priori but learnt as time progresses. 
Our aim is to understand if the sequence $(\lambda_t)_{t\geq 0}$ adaptively chosen using this gradient flow point of view can lead to dynamics which converge faster than those obtained with classical tempering strategies and to investigate whether it can beat the untempered dynamics.

\subsection{Tempered Fisher--Rao dynamics}

To interpret the tempered Fisher--Rao dynamics as a gradient flow, we consider the functional $\cF(\lambda, \mu)$ and the Fisher--Rao metric on $\cP(\real^d)$ coupled with the Euclidean metric on $[0, 1]$.

The Fisher--Rao gradient flow of $\cF$ is given by \cite{carrillo_fisher-rao_2024}
\begin{align}
\label{eq:fr_ode}
    \partial_t \mu_t(x) &= -\mu_t(x)\left(\frac{\delta \cF(\mu_t, \lambda)}{\delta\mu_t}(x)-\E_{\mu_t}\left[\frac{\delta \cF(\mu_t, \lambda)}{\delta\mu_t}\right]\right)= -\mu_t(x)\left(\log \frac{\mu_t(x)}{\pi_{\lambda_t}(x)}-\E_{\mu_t}\left[\log \frac{\mu_t(x)}{\pi_{\lambda_t}(x)}\right]\right)
\end{align}
$\frac{\delta \cF(\mu, \lambda)}{\delta\mu}$ denotes the first variation of $\cF$ at w.r.t. $\mu$ for fixed $\lambda$ (see Appendix~\ref{app:gf} for a derivation), which clearly coincides with the tempered FR flow. The gradient flow of $\lambda_t$ is given again by
\begin{align}
\label{eq:lambda_ode2}
    \dot{\lambda}_t&= \int_{\real^d}  \mu_{t}(x) \log \frac{\pi(x)}{\mu_0(x)}dx- \int_{\real^d}  \pi_{\lambda_t}(x) \log \frac{\pi(x)}{\mu_0(x)}dx+(1-\lambda_t) \textrm{Var}_{\pi_{\lambda_t}}\left(\log\frac{\mu_0}{\pi}\right).
\end{align}

The next result shows that $\cF$ has a unique minimiser at $(\pi, 1)$ and that this point is asymptotically stable, so that $(\mu_t, \lambda_t) \rightarrow (\pi, 1)$ as $t \rightarrow \infty$. The proof can be found in Appendix~\ref{app:stability}.

\begin{proposition}
\label{prop:stability}
    Let $\cF(\mu, \lambda):= \KL(\mu|| \pi_\lambda)+\KL(\pi_\lambda||\pi)$. Consider the gradient flow dynamics~\eqref{eq:fr_ode} and~\eqref{eq:lambda_ode2}. Then the unique minimiser $(\pi, 1)$ is asymptotically stable and thus $(\mu_t, \lambda_t) \rightarrow (\pi, 1)$ as $t \rightarrow \infty$.
\end{proposition}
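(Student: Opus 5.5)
We will prove the statement with a Lyapunov argument in which $\cF$ itself plays the role of the Lyapunov function.

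\textbf{Step 1: $\cF$ is nonnegative with a unique zero.} Both terms of $\cF(\mu,\lambda)=\KL(\mu||\pi_\lambda)+\KL(\pi_\lambda||\pi)$ are nonnegative. If $\cF=0$, then $\mu=\pi_\lambda$ and $\pi_\lambda=\pi$. Since $\pi_\lambda=\rho_\lambda$ in the notation of~\eqref{eq:alpha_tempering}, the map $\lambda\mapsto\KL(\rho_\lambda||\pi)$ is strictly decreasing as long as $\mu_0\neq\pi$; this is the fact proved in Appendix~\ref{app:kl_tempering} and used after Proposition~\ref{eq:rate_KL_upper_bound}. Hence $\lambda=1$ and $\mu=\pi$, so $(\pi,1)$ is the unique global minimiser and $\cF^\star=0$.

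\textbf{Step 2: $\cF$ decreases along the flow.} By the chain rule along~\eqref{eq:fr_ode}--\eqref{eq:lambda_ode2},
\begin{equation*}
\frac{d}{dt}\cF(\mu_t,\lambda_t)=\int \frac{\delta\cF}{\delta\mu}(\mu_t,\lambda_t)\,\partial_t\mu_t\,dx+\partial_\lambda\cF(\mu_t,\lambda_t)\,\dot\lambda_t .
\end{equation*}
The first term equals $-\mathrm{Var}_{\mu_t}\bigl(\log\frac{\mu_t}{\pi_{\lambda_t}}\bigr)$, because $\partial_t\mu_t$ has zero mean and the centred first variation appears squared. For the second term, \eqref{eq:lambda_ode2} is by construction $\dot\lambda_t=-\partial_\lambda\cF(\mu_t,\lambda_t)$. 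I will verify this by differentiating in $\lambda$, using $\partial_\lambda\log\pi_\lambda=\log\frac{\pi}{\mu_0}-\E_{\pi_\lambda}[\log\frac{\pi}{\mu_0}]$ and $\frac{d}{d\lambda}\KL(\pi_\lambda||\pi)=-(1-\lambda)\mathrm{Var}_{\pi_\lambda}(\log\frac{\mu_0}{\pi})$. Altogether,
\begin{equation*}
\frac{d}{dt}\cF=-\mathrm{Var}_{\mu_t}\Bigl(\log\tfrac{\mu_t}{\pi_{\lambda_t}}\Bigr)-|\dot\lambda_t|^2\le 0 .
\end{equation*}
One caveat is that $\lambda$ lives in $[0,1]$. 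I will argue that the flow keeps $\lambda_t\le 1$ (or clip, i.e.\ use a projected flow); near $\lambda=1$ the variance term in~\eqref{eq:lambda_ode2} vanishes, so no extra push past $1$ is generated.

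\textbf{Step 3: convergence via LaSalle.} The derivative vanishes exactly when $\mu_t=\pi_{\lambda_t}$ and $\dot\lambda_t=0$. Substituting $\mu=\pi_\lambda$ into~\eqref{eq:lambda_ode2}, the first two terms cancel and we get $\dot\lambda=(1-\lambda)\mathrm{Var}_{\pi_\lambda}(\log\frac{\mu_0}{\pi})$. This vanishes only at $\lambda=1$, since the variance is positive when $\mu_0\neq\pi$. So the largest invariant set in $\{\dot\cF=0\}$ is $\{(\pi,1)\}$, and LaSalle's invariance principle gives $(\mu_t,\lambda_t)\to(\pi,1)$. Together with Step 1 (strict Lyapunov function, isolated minimum), this gives asymptotic stability.

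\textbf{Main obstacle.} The hard part is justifying LaSalle in infinite dimensions, which requires precompactness of the trajectory. To get it, I would use the explicit structure of the FR part: for a given path $\lambda_\cdot$, the solution of~\eqref{eq:fr_ode} is~\eqref{eq:mut_tempered_fr}, so $\mu_t=\rho_{\beta_t}$ with $\beta_t=\int_0^te^{s-t}\lambda_s\,ds$. This reduces the whole flow to a two-dimensional ODE for $(\beta_t,\lambda_t)$ on $[0,1]^2$:
\begin{equation*}
\dot\beta=\lambda-\beta,\qquad \dot\lambda=\E_{\rho_\beta}\Bigl[\log\tfrac{\pi}{\mu_0}\Bigr]-\E_{\rho_\lambda}\Bigl[\log\tfrac{\pi}{\mu_0}\Bigr]+(1-\lambda)\mathrm{Var}_{\rho_\lambda}\Bigl(\log\tfrac{\mu_0}{\pi}\Bigr).
\end{equation*}
Under Assumption~\ref{ass:fr}, $Z(\alpha)$ is smooth and these moments are finite. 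In two dimensions compactness is automatic, and finite-dimensional LaSalle applies directly with $\cF(\rho_\beta,\lambda)$ as the Lyapunov function.
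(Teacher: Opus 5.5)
Your proposal is correct and is essentially the paper's proof. In both, the explicit tempered FR solution $\mu_t=\rho_{\beta_t}$ reduces the coupled flow to the planar system $\dot\beta=\lambda-\beta$, $\dot\lambda=-\partial_\lambda\cF$ on $[0,1]^2$; $\cF$ serves as a Lyapunov function with $\frac{d}{dt}\cF=-\mathrm{Var}_{\mu_t}(\log\frac{\mu_t}{\pi_{\lambda_t}})-(\dot\lambda_t)^2$; and LaSalle applies because this derivative vanishes only at $\beta=\lambda=1$.

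One small correction to your caveat: the invariance of $\{\lambda\le 1\}$ does not follow from the variance term vanishing at $\lambda=1$. It follows from $\beta_t\le 1$ together with the monotonicity of $\alpha\mapsto\E_{\rho_\alpha}[\log\frac{\pi}{\mu_0}]$, whose derivative is $\mathrm{Var}_{\rho_\alpha}(\log\frac{\pi}{\mu_0})\ge 0$. This gives $\dot\lambda=\E_{\rho_\beta}[\log\frac{\pi}{\mu_0}]-\E_{\pi}[\log\frac{\pi}{\mu_0}]\le 0$ at $\lambda=1$.
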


In the case of the tempered FR flow, Proposition~\ref{prop:fr_faster} guarantees that the fastest KL decay is obtained with $\lambda_t \equiv 1$. We thus check on an example if the optimal schedule identified in~\eqref{eq:lambda_ode2} quickly approaches the constant function $\lambda_t \equiv 1$. 

\begin{example}
    Let $\mu_0(x)=\mathcal{N}(x; 0,1)$ and $\pi(x) = \mathcal{N}(x; m,1)$. Then $\pi_{\lambda}(x)=\mathcal{N}(x;\lambda m,1)$, $\log  \frac{ \mu_0(x)}{\pi(x)} = mx-m^2/2$  and
    \begin{align*}
        \text{Var}_{\pi_{\lambda_t}} \left( \log  \frac{ \mu_0}{\pi}\right)= m^2.
    \end{align*}
    We also have that $\mu_t(x)=\mathcal{N}(x; m_t, \Sigma_t)$ with $m_t, \Sigma_t$ given in Appendix~\ref{sec:gaussian}.
    We thus have
    \begin{align*}
    \dot{\lambda}_t&= m\int_{\real^d}  [\pi_{\lambda_t}(x)-  \mu_{t}(x)] xdx+(1-\lambda_t)m^2= m^2(1-\int_0^te^{s-t} \lambda_s ds)
\end{align*}
as $m_t = m\int_0^te^{s-t} \lambda_s ds$.
The solution of this ODE with initial condition $\lambda_0 = 0$ is
\begin{align}
\label{eq:sol_ode_gaussian}
\lambda_t
=
1
-\frac{m^2+r_2}{r_2-r_1}e^{r_1 t}
-\frac{m^2+r_1}{r_1-r_2}e^{r_2 t},
\qquad
r_{1,2}=\frac{-1\pm\sqrt{1-4m^2}}{2}
\end{align}
which only has a solution in $\real$ if $m^2\leq 1/4$. Taking the limit  $t \rightarrow \infty$ we find that $\lambda_t \rightarrow 1$. Figure~\ref{fig:rates} shows the evolution of $\lambda_t$ for $m=1/6$. 
As $r_2<r_1<0$, $\lambda_t \sim 1-\frac{m^2+r_2}{r_2-r_1}e^{r_1 t}$ as $t\to \infty$ and thus the rate of convergence is given by $e^{r_1 t}$. This rate of convergence is the fastest when $m^2\approx 1/4$ for which 
\begin{align*}
\lambda_t
=
1
-\left(1-\frac{t}{2}\right)e^{-t/2},
\end{align*}
showing that the gradient flow of $\cF(\lambda, \mu)$ w.r.t. $\lambda$ approaches 1 at most at rate $te^{-t/2}$.
\end{example}

\subsection{Tempered Wasserstein dynamics}

To define a gradient flow of $\cF$ we need to consider a metric on $\cP(\real^d)\times [0,1]$. 
To derive the tempered Wasserstein dynamics~\eqref{eq:tw_flow} from a gradient flow point of view, we consider the Wasserstein metric on $\cP(\real^d)$  
coupled with the Euclidean metric on $[0, 1]$. The metric structure of this product space is investigated by \citet[Appendix A]{kuntz2023particle}, which guarantees that $\cP(\real^d)\times [0,1]$ is a metric space.

We follow \cite{kuntz2023particle} and consider $\nabla \cF(\mu, \lambda) = (\nabla_{\mu}\cF(\mu, \lambda), \partial_\lambda \cF(\mu, \lambda))$, where $\nabla_{\mu}$ denotes the gradient w.r.t. $\mu$ in Wasserstein distance and $\partial_\lambda$ the usual derivative in Euclidean space.
Using \citet[Lemma 1 and Eq. (18)]{kuntz2023particle}, we find
\begin{align*}
    \nabla_{\mu}\cF(\mu, \lambda) &= \nabla\cdot\left(\mu\nabla\log\frac{\pi_\lambda}{\mu}\right),\\
    \partial_\lambda \cF(\mu, \lambda) &= -\int_{\real^d}  \mu(x)\partial_\lambda\log \pi_\lambda(x)dx+\partial_\lambda \KL(\pi_\lambda||\pi).
\end{align*}
Observing that (see Appendix~\ref{app:kl_tempering} for a proof)
\begin{align*}
    \partial_\lambda \log \pi_\lambda(x) &= \log \frac{\pi(x)}{\mu_0(x)}- \int_{\real^d}  \pi_\lambda(x) \log \frac{\pi(x)}{\mu_0(x)}dx\\
    \partial_\lambda \KL(\pi_\lambda||\pi)&=-(1-\lambda) \textrm{Var}_{\pi_\lambda}\left(\log\frac{\mu_0}{\pi}\right),
\end{align*}
we obtain the gradient flow dynamics
\begin{equation}
           \label{eq:lambda_ode} 
    \begin{aligned}
        \partial_t \mu_t(x) &= -\nabla\cdot\left(\mu_t(x)\nabla\log\frac{\pi_{\lambda_t}(x)}{\mu_t(x)}\right)\\
    \dot{\lambda}_t 
    &= \int_{\real^d}  \mu_{t}(x) \log \frac{\pi(x)}{\mu_0(x)}dx- \int_{\real^d}  \pi_{\lambda_t}(x) \log \frac{\pi(x)}{\mu_0(x)}dx+(1-\lambda_t) \textrm{Var}_{\pi_{\lambda_t}}\left(\log\frac{\mu_0}{\pi}\right)
    \end{aligned}
\end{equation}
It is now easy to see that the evolution of $\mu_t$ coincides with the tempered W PDE~\eqref{eq:tw_flow} provided that the appropriate $\lambda_t$ is chosen.

The decay of $\cF$ along these dynamics is given by
\begin{align*}
    \frac{d}{dt}\cF(\lambda_t, \mu_t) &= \frac{d}{dt} \KL(\mu_t||\pi_{\lambda_t}) + \KL(\pi_{\lambda_t}||\pi)\\
    &=\int_{\real^d}\log \frac{\mu_t(x)}{\pi_{\lambda_t}(x)}\partial_t\mu_t(x)dx-\int_{\real^d} \mu_t(x)\partial_t\log \pi_{\lambda_t}(x)dx+\frac{d}{d\lambda_t} \KL(\pi_{\lambda_t}||\pi) \dot{\lambda}_t.
\end{align*}
Using the derivative of KL w.r.t. $\lambda$ derived in Appendix~\ref{app:kl_tempering}, the fact that
\begin{align*}
    \partial_\lambda \log \pi_\lambda(x) &= \log \frac{\pi(x)}{\mu_0(x)}- \int_{\real^d}  \pi_\lambda(x) \log \frac{\pi(x)}{\mu_0(x)}dx,
\end{align*}
and integration by parts, we obtain
\begin{align}
\label{eq:dfode_tempw}
    \frac{d}{dt}\cF(\lambda_t, \mu_t) &= -\int_{\real^d}  \| \nabla \log\frac{\mu_t(x)}{\pi_{\lambda_t}(x)}\|^2\mu_t(x)dx-\dot{\lambda}_t\int_{\real^d} [\mu_t(x)-\pi_{\lambda_t}(x)]\log\frac{\pi(x)}{\mu_0(x)}dx-(1-\lambda_t)\text{Var}_{\pi_{\lambda_t}} \left[ \log  \frac{ \mu_0}{\pi}\right] \dot{\lambda}_t\\
    &=-\int_{\real^d}  \| \nabla \log\frac{\mu_t(x)}{\pi_{\lambda_t}(x)}\|^2\mu_t(x)dx-(\dot{\lambda}_t)^2,
\end{align}
confirming that the rate of decay of $\cF$ is driven both by the tempered W PDE and the optimally chosen $\lambda_t$.  With this, we obtain an equivalent result to Proposition~\ref{prop:stability} in the tempered Wasserstein setting; the proof can be found in Appendix~\ref{app:stabilityW}. 
\begin{proposition}
\label{prop:stabilityW}
    Let $\cF(\mu, \lambda):= \KL(\mu|| \pi_\lambda)+\KL(\pi_\lambda||\pi)$. Consider the gradient flow dynamics~\eqref{eq:lambda_ode}. Then the unique minimiser $(\pi, 1)$ is asymptotically stable and thus $(\mu_t, \lambda_t) \rightarrow (\pi, 1)$ as $t \rightarrow \infty$.
\end{proposition}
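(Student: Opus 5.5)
The plan is a Lyapunov argument with $\cF$ itself as the Lyapunov function, mirroring what one would do for Proposition~\ref{prop:stability}. First I will record that $\cF\geq 0$ and $\cF(\mu,\lambda)=0$ only at $(\pi,1)$. Indeed, $\KL(\pi_\lambda\|\pi)=0$ forces $\pi_\lambda=\pi$, i.e. $\mu_0^{1-\lambda}\pi^{\lambda}\propto\pi$. Unless $\mu_0=\pi$ (a trivial case), this forces $\lambda=1$. Then $\KL(\mu\|\pi_1)=0$ gives $\mu=\pi$.

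Second, I will use the dissipation identity~\eqref{eq:dfode_tempw}:
\[
\frac{d}{dt}\cF(\mu_t,\lambda_t)=-\int_{\real^d}\Big\|\nabla\log\frac{\mu_t}{\pi_{\lambda_t}}\Big\|^2\mu_t\,dx-(\dot\lambda_t)^2\le 0 .
\]
This identity comes from the fact that the $\lambda$-equation in~\eqref{eq:lambda_ode} is exactly $\dot\lambda_t=-\partial_\lambda\cF$, which collapses the last two terms into $-(\dot\lambda_t)^2$. So $\cF$ is nonincreasing, which gives Lyapunov stability of $(\pi,1)$ with respect to $\cF$ (equivalently, via Pinsker, in total variation for $\mu$ and in $|\lambda-1|$).

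Third, for asymptotic stability I will show that the dissipation vanishes only at the minimiser, in a LaSalle-type fashion. Suppose the right-hand side is zero. Then $\nabla\log(\mu/\pi_\lambda)=0$ $\mu$-a.e., so $\mu=\pi_\lambda$, and $\partial_\lambda\cF(\mu,\lambda)=0$. With $\mu=\pi_\lambda$, the first two terms of $\dot\lambda$ cancel, and we are left with $(1-\lambda)\mathrm{Var}_{\pi_\lambda}(\log\mu_0/\pi)=0$. This forces $\lambda=1$ whenever $\mu_0\neq\pi$, and hence $(\mu,\lambda)=(\pi,1)$.

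To turn this into convergence rather than just a LaSalle heuristic on an infinite-dimensional space, I would use a quantitative bound. By the log-Sobolev inequality for $\pi_\lambda$, available from Assumption~\ref{ass:ula} with constant $c_\lambda\ge\min(c_0,c_\pi)$, the first dissipation term dominates $2\min(c_0,c_\pi)\KL(\mu_t\|\pi_{\lambda_t})$. It remains to control $\KL(\pi_{\lambda}\|\pi)$ by $(\dot\lambda)^2$ plus a multiple of $\KL(\mu\|\pi_\lambda)$. Write $\dot\lambda=(1-\lambda)\mathrm{Var}_{\pi_\lambda}(\log\mu_0/\pi)+\E_\mu[f]-\E_{\pi_\lambda}[f]$ with $f=\log\pi/\mu_0$. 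The mean difference is bounded by a transport/Pinsker-type estimate, using that $f$ has quadratic growth and moments are controlled by dissipativity, giving a bound of the form $C\sqrt{\KL(\mu\|\pi_\lambda)}$. In addition, $\KL(\pi_\lambda\|\pi)=\int_\lambda^1(1-u)\mathrm{Var}_{\rho_u}(\log\mu_0/\pi)\,du$ (as in Appendix~\ref{app:kl_tempering}) is comparable to $(1-\lambda)^2$. This yields $\frac{d}{dt}\cF\le -\kappa\,\cF$ near the minimiser, or at least a strictly negative dissipation away from it on sublevel sets of $\cF$. Either gives $\cF(\mu_t,\lambda_t)\to0$, and then $(\mu_t,\lambda_t)\to(\pi,1)$ by Pinsker.

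The main obstacle is this last step: making the LaSalle/coercivity argument rigorous. It requires uniform-in-$\lambda$ lower and upper bounds on $\mathrm{Var}_{\pi_\lambda}(\log\mu_0/\pi)$ and moment control of $\mu_t$ along the flow. It also requires that $\lambda_t$ stays in $[0,1]$; near $\lambda=1$, $\dot\lambda\to\E_\mu f-\E_\pi f$, which can have either sign, so one may need to restrict to a neighbourhood or project. If a global rate proves hard, I would settle for local asymptotic stability by linearising around $(\pi,1)$ and combining with monotonicity of $\cF$.
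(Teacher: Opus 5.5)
Your first three steps coincide with the paper's: nonnegativity and the unique zero of $\cF$, the dissipation identity~\eqref{eq:dfode_tempw} via $\dot\lambda_t=-\partial_\lambda\cF$, and the fact that the dissipation vanishes only at $(\pi,1)$. The difference is how you close the argument. The paper stays qualitative and runs LaSalle. It supplies the infinite-dimensional ingredient through relative compactness of $\{\mu_t\}$, using the uniform second-moment bound of Lemma~\ref{lem:chehab_lemma15} and \citet[Proposition 7.1.5]{ambrosio2008gradient}.

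Your coercivity route does close, and it closes globally for $\lambda\in[0,1]$, not only near the minimiser. Set $I=\int\|\nabla\log(\mu/\pi_\lambda)\|^2d\mu$, $c=\min(c_0,c_\pi)$, $D=\E_\mu f-\E_{\pi_\lambda}f$, and $V(u)=\mathrm{Var}_{\rho_u}(\log\mu_0/\pi)$. By continuity in $u$ and $\mu_0\neq\pi$, $0<\underline{V}\le V\le\bar{V}$ on $[0,1]$. Then
\[
\KL(\pi_\lambda\|\pi)\le\tfrac{\bar{V}}{2}(1-\lambda)^2\le\tfrac{\bar{V}}{\underline{V}^{2}}\bigl(\dot\lambda^2+D^2\bigr),\qquad D^2\le C\,W_2^2(\mu,\pi_\lambda)\le\tfrac{2C}{c}\KL(\mu\|\pi_\lambda)\le\tfrac{C}{c^2}\,I .
\]
The second inequality uses $(1-\lambda)V(\lambda)=\dot\lambda-D$. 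The bound on $D$ uses $\|\nabla f(x)\|\le a+b\|x\|$, uniform second moments of $\mu_t$ and $\pi_{\lambda_t}$, and Talagrand's inequality for the $c$-strongly log-concave $\pi_\lambda$. Combined with $\KL(\mu\|\pi_\lambda)\le I/(2c)$, this gives $\cF\le K(I+\dot\lambda^2)=-K\frac{d}{dt}\cF$. Hence $\cF(\mu_t,\lambda_t)\le e^{-t/K}\cF(\mu_0,\lambda_0)$.

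This is an explicit exponential rate, obtained without any compactness or invariance principle. The cost is the two-sided variance bounds and a transport inequality, which the paper's argument does not need.

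So commit to this inequality. Your fallback of ``strictly negative dissipation away from the minimiser on sublevel sets'' would not by itself give $\cF\to0$ in infinite dimensions; that is exactly what the paper's compactness step is for. Linearisation is also unnecessary.

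Your worry about $\lambda_t\in[0,1]$ is legitimate, but the paper shares it: it simply asserts $\lambda_t\in(0,1]$. Monotonicity of $\cF$ already excludes $\lambda_t<0$ when you start from $(\mu_0,0)$. This is because $\cF(\mu_0,0)=\KL(\pi_0\|\pi)$, $\KL(\pi_{\lambda_t}\|\pi)\le\cF(\mu_t,\lambda_t)$, and $\lambda\mapsto\KL(\pi_\lambda\|\pi)$ is strictly decreasing for $\lambda\le1$. The upper constraint $\lambda_t\le1$ remains unverified in both arguments.
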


In Section~\ref{sec:ode_expe} we empirically compare the tempered W dynamics given by~\eqref{eq:lambda_ode} with other popular adaptive choices of $\lambda_t$.

\subsection{Comparison with Popular Tempering Schedules}

We now compare popular tempering schedules in the sequential Monte Carlo (SMC) and  Annealed Importance Sampling (AIS) literature with~\eqref{eq:lambda_ode2}.
To connect with popular adaptive schedules in the literature, 
let us consider the KL decay along the sequence $(\pi_{\lambda_t})_{t\geq 0}$
given by
\begin{align*}
\frac{d}{dt} \KL(\pi_{\lambda_t}||\pi) 
    =-(1-\lambda_t)\text{Var}_{\pi_{\lambda_t}} \left[ \log  \frac{ \mu_0}{\pi}\right] \dot{\lambda}_t.
\end{align*}
A natural choice to adaptively select $\lambda_t$ is to require a constant KL decay, i.e. $\frac{d}{dt} \KL(\mu_t||\pi) =-\beta$ for $\beta>0$ \citep{goshtasbpour2023adaptive}.
Solving the ODE
\begin{align*}
 \dot{\lambda}_t=\beta   \left((1-\lambda_t)\text{Var}_{\pi_{\lambda_t}} \left[ \log  \frac{ \mu_0}{\pi}\right] \right)^{-1}
\end{align*}
leads to the adaptive tempering schedule proposed in \citet{goshtasbpour2023adaptive}.

Consider now the exact solution of the standard FR gradient flow~\eqref{eq:mut_fr}, which is obtained from $\pi_{\lambda_t}$ if $\lambda_t = 1-e^{-t}$. In this case $\frac{d\lambda_t}{dt}=1-\lambda_t$ and
\begin{align}
\label{eq:gosh}
    \frac{d}{dt} \KL(\pi_{\lambda_t}||\pi) =-(\dot{\lambda}_t)^2\text{Var}_{\pi_{\lambda_t}} \left( \log  \frac{ \mu_0}{\pi}\right)=\beta.
\end{align}
The resulting ODE
\begin{align}
\label{eq:ess}
    \dot{\lambda}_t = \sqrt{\beta}\text{Var}_{\pi_{\lambda_{t}}} \left( \log  \frac{ \mu_0}{\pi}\right)^{-1/2}
\end{align}
gives an adaptive strategy which corresponds to the classical strategy to adaptively set $\lambda_t$ in SMC and AIS by controlling the $\chi^2$ divergence between successive iterates \citep{chopin2023connection}.

To compare the three adaptive schedules~\eqref{eq:gosh}, \eqref{eq:ess} and~\eqref{eq:lambda_ode2} we consider two examples.

\begin{example}
    Let $\mu_0(x)=\mathcal{N}(x; 0,1)$ and $\pi(x) = \mathcal{N}(x; m,1)$. 
    In this case, the evolution along~\eqref{eq:lambda_ode2} with initial condition $\lambda_0 = 0$ is given by~\eqref{eq:sol_ode_gaussian}, while the solutions of~\eqref{eq:gosh}, \eqref{eq:ess} are given by
    \begin{align*}
    \lambda_t&=1-\sqrt{1-\frac{2t}{m^2}}, \qquad t \le \frac{m^2}{2} &\textrm{for } \eqref{eq:gosh}\\
        \lambda_t &= mt &\textrm{for } \eqref{eq:ess}.
    \end{align*}
    Figure~\ref{fig:rates} (left) compares the evolutions above for $m=1/6$.
\end{example}

\begin{example}
    Let $\mu_0(x)=\mathcal{N}(x; 0,1)$ and $\pi(x) = \mathcal{N}(x; 0,\tau^{2})$. 
   Then $\pi_{\lambda}(x)=\mathcal{N}\left(x; 0,\left(1-\lambda+\lambda/\tau^{2}\right)^{-1}\right)$ and 
    \begin{align*}
        \log \frac{\mu_0(x)}{\pi(x)}
&=
\log \tau + \frac{x^2}{2}\left(\frac{1}{\tau^2}-1\right)\\
        \text{Var}_{\pi_{\lambda_t}} \left( \log  \frac{ \mu_0}{\pi}\right) &=\frac{1}{2}\left(\frac{1}{\tau^2}-1\right)^2 \left(1-\lambda+\lambda/\tau^{2}\right)^{-2}.
    \end{align*}
    The evolution with initial condition $\lambda_0 = 0$ can only be solved analytically for \eqref{eq:ess} and is given by $\lambda_t = 1-\exp([1/\tau^{2}-1] t)$ if $\tau>1$ and by $\lambda_t =\exp([1/\tau^{2}-1] t)-1$ if $\tau<1$.
    In this case, \eqref{eq:lambda_ode2} reduces to 
    \begin{align*}
        \dot{\lambda}_t
        &= \frac{1}{2}\left(\frac{1}{\tau^2} - 1\right)\left( \frac{1}{(1-\lambda) + \lambda / \tau^2}-\frac{1}{1-\int_0^te^{s-t} \lambda_s ds+\int_0^te^{s-t} \lambda_s ds/\tau^{2}}\right)\\
        &+(1-\lambda_t) \frac{1}{2}\left(\frac{1}{\tau^2}-1\right)^2 \left(1-\lambda+\lambda/\tau^{2}\right)^{-2}
    \end{align*}
    as $\mu_{t}(x)=\mathcal{N}(x; 0, \left(1-\int_0^te^{s-t} \lambda_s ds+\int_0^te^{s-t} \lambda_s ds/\tau^{2}\right)^{-1})$.
    Figure~\ref{fig:rates} (middle and right) compares the evolutions above where~\eqref{eq:gosh} and~\eqref{eq:lambda_ode2} are solved numerically.
\end{example}

\begin{figure}
    \centering
\begin{tikzpicture}[every node/.append style={font=\normalsize}]
\node (img1) {\includegraphics[width = 0.31\textwidth]{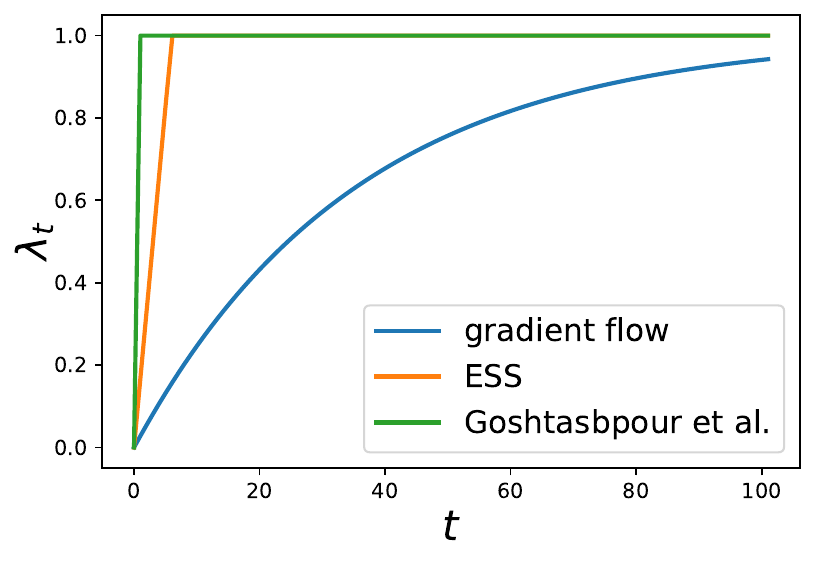}};
\node[right=of img1, node distance = 0, xshift = -1cm] (img2) {\includegraphics[width = 0.31\textwidth]{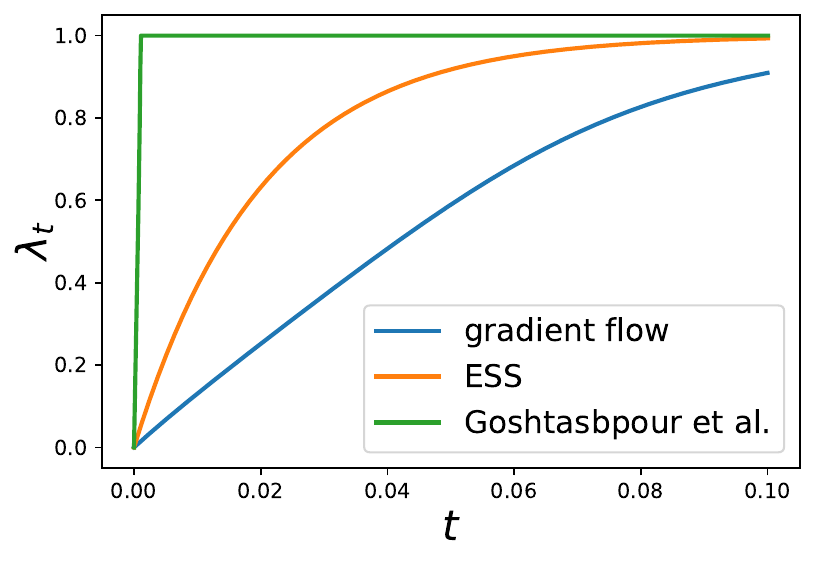}};
\node[right=of img2, node distance = 0, xshift = -1cm] (img3) {\includegraphics[width = 0.31\textwidth]{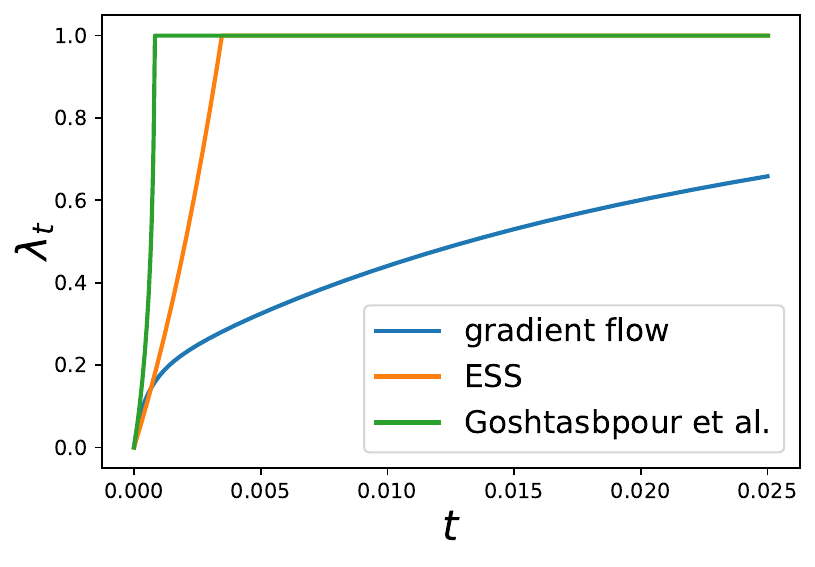}};
\end{tikzpicture}
    \caption{Comparison of tempering sequences for tempered dynamics with $\mu_0(x)=\mathcal{N}(x; 0,1)$ and $\pi(x) = \mathcal{N}(x; m,1)$ with $m=1/6$ (left), $\pi(x) = \mathcal{N}(x; 0,\tau^{2})$ with $\tau^2 = 2$ (middle) and $\pi(x) = \mathcal{N}(x; 0,\tau^{2})$ with $\tau^2 = 0.5$ (right).}
    \label{fig:rates}
\end{figure}

In all cases, the schedule corresponding to the gradient flow~\eqref{eq:lambda_ode2} is slower than classical schedules found in the literature. We believe this is due to the fact that to identify a gradient flow structure for the tempered PDEs we consider the functional $\cF(\mu, \lambda):= \KL(\mu|| \pi_\lambda)+\KL(\pi_\lambda||\pi)$, while commonly used approaches can be linked to the time derivative of $\KL(\mu_t||\pi)$ when $\mu_t$ follows the FR gradient flow.

We observe that, in practice, ~\eqref{eq:gosh}, \eqref{eq:ess} and~\eqref{eq:lambda_ode2} are intractable, as they require computing an expectation against $\pi_{\lambda_t}\propto \pi^{\lambda_t}\mu_0^{1-\lambda_t}$ which is known up to a normalising constant. 
In the AIS and SMC literature, it is usually assumed that the samples that track the evolution of $\pi_{\lambda_t}$ provide a good approximation and thus use $\mu_t$ as an approximation of $\pi_{\lambda_t}$.
As $\mu_t$ closely tracks $\pi_{\lambda_t}$ by construction in our case, this is reasonable to assume $\mu_t\approx \pi_{\lambda_t}$ and obtain
\begin{align*}
    \dot{\lambda}_t = (1-\lambda_t)\text{Var}_{\pi_{\lambda_{t}}} \left( \log  \frac{ \mu_0}{\pi}\right)\approx (1-\lambda_t)\text{Var}_{\mu_t} \left( \log  \frac{ \mu_0}{\pi}\right),
\end{align*}
providing a computable form for the steepest descent ODE for $\lambda_t$. We test this in Section~\ref{sec:ode_expe}.
\section{Numerical Experiments}
\label{sec:expe}

We now present some numerical experiments to support our results in the previous sections. 
Code to reproduce our experiments will be made available.
Code to reproduce our experiments is available at \url{https://github.com/FrancescaCrucinio/SMC-WFR}.

\subsection{Algorithms}
We briefly describe here the numerical approximations of the tempered W PDE and the tempered WFR PDE that we use in the following sections.

For the tempered W PDE we use the Tempered ULA dynamics~\eqref{eq:tula_continuous}. For the tempered WFR PDE we adapt the sequential Monte Carlo algorithm of \cite{us} to the use of tempered targets.

To obtain a time discretisation of the tempered WFR PDE we consider the solution operator of the tempered W flow and the tempered FR flow separately. Assume that for certain discrete time $n-1$ the solution of the tempered WFR PDE is given by $\mu_{n-1}$.
Given $\mu_{n-1}$, a discretisation of the tempered W flow for a time step $\gamma_n$ leads to
\begin{align*}
     \mu_{n-1/2}(x) & \propto \int_{\real^d}  \exp \left(-\frac{1}{4\gamma_n} \|x - y - \gamma_n  \nabla \log \pi_n(y) \|^2   \right) \mu_{n-1}(y) dy,
\end{align*}
which can be implemented using the tempered ULA~\eqref{eq:tula_continuous}.
Using the exact solution of the tempered FR PDE~\eqref{eq:mut_tempered_fr} we have
\begin{align}
\label{eq:tfr_exact}
\mu_{n}(x) \propto \mu_{n-1/2}(x)^{e^{-\gamma}+\int_0^\gamma e^{s-\gamma} (1-\lambda_s) ds}   \pi(x)^{\int_0^\gamma e^{s-\gamma} \lambda_s ds} = \left(\frac{\pi(x)}{\mu_{n-1/2}(x)}\right)^{\int_0^\gamma e^{s-\gamma} \lambda_s ds}\mu_{n-1/2}(x),
\end{align}
where we used $e^{-t}+\int_0^te^{s-t} (1-\lambda_s) ds = 1-\int_0^te^{s-t} \lambda_s ds$.
Given $\mu_{n-1/2}$, we can obtain $\mu_n$ by importance sampling with weights $\left(\pi(x)/\mu_{n-1/2}(x)\right)^{\int_0^\gamma e^{s-\gamma} \lambda_s ds}$.

Alternating between one step of tempered W flow approximated via~\eqref{eq:tula_continuous} and one step of tempered FR flow approximated via importance sampling we obtain an approximation of tempered WFR PDE through sampling-based procedures. A resampling step to avoid the weight degeneracy normally introduced by repeated importance sampling steps can also be added (Algorithm~\ref{alg:smc_wfr}).

\begin{algorithm}
\begin{algorithmic}[1]
\State \textit{Inputs:} learning rates $(\gamma_n)_{n\ge0}$, initial proposal $\mu_0$.
\State \textit{Initialize:} sample $\widetilde{X}_0^i\sim \mu_0$ and set $W_0^i=1/N$ for $i=1,\dots, N$.
\For{$n=1,\dots, T$}
\If{$n>1$}
\State \textit{Resample:} draw $ \{\widetilde{X}_{n-1}^i\}_{i=1}^N$ independently from $\{X_{n-1}^i, W_{n-1}^i\}_{i=1}^N$ and set $W_n^i =1/N$ for $i=1,\dots, N$.
\EndIf
\State \textit{W flow:} update $X_{n}^i = \bar{X}_{n-1}^i+\sqrt{2\gamma_n}\xi_{n-1/2}^i$, where $\bar{X}_{n-1}^i=\widetilde{X}_{n-1}^i +\gamma_n\nabla\log \pi(\widetilde{X}_{n-1}^i)$, $\xi_{n-1/2}^i\sim \mathcal{N}(0, \textsf{Id})$ for $i=1,\dots, N$.
\State \textit{FR flow:} compute and normalize the weights $W_n^i \propto w_{n}(X_{n}^i)$ for $i=1,\dots, N$, where
\begin{align}
\label{eq:w_twfr}
   w_{n}(x) &= \left( \frac{\pi(x)}{N^{-1}\sum_{i=1}\mathcal{N}(x; X_{n-1}^i+\gamma_n\nabla\log\pi_n(X_{n-1}^i), 2\gamma_n\cdot\textsf{Id})}\right)^{\int_0^{\gamma_n} e^{s-\gamma_n} \lambda_s ds}.
\end{align} 
\EndFor
\State \textit{Output:} $\{ X_n^i, W_n^i\}_{i=1}^N$.
\end{algorithmic}
\caption{SMC-T-WFR.}\label{alg:smc_wfr}
\end{algorithm}

Despite using the tempered unadjusted Langevin algorithm as proposal, Algorithm~\ref{alg:smc_wfr} does not coincide with the standard tempering SMC sampler \cite[Chapter 17]{chopin2020introduction}.
To draw a connection between the two, consider the time discretisation of the tempered FR PDE in~\eqref{eq:tempered_md}
with $\gamma_n\equiv 1$, $\mu_{n}\propto \pi^{\lambda_{n}} \mu_0^{(1-\lambda_{n})}$. 
Using this sequence of distributions with the tempered unadjusted Langevin proposal~\eqref{eq:tula_continuous} coincides with the commonly used tempering SMC and results in weights $w_n(x_n) = \left(\pi(x_n)/\mu_0(x_n)\right)^{\lambda_n-\lambda_{n-1}}$ for sufficiently small $\gamma$ \citep{dai2022invitation}.

\subsection{Tempered and Standard PDEs }

\begin{figure}
	\centering
	\begin{tikzpicture}[every node/.append style={font=\normalsize}]
 \node (img1) {\includegraphics[width = 0.6\textwidth]{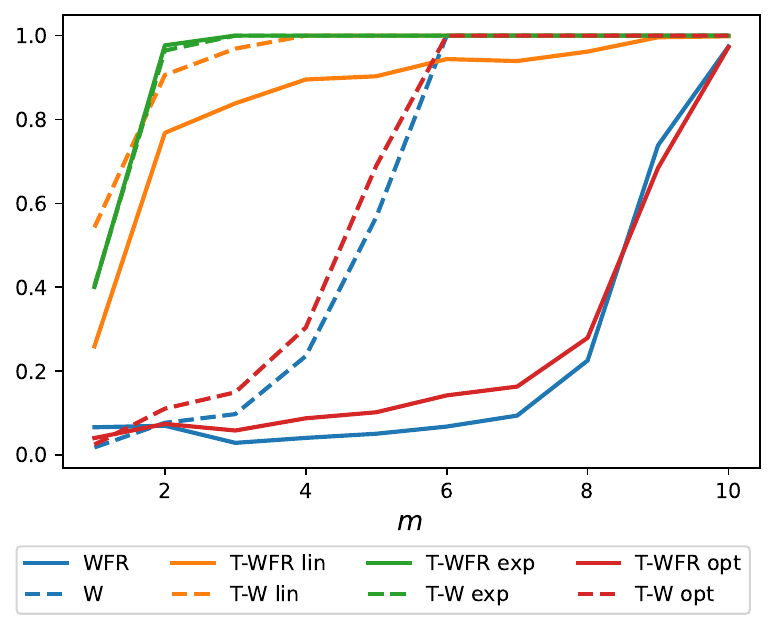}};
   \node[left=of img1, node distance = 0, rotate = 90, anchor = center, yshift = -0.8cm] {\% of iterations needed for MMD$<0.01$};
	\end{tikzpicture}
	\caption{Convergence of approximations to $\pi$ given by tempered and standard PDE flow with target the Gaussian mixture~\eqref{eq:gaussian_mix_m}. Convergence is measured through the MMD with standard Gaussian kernel and is averaged over 50 replications. We plot the percent of iterations required to achieve MMD$<0.01$ for increasing $m$. 
   We include standard W \& WFR and tempered flows  with  $\lambda_s = s/t$,  $\lambda_s = 1-e^{-\alpha s}$, $\lambda_s = 1-1/(2+s)$.}
	\label{fig:tempering}
\end{figure}

Proposition~\ref{prop:tw_conv} shows that the tempered Wasserstein dynamics (T-W) do not improve on the standard Wasserstein gradient flows (W) in the case of strongly log-concave target $\pi$. A similar result holds for the WFR flow combining the rates in Proposition~\ref{prop:tw_conv} with those in Proposition~\ref{prop:tfr_conv} as shown in~\eqref{eq:twfr_conv}. 
For bi-modal targets, which satisfy a log-Sobolev assumption but are not log-concave, \cite[Theorem 8]{Chehab2024} shows that the T-W flow has exponentially slower convergence than the standard W flow.

We empirically show that the same is true for the tempered WFR flow. We consider the same setup of \cite{Chehab2024}:
\begin{align}
\label{eq:gaussian_mix_m}
\mu_0(x)=\mathcal{N}(x;0, 1),\qquad\qquad
\pi(x) =& \frac{1}{2}\mathcal{N}(x;0, 1) + \frac{1}{2}\mathcal{N}(x;m, 1).
\end{align}
This target satisfies a log-Sobolev inequality \citep{schlichting2019poincare} with $C_\textrm{LSI} =  (1+(e^{m^2}+1)/2)$.

For the tempered PDEs we consider the linear rate $\lambda_s = s/t$, the exponential rate $\lambda_s = 1-e^{-\alpha s}$ for $\alpha = 0.01$ and the optimal rate of \cite{Chehab2024}, $\lambda_s = 1-1/(2+s)$ (which follows from setting $\mu_0(x) = \mathcal{N}(x;0, 1)$).
We check convergence computing the MMD with standard Gaussian kernel.

Figure~\ref{fig:tempering} shows the number of iteration needed to achieve MMD$<0.01$. For the W flows we run $N$ parallel chains using the same number of iterations and discretisation steps as for the WFR flows.  We see that the tempered dynamics do not perform better than the standard dynamics; the optimal rate performs similarly to the non-tempered dynamics for both W and WFR.
WFR converges faster than the W flow for all choices of tempering regardless of the log-Sobolev constant $C_{\textrm{LSI}}$.
The increased convergence speed of WFR comes at a higher computational cost; while ULA and its tempered variants have $\mathcal{O}(N)$ cost, WFR and Tempered WFR require the computation of the weights  leading to an $\mathcal{O}(N^2)$ cost \citep{us}.

\subsection{Tempered SMC-WFR and Tempering SMC}

We now compare two approximations of the tempered WFR flow: Algorithm~\ref{alg:smc_wfr} obtained using the analytic solution of the FR flow and the usual tempering SMC algorithm (e.g. \cite{chopin2020introduction}) which uses the sequence $\mu_n \propto\pi^{\lambda_n}\mu_0^{(1-\lambda_n)}$ and tempered ULA~\eqref{eq:tula_continuous} as proposal kernel.

We empirically compare this two approximations on a 1D Gaussian example with $\mu_0(x) = \mathcal{N}(x; 0, 1)$ and $\pi(x) = \mathcal{N}(x; 20, 0.1)$ using the linear schedule $\lambda_s = s/t$. We take $\gamma\in[0.001, 0.1]$, $N=400$ and investigate how well the two approaches match the exact evolution of $\KL$ along the tempered WFR flow obtained using the moment ODEs in Appendix~\ref{sec:gaussian}.
Figure~\ref{fig:gaussians_tempered_fr_smc} shows that for small $\gamma$ the two approximations are both close to the exact evolution along the tempered WFR PDE but as $\gamma$ increases tempering SMC deviates from the continuous time evolution by slowing down convergence.

Despite the slower convergence, tempering SMC has some advantages over the approximation of the tempered WFR flow given by Algorithm~\ref{alg:smc_wfr}: setting $\gamma_n \equiv 1$ in~\eqref{eq:tempered_md} results in weights which only involve $\pi, \mu_0$ and thus can be computed in $\mathcal{O}(1)$ time against the $\mathcal{O}(N)$ time of~\eqref{eq:w_twfr}. In addition, when using the sequence of distributions $\mu_{n}\propto \pi^{\lambda_n}\mu_0^{1-\lambda_n}$, one can set $\lambda_T = 1$ for some large $T$ and therefore obtain $\pi$ in finite time; this is not the case for the exact solution of the tempered FR PDE~\eqref{eq:mut_tempered_fr} for which convergence to $\pi$ only occurs in the infinite time limit.
\begin{figure}
    \centering
    \includegraphics[width=\linewidth]{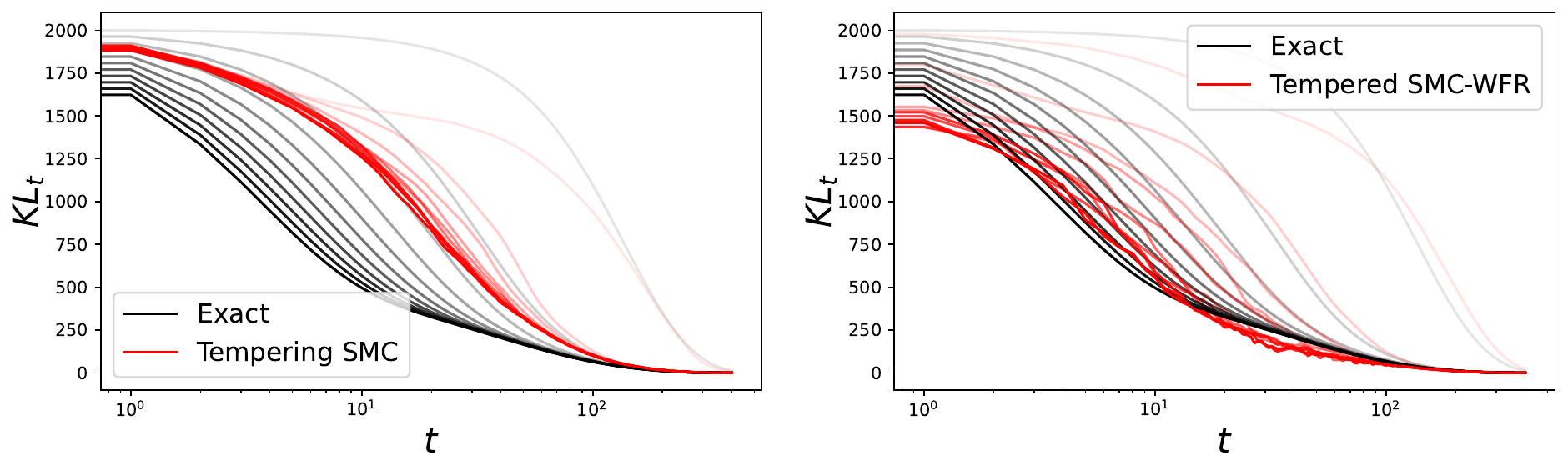}
    \caption{Evolution of $\KL$ along the tempered WFR PDE flow with linear schedule and two approximations via SMC. We have $\mu_0(x) = \mathcal{N}(x; 0, 1)$ and $\pi(x) = \mathcal{N}(x; 20, 0.1)$, $\gamma$ ranges from 0.001 (lighter color) to 0.1 (darker color).}
    \label{fig:gaussians_tempered_fr_smc}
\end{figure}

Figure~\ref{fig:ess} shows the distribution of the effective sample size (ESS)
\begin{align*}
    \textrm{ESS}_n := \left(\sum_{i=1}^N (W_n^i)^2\right)^{-1}
\end{align*}
over iterations for tempering SMC and tempered SMC-WFR. We observe that tempering SMC guarantees a relative ESS always above 75\% while for tempered SMC-WFR the relative ESS quickly degrades with increasing $\gamma$. For sufficiently small $\gamma$ (i.e. $\gamma = 0.001$) tempered SMC-WFR has higher ESS than tempering SMC.

\begin{figure}
    \centering
    \includegraphics[width=\linewidth]{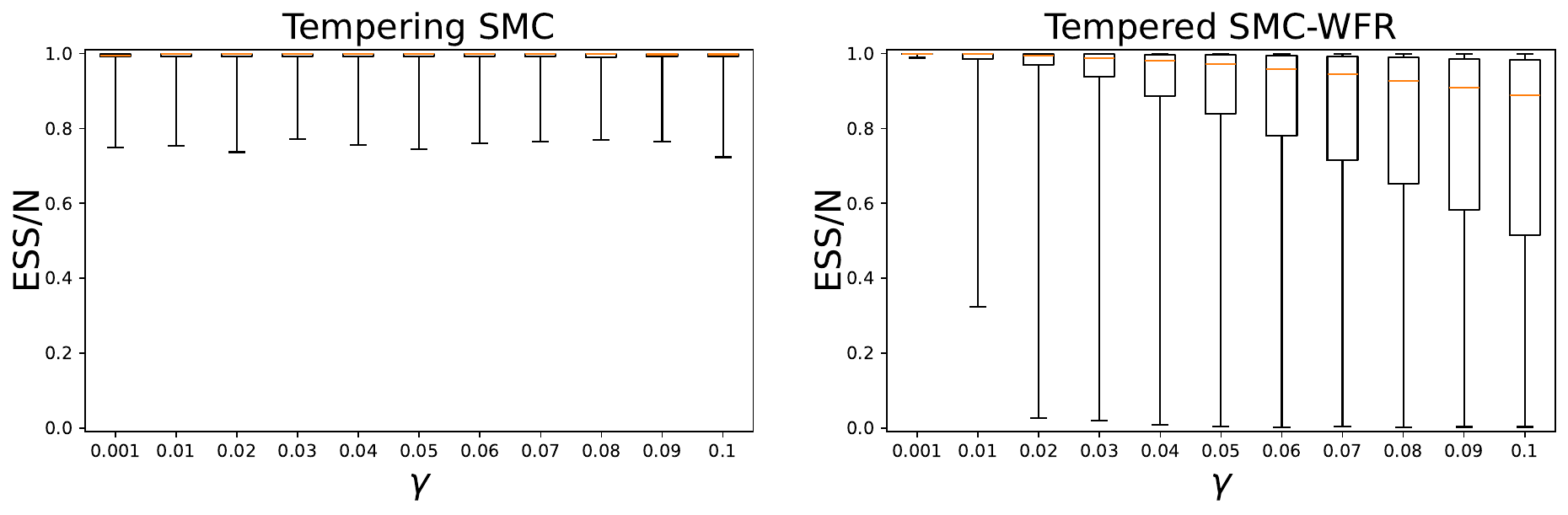}
    \caption{Distribution of ESS for tempered SMC-WFR and tempering SMC for different values of $\gamma$. We have $\mu_0(x) = \mathcal{N}(x; 0, 1)$ and $\pi(x) = \mathcal{N}(x; 20, 0.1)$, $\gamma$ ranges from 0.001 to 0.1.}
    \label{fig:ess}
\end{figure}

\subsection{Comparison of Tempering Schedules}
\label{sec:ode_expe}

We compare the adaptive tempering schedules in Section~\ref{app:tempered_functional} on the target~\eqref{eq:gaussian_mix_m} with $m=2$=
We consider the tempered W dynamics in~\eqref{eq:tw_flow} and use $\gamma = 0.01$, $N=500 $ and $T=500 $ iterations.

Figure~\ref{fig:tempering2} shows the evolution of MMD and $\lambda_t$ for ULA and the three adaptive strategies. In both cases the gradient flow based ODE is the slowest to converge to 1 and thus the decay of MMD to 0 is the slowest. The strategy of \cite{goshtasbpour2023adaptive} quickly converges to 1 and more closely follows the MMD decay of ULA. The ESS based strategy is slighlty slower. However, ULA remains the fastest strategy in terms of convergence to $\pi$.

\begin{figure}
    \centering
    \includegraphics[width=0.45\linewidth]{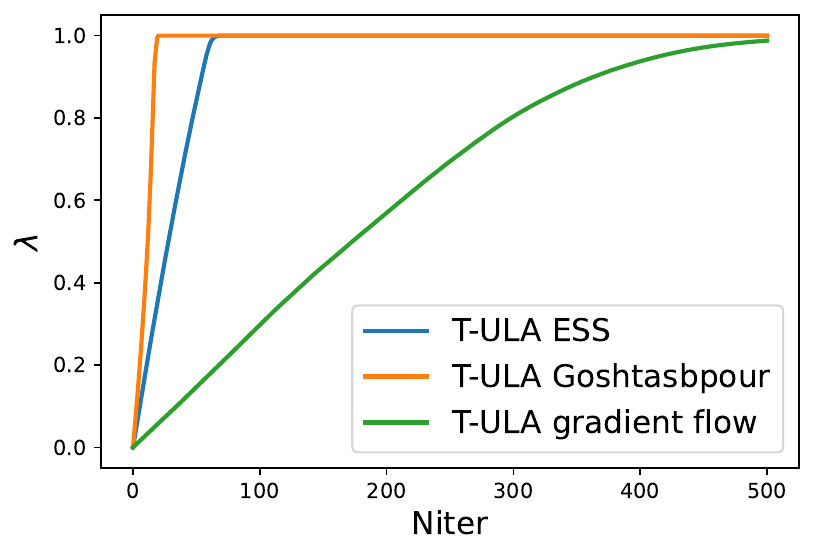}
    \includegraphics[width=0.45\linewidth]{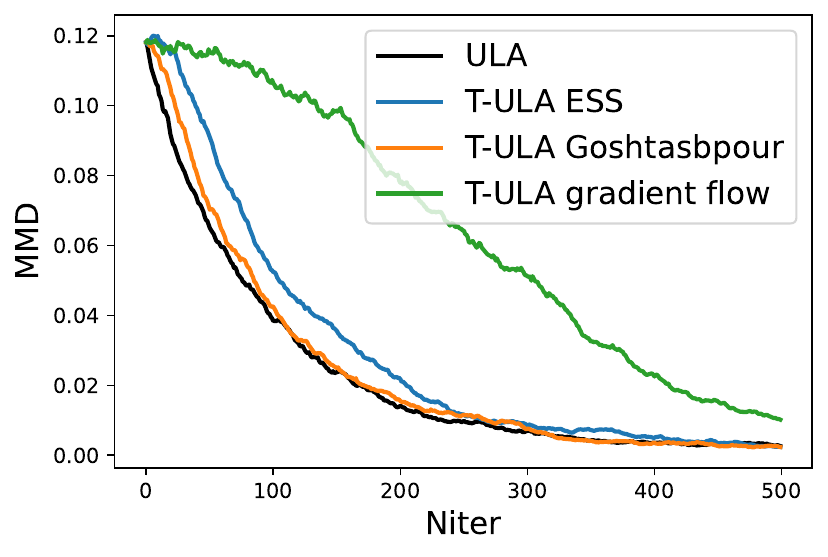}
    \caption{Comparison of adaptive strategies for $\lambda_t$ and resulting MMD decay for $\mu_0(x)=\mathcal{N}(x;0, 1)$ and
$\pi(x) = \frac{1}{2}\mathcal{N}(x;0, 1) + \frac{1}{2}\mathcal{N}(x;m, 1)$. }
    \label{fig:tempering2}
\end{figure}
\section{Discussion}

In this work, we provided a systematic analysis of geometric tempering in gradient flow dynamics, with a focus on the Fisher--Rao and Wasserstein settings. Our results offer a clearer understanding of the role and limitations of tempering in these frameworks.

In the Fisher--Rao case, we derived an explicit comparison between tempered and untempered dynamics, showing that tempering does not accelerate convergence and, in fact, leads to slower dynamics (Proposition~\ref{prop:fr_faster}). This conclusion holds in both continuous and discrete time and follows from an exact characterisation rather than a bound, highlighting a fundamental limitation of tempering in reweighting-based flows.

In the Wasserstein setting, we showed that the effect of tempering can be understood through a decomposition of the KL decay into a standard convergence term and an additional bias induced by the evolving target distribution (Proposition~\ref{prop:tw_conv} and Proposition~\ref{prop:tw_conv_discr}). This perspective clarifies how tempering modifies the dynamics and distinguishes our analysis from the work of \cite{Chehab2024}.

We also investigated the construction of tempering schedules from a gradient flow perspective. While this provides a principled framework, both our theoretical and empirical findings indicate that the resulting schedules are typically slower than commonly used alternatives (Section~\ref{app:tempered_functional} and Section~\ref{sec:ode_expe}). 

Even if our theoretical and empirical results (Section~\ref{sec:expe})  suggest that tempered dynamics are not beneficial for sampling, we point out that the idea of tempering, i.e. to slowly move the initial distribution $\mu_0$ to the target $\pi$ via a geometric path, coincides with the FR flow as highlighted in~\eqref{eq:alpha_tempering}. 
It is then intuitive that tempering the FR flow itself is will not improve on its convergence as shown in Proposition~\ref{prop:fr_faster}.

For the W flow we argue that a more promising way to combine transport dynamics with tempering ideas is to consider Wasserstein--Fisher--Rao flows. In fact, these flows are provably faster than W and FR flows alone \citep{us_splitting, lu2023birth, Lu2019}.
We hope that this work contributes to a more precise understanding of the tempering phenomena and informs the design of future sampling methods.



\subsubsection*{Acknowledgments}
F.R.C. gratefully acknowledges the ``de Castro" Statistics Initative at the \textit{Collegio Carlo Alberto} and the \textit{Fondazione Franca e Diego de Castro}. F.R.C. is supported by the Gruppo
Nazionale per l'Analisi Matematica, la Probabilità e le loro Applicazioni (GNAMPA-INdAM). \\
\\
S.P. gratefully acknowledges funding from UNSW Faculty of Science Research Grant and the Eva Mayr Stihl Foundation.\\

\bibliography{mybibfile}
\bibliographystyle{tmlr}

\appendix
\section{Preliminary results }
\label{app:preliminary}
We collect here a number of preliminary results from \cite{Chehab2024} and \cite{vempala2019rapid} which we will use in the proof of Proposition~\ref{prop:tw_conv} and Proposition~\ref{prop:tw_conv_discr}.

We recall that Assumption~\ref{ass:ula} implies the dissipativity condition 
\begin{align}
\label{eq:dissipativity}
\ssp{x, \nabla V_\pi(x)}\geq a_\pi\|x\|^2-b_\pi.
\end{align}
To see this use Assumption~\ref{ass:ula} and the Cauchy-Schwarz inequality to obtain
\begin{align*}
    \langle \nabla V_\pi(x) , x \rangle & = \langle \nabla V_\pi(x) -\nabla V_\pi(0), x \rangle + \langle \nabla V_\pi(0) , x \rangle\geq c_\pi\norm{x}^2 - \norm{x}\norm{\nabla V_\pi(0)}.
\end{align*}
Then Young's inequality yields, $\alpha\beta \leq \alpha^2/(2\epsilon)+\beta^2\epsilon/2$ for all $\epsilon>0$, and thus that:
\begin{align*}
    -\norm{x}\norm{\nabla V_\pi(0)} \geq -\norm{x}^2/(2\epsilon)-\epsilon \norm{\nabla V_\pi(0)}^2/2.
\end{align*}
It follows that for all $\epsilon > 1/(2c_\pi)$ we have~\eqref{eq:dissipativity} with $a_\pi = c_\pi-1/(2\epsilon)>0$ and $b_\pi = \epsilon\norm{\nabla V_\pi(0)}^2/2>0$; and similarly for $\mu_0$. 

We now list some results from \cite{Chehab2024} for tempered Langevin dynamics.

\begin{lemma}[\cite{Chehab2024}; Lemma 15]
    \label{lem:chehab_lemma15}
    Let $\mu_t$ be the solution of~\eqref{eq:tw_flow}.
    Under Assumption~\ref{ass:ula}, for all $t \ge 0$,
\[
\mathbb{E}_{\mu_t}\big[\|x\|^2\big]
\le
\max \left(
\mathbb{E}_{\mu_0}\big[\|x\|^2\big],
\frac{d + b_{\mu_0} + b_\pi}{a_{\mu_0} \wedge a_\pi}
\right).
\]
\end{lemma}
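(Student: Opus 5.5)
The plan is a Lyapunov argument on the second moment $m_t:=\mathbb{E}_{\mu_t}[\|x\|^2]$ along the tempered W flow~\eqref{eq:tw_flow}. This flow is the Fokker--Planck equation of the SDE~\eqref{eq:templanSDE}, $dX_t=-\nabla V_t(X_t)\,dt+\sqrt{2}\,dB_t$, with $V_t=(1-\lambda_t)V_0+\lambda_tV_\pi$.

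\textbf{Step 1: differential inequality for $m_t$.} By It\^o's formula (equivalently, testing the PDE against $\|x\|^2$ and integrating by parts),
\begin{align*}
\frac{d}{dt}m_t = -2\,\mathbb{E}_{\mu_t}\big[\langle x,\nabla V_t(x)\rangle\big]+2d .
\end{align*}
Applying the dissipativity condition~\eqref{eq:dissipativity} to $V_\pi$ and to $V_0$, and taking the convex combination with weights $\lambda_t$ and $1-\lambda_t$, gives
\begin{align*}
\langle x,\nabla V_t(x)\rangle &\geq \big((1-\lambda_t)a_{\mu_0}+\lambda_t a_\pi\big)\|x\|^2-\big((1-\lambda_t)b_{\mu_0}+\lambda_t b_\pi\big)\\
&\geq (a_{\mu_0}\wedge a_\pi)\|x\|^2-(b_{\mu_0}+b_\pi).
\end{align*}
With $a:=a_{\mu_0}\wedge a_\pi$ and $K:=(d+b_{\mu_0}+b_\pi)/a$, this yields $\dot m_t\le -2a\,m_t+2(d+b_{\mu_0}+b_\pi)=-2a(m_t-K)$.

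\textbf{Step 2: comparison.} Grönwall gives $m_t-K\le e^{-2at}(m_0-K)$. If $m_0\le K$, then $m_t\le K$ for all $t$. If $m_0>K$, then $m_t\le K+e^{-2at}(m_0-K)\le m_0$. In both cases $m_t\le\max(m_0,K)$, which is the claim.

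\textbf{Main obstacle: rigour of Step 1.} The differentiation of $m_t$ needs justification: finiteness of the second moment, and vanishing of the boundary terms in the integration by parts (or the martingale term in It\^o's formula vanishing in expectation). I would handle this by localisation. Apply It\^o's formula to $\|X_{t\wedge\tau_R}\|^2$, where $\tau_R$ is the exit time of the ball of radius $R$, so the stochastic integral is a true martingale. Derive the integrated form of the inequality, then let $R\to\infty$ using Fatou's lemma. Well-posedness of the SDE comes from the Lipschitz drift (Assumption~\ref{ass:ula}) together with $\lambda_t$ being measurable and $[0,1]$-valued.
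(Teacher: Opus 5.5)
Your proposal is correct and follows the standard It\^o/dissipativity/Gr\"onwall argument by which Lemma~15 of \cite{Chehab2024} is proved; the paper gives no proof of its own and simply imports that lemma. One small tightening of your localisation step: apply It\^o's formula to $e^{2a(t\wedge\tau_R)}\|X_{t\wedge\tau_R}\|^2$, whose $ds$-integrand is bounded above pointwise by $2(d+b_{\mu_0}+b_\pi)e^{2as}$, so that Fatou directly gives $m_t\le e^{-2at}m_0+K(1-e^{-2at})$. This is cleaner than passing through the integral inequality $m_t\le m_0+\int_0^t(2aK-2am_s)\,ds$ taken from time $0$ only, which by itself does not justify the comparison step because the coefficient is negative.
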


\begin{lemma}[\cite{Chehab2024}; Proposition 14]
    \label{lem:chehab_prop14}
    Under Assumption~\ref{ass:ula}, 
    for all $x \in \mathbb{R}^d$, we have
\[
V_{\mu_0}(x) - \inf_{y \in \mathbb{R}^d} V_{\mu_0}(y)
\le 2L_{\mu_0} \left(\|x\|^2 + \frac{b_{\mu_0}}{a_{\mu_0}}\right),
\qquad
V_\pi(x) - \inf_{y \in \mathbb{R}^d} V_\pi(y)
\le 2L_\pi \left(\|x\|^2 + \frac{b_\pi}{a_\pi}\right).
\]
\end{lemma}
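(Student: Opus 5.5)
We prove the bound for a generic potential $V\in\{V_{\mu_0},V_\pi\}$ with Lipschitz constant $L$ of $\nabla V$ and dissipativity constants $a,b$ from \eqref{eq:dissipativity}. The idea is to compare $V(x)$ with its value at a minimiser using the $L$-smoothness of Assumption~\ref{ass:ula}, and then to locate the minimiser via dissipativity. The two cases are identical, so we treat them together.

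\emph{Step 1 (existence of a minimiser).} By the strong convexity in Assumption~\ref{ass:ula}, $V$ is coercive: indeed $V(x)\ge V(0)+\langle\nabla V(0),x\rangle + \tfrac{c}{2}\|x\|^2$. Being continuous, $V$ therefore attains its infimum at some $x^\star$. Since $V\in C^1(\real^d)$, we have $\nabla V(x^\star)=0$.

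\emph{Step 2 (smoothness bound).} By the descent lemma for $L$-Lipschitz gradients,
\begin{align*}
V(x)-V(x^\star)\le \langle \nabla V(x^\star),x-x^\star\rangle+\tfrac{L}{2}\|x-x^\star\|^2=\tfrac{L}{2}\|x-x^\star\|^2 .
\end{align*}
Using $\|x-x^\star\|^2\le 2\|x\|^2+2\|x^\star\|^2$, we obtain $V(x)-\inf V\le L(\|x\|^2+\|x^\star\|^2)$.

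\emph{Step 3 (locating the minimiser).} Evaluate the dissipativity inequality \eqref{eq:dissipativity} at $x^\star$. This gives $0=\langle x^\star,\nabla V(x^\star)\rangle\ge a\|x^\star\|^2-b$, hence $\|x^\star\|^2\le b/a$.

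Combining Steps 2 and 3 yields $V(x)-\inf V\le L(\|x\|^2+b/a)\le 2L(\|x\|^2+b/a)$, which is the claim (with a factor $2$ to spare). There is no real obstacle here. The only point requiring care is Step 1, i.e.\ justifying that the infimum is attained so that $\nabla V(x^\star)=0$ can be used. Strong convexity settles this directly. Without strong convexity one could instead argue with a minimising sequence, using that dissipativity forces coercivity.
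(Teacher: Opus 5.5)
Your proof is correct and follows essentially the same route as the cited source: evaluating dissipativity at the minimiser $x^\star$ (where $\nabla V(x^\star)=0$) places it in the ball of radius $\sqrt{b/a}$, and the $L$-smoothness descent lemma then gives the quadratic bound. The paper itself only imports this lemma from \cite{Chehab2024}, but it uses exactly this two-step reasoning in Appendix~\ref{app:tw_conv} for the analogous gradient bound $\|\nabla V_\pi(z)\|^2\le 2L_\pi^2(\|z\|^2+b_\pi/a_\pi)$, and your version even yields the sharper constant $L$ in place of $2L$.
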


\begin{lemma}[\cite{Chehab2024}; Lemma 16]
    \label{lem:chehab_lemma16}

    Suppose
\[
\gamma_n \le \min\!\left(1,
\frac{a_\pi \wedge a_{\mu_0}}{2(L_\pi + L_{\mu_0})^2}\right).
\]
Let $\mu_n$ be the distribution of tempered ULA~\eqref{eq:tula_continuous} at time $n$.
Then, under Assumption~\ref{ass:ula}, 
\[
\mathbb{E}_{\mu_n}\big[\|x\|^2\big]
\le
\max\left(
\mathbb{E}_{\mu_0}\big[\|x\|^2\big],
\frac{2(3(b_\pi + b_{\mu_0})/2 + d)}{a_\pi \wedge a_{\mu_0} \wedge 1}
\right).
\]

\end{lemma}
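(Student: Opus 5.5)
The plan is a one-step recursion for the second moment $m_n := \mathbb{E}_{\mu_n}[\|x\|^2]$ along tempered ULA~\eqref{eq:tula_continuous}, followed by an induction. Write $V_n := (1-\lambda_n)V_{\mu_0} + \lambda_n V_\pi$, so that $\nabla\log\pi_n = -\nabla V_n$. Since $\xi_n$ is independent of $X_{n-1}$ and centred,
\begin{align*}
m_n = \mathbb{E}\big[\|X_{n-1}\|^2\big] - 2\gamma_n \mathbb{E}\big[\langle X_{n-1}, \nabla V_n(X_{n-1})\rangle\big] + \gamma_n^2 \mathbb{E}\big[\|\nabla V_n(X_{n-1})\|^2\big] + 2\gamma_n d .
\end{align*}
I then bound the cross term and the gradient term using only properties that are stable under convex combinations in $\lambda_n$.

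For the cross term, the dissipativity bounds~\eqref{eq:dissipativity} for $V_\pi$ and $V_{\mu_0}$ are linear in $\nabla V$. Taking the convex combination gives, with $a := a_\pi\wedge a_{\mu_0}$ and $b := b_\pi + b_{\mu_0}$,
\begin{align*}
\langle x, \nabla V_n(x)\rangle \ge a\|x\|^2 - b .
\end{align*}
For the gradient term, Lipschitz continuity gives $\|\nabla V_n(x)\| \le (L_\pi+L_{\mu_0})\|x\| + \|\nabla V_n(0)\|$, and hence
\begin{align*}
\|\nabla V_n(x)\|^2 \le 2(L_\pi+L_{\mu_0})^2\|x\|^2 + 2\|\nabla V_n(0)\|^2 .
\end{align*}
Substituting both bounds and using $\gamma_n \le a/(2(L_\pi+L_{\mu_0})^2)$, the coefficient of $m_{n-1}$ is
\begin{align*}
1 - 2\gamma_n a + 2\gamma_n^2(L_\pi+L_{\mu_0})^2 \le 1 - \gamma_n a .
\end{align*}
The constant term is $2\gamma_n b + 2\gamma_n^2\|\nabla V_n(0)\|^2 + 2\gamma_n d$. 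With $\gamma_n\le 1$, this is at most $\gamma_n(3b + 2d)$, provided $\|\nabla V_n(0)\|^2 \le b/2$.

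The step I expect to be the main obstacle is exactly this control of $\|\nabla V_n(0)\|^2$ by the dissipativity constants. From the derivation of~\eqref{eq:dissipativity} we have $b_\pi = \epsilon\|\nabla V_\pi(0)\|^2/2$, and similarly for $\mu_0$, for any admissible $\epsilon > 1/(2c)$. Convexity of the squared norm then gives
\begin{align*}
\|\nabla V_n(0)\|^2 \le \|\nabla V_\pi(0)\|^2 + \|\nabla V_{\mu_0}(0)\|^2 = 2b/\epsilon .
\end{align*}
So I would fix $\epsilon \ge 4$ (and $\epsilon>1/(2c)$) when choosing the constants $(a,b)$, which yields $\|\nabla V_n(0)\|^2\le b/2$. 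A larger $\epsilon$ only decreases $a$ slightly, and the stated bound holds for whichever admissible pair $(a,b)$ is chosen. If this bookkeeping turns out to be inconsistent with how the constants are fixed in \cite{Chehab2024}, I would fall back to a Young-type split of the cross term; that only changes the numerical factor $3/2$.

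It remains to close the recursion
\begin{align*}
m_n \le (1-\gamma_n a)\, m_{n-1} + \gamma_n(3b+2d).
\end{align*}
First note that $1-\gamma_n a\ge 0$: indeed $a\le c \le L_\pi + L_{\mu_0}$, so $\gamma_n a \le a^2/(2(L_\pi+L_{\mu_0})^2)\le 1/2$. Set
\begin{align*}
K := \max\Big(m_0, \frac{3b+2d}{a\wedge 1}\Big),
\end{align*}
and argue by induction. If $m_{n-1}\le K$, then $\gamma_n(3b+2d)\le \gamma_n a K$ (using $a\wedge 1\le a$), so $m_n \le (1-\gamma_n a)K + \gamma_n a K = K$. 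Since $(3b+2d)/(a\wedge1) = 2(3b/2+d)/(a\wedge1)$, this is exactly the claimed bound.
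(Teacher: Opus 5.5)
Your overall scheme is the standard one and is sound. The exact one-step expansion of $m_n$, the convex-combination dissipativity bound $\langle x,\nabla V_n(x)\rangle\ge a\|x\|^2-b$, the contraction factor $1-2\gamma_n a+2\gamma_n^2(L_\pi+L_{\mu_0})^2\le 1-\gamma_n a$, and the induction with $K=\max(m_0,(3b+2d)/(a\wedge 1))$ are all correct. The gap is the step you flagged yourself.

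The inequality $\|\nabla V_n(0)\|^2\le b/2$ does not follow from Assumption~\ref{ass:ula} and dissipativity. It holds only for one particular way of manufacturing $b$. The lemma, both in \cite{Chehab2024} and as it is used here alongside Lemma~\ref{lem:chehab_prop14} and \eqref{eq:pi_locallybounded}, is a statement about arbitrary dissipativity constants. For admissible choices your inequality is false. In $d=1$, take $V_\pi=V_{\mu_0}=(x-\theta)^2/2$ (so $c=L=1$) and $\epsilon=1>1/(2c)$. The construction gives $a=1/2$ and $b=b_\pi+b_{\mu_0}=\theta^2$, while $\|\nabla V_n(0)\|^2=\theta^2=b$.

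Forcing $\epsilon\ge 4$ therefore proves the bound only for a restricted family of constants. Incidentally, increasing $\epsilon$ increases $a=c-1/(2\epsilon)$ and inflates $b$; it does not decrease $a$. Your Young-type fallback changes the numerical constant, so it does not give the stated bound either.

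The missing idea is to control the gradient at the origin through the minimiser of $V_n$, and to absorb it with the step-size condition rather than with $\gamma_n\le 1$. Since $V_n$ is strongly convex, it has a minimiser $x_n^\star$. Inserting it into the combined dissipativity inequality gives $0\ge a\|x_n^\star\|^2-b$, i.e.\ $\|x_n^\star\|^2\le b/a$. With $L:=L_\pi+L_{\mu_0}$ this yields $\|\nabla V_n(0)\|^2=\|\nabla V_n(0)-\nabla V_n(x_n^\star)\|^2\le L^2 b/a$. The condition $\gamma_n\le a/(2L^2)$ then gives $2\gamma_n^2\|\nabla V_n(0)\|^2\le \gamma_n b$. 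So the constant term is at most $\gamma_n(3b+2d)$ for any admissible $(a_\pi,b_\pi,a_{\mu_0},b_{\mu_0})$, which is where the factor $3/2$ in the statement comes from.

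The rest of your argument then goes through unchanged, with one adjustment. For general constants, justify $1-\gamma_n a\ge 0$ via $a_\pi\le L_\pi$ rather than via $a\le c$. This follows by letting $\|x\|\to\infty$ in $a_\pi\|x\|^2-b_\pi\le\langle x,\nabla V_\pi(x)\rangle\le L_\pi\|x\|^2+\|\nabla V_\pi(0)\|\,\|x\|$, and likewise for $\mu_0$.
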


We also include a result on convergence of ULA from \cite{vempala2019rapid}.
\begin{lemma}[\cite{vempala2019rapid}; Lemma 3]
    \label{lem:vempala}
    Suppose $\nu$ satisfies a log-Sobolev inequality~\eqref{eq:lsi} with constant $C_{\textrm{LSI}}>0$ and is $L$-smooth. 
If $0<\gamma_n \le \frac{C_{\textrm{LSI}}^{-1}}{4L^2}$, then along each step of ULA targeting $\nu\in\cP(\real^d)$
\begin{align*}
    X_n=X_{n-1}+\gamma_n\nabla\log\nu(X_{n-1})+\sqrt{2\gamma_n}\xi_n
\end{align*}
we have
\[
\KL(\mu_{n}||\nu)
\le
e^{-C_{\textrm{LSI}}^{-1} \gamma_n} \KL(\mu_{n-1}||\nu)
+ 6 \gamma_n^2 d L^2 ,
\]
where $(\mu_n)_{n\geq 0}$ denotes the distribution of $X_n$.
\end{lemma}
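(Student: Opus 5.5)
The plan is the interpolation argument of \citet{vempala2019rapid}. We view one step of ULA as the exact solution, on $[0,\gamma_n]$, of a continuous-time SDE with frozen drift, differentiate $\KL$ along it, and apply Gr\"onwall. Throughout write $\alpha := C_{\textrm{LSI}}^{-1}$ and $f:=-\log\nu$.

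\textbf{Step 1: interpolation and a differential inequality.} Fix $X_0:=X_{n-1}\sim\mu_{n-1}$ and set $X_t = X_0 - t\nabla f(X_0) + \sqrt{2}B_t$ for $t\in[0,\gamma_n]$, so that $X_{\gamma_n}\stackrel{d}{=}X_n$. Let $\rho_t$ be the law of $X_t$. Conditioning on $X_t$, $\rho_t$ satisfies the Fokker--Planck equation
\[
\partial_t\rho_t=\nabla\cdot\Bigl(\rho_t\bigl(\mathbb{E}[\nabla f(X_0)\mid X_t=x]+\nabla\log\rho_t\bigr)\Bigr).
\]
Differentiating $\KL(\rho_t\|\nu)$ and integrating by parts gives
\[
\frac{d}{dt}\KL(\rho_t\|\nu)=-\mathbb{E}\,\Vert\nabla\log\tfrac{\rho_t}{\nu}(X_t)\Vert^2+\mathbb{E}\bigl\langle\nabla\log\tfrac{\rho_t}{\nu}(X_t),\,\nabla f(X_t)-\nabla f(X_0)\bigr\rangle .
\]
Young's inequality $\langle a,b\rangle\le\frac14\Vert a\Vert^2+\Vert b\Vert^2$ and $L$-smoothness then yield
\[
\frac{d}{dt}\KL(\rho_t\|\nu)\le-\tfrac34\,\mathbb{E}\,\Vert\nabla\log\tfrac{\rho_t}{\nu}(X_t)\Vert^2+L^2\,\mathbb{E}\Vert X_t-X_0\Vert^2 .
\]

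\textbf{Step 2: controlling the displacement.} This is the main obstacle. We have $\mathbb{E}\Vert X_t-X_0\Vert^2=t^2\,\mathbb{E}\Vert\nabla f(X_0)\Vert^2+2td$, but the gradient term is evaluated under $\rho_0$, while the differential inequality lives at time $t$. I would first write
\[
\Vert\nabla f(X_0)\Vert^2\le2\Vert\nabla f(X_t)\Vert^2+2L^2\Vert X_t-X_0\Vert^2 .
\]
Then I would absorb the last term using $2t^2L^2\le\frac12$, which holds since $t\le\gamma_n\le\frac{1}{2L}$ (because $\alpha\le L$). This gives
\[
\mathbb{E}\Vert X_t-X_0\Vert^2\le4t^2\,\mathbb{E}_{\rho_t}\Vert\nabla f\Vert^2+4td .
\]
Next I would invoke the auxiliary bound valid for $L$-smooth $\nu$ satisfying LSI:
\[
\mathbb{E}_{\rho}\Vert\nabla f\Vert^2\le\frac{4L^2}{\alpha}\KL(\rho\|\nu)+2dL .
\]
It follows from $\mathbb{E}_\rho\Vert\nabla f\Vert^2\le2\,\mathbb{E}_\rho\Vert\nabla\log\frac{\rho}{\nu}\Vert^2+\ldots$, or more directly from coupling $\rho$ with $\nu$ optimally, using $\mathbb{E}_\nu\Vert\nabla f\Vert^2\le dL$ and Talagrand's inequality $W_2^2\le\frac{2}{\alpha}\KL$, which is implied by LSI. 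I expect bookkeeping the constants here to be the delicate part.

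\textbf{Step 3: closing with LSI and Gr\"onwall.} Applying \eqref{eq:lsi} to the Fisher information term, $\mathbb{E}\Vert\nabla\log\frac{\rho_t}{\nu}\Vert^2\ge2\alpha\KL(\rho_t\|\nu)$, the inequality becomes
\[
\frac{d}{dt}\KL(\rho_t\|\nu)\le-\tfrac32\alpha\KL(\rho_t\|\nu)+\frac{16t^2L^4}{\alpha}\KL(\rho_t\|\nu)+8t^2dL^3+4tdL^2 .
\]
The step-size condition $t\le\gamma_n\le\frac{\alpha}{4L^2}$ gives $\frac{16t^2L^4}{\alpha}\le\frac{\alpha}{2}$. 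Using also $tL\le\frac14$, the remainder is at most $6tdL^2$, so
\[
\frac{d}{dt}\KL(\rho_t\|\nu)\le-\alpha\,\KL(\rho_t\|\nu)+6\gamma_n dL^2 .
\]
Multiplying by $e^{\alpha t}$ and integrating over $[0,\gamma_n]$ gives
\[
\KL(\mu_n\|\nu)\le e^{-\alpha\gamma_n}\KL(\mu_{n-1}\|\nu)+6\gamma_n^2dL^2 ,
\]
using $\int_0^{\gamma_n}e^{-\alpha(\gamma_n-s)}ds\le\gamma_n$. This is the claim.
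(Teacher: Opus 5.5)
\textbf{Gap in Step 3: the claimed rate does not follow.} The paper does not prove this lemma. It imports it from Vempala--Wibisono, and your interpolation argument follows that same route. Steps 1 and 2 are fine. Step 3, however, contains an arithmetic error that breaks the stated contraction. From $t\le\gamma_n\le\alpha/(4L^2)$ you only get $t^2\le\alpha^2/(16L^4)$, hence $16t^2L^4/\alpha\le\alpha$, with equality at $t=\alpha/(4L^2)$; you do not get $16t^2L^4/\alpha\le\alpha/2$. With the correct bound, the coefficient of $\KL(\rho_t\|\nu)$ is $-\frac32\alpha+\alpha=-\frac{\alpha}{2}$. Gr\"onwall then yields only $\KL(\mu_n\|\nu)\le e^{-\alpha\gamma_n/2}\KL(\mu_{n-1}\|\nu)+6\gamma_n^2dL^2$, which is strictly weaker than the claim. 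The factor matters, since the paper uses exactly the contraction $e^{-\gamma_n c_n}$ in the proof of Proposition~\ref{prop:tw_conv_discr}. The loss comes from converting $\mathbb{E}_{\rho_t}\Vert\nabla f\Vert^2$ into KL via Talagrand, at cost $4L^2/\alpha$, and then paying for LSI again. Separately, your parenthetical derivation of the auxiliary bound from a Fisher-information inequality cannot produce a bound in terms of KL, because LSI controls KL by Fisher information and not conversely. Only the coupling/Talagrand derivation supports the auxiliary inequality as you stated it.

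\textbf{How to repair it.} Keep the Fisher information instead of converting to KL. Integrating by parts, $\mathbb{E}_\rho\Vert\nabla f\Vert^2=\mathbb{E}_\rho\langle\nabla f,\nabla\log\frac{\rho}{\nu}\rangle+\mathbb{E}_\rho\Delta f\le\frac12\mathbb{E}_\rho\Vert\nabla f\Vert^2+\frac12 J_\nu(\rho)+dL$, where $J_\nu(\rho):=\mathbb{E}_\rho\Vert\nabla\log\frac{\rho}{\nu}\Vert^2$. Hence $\mathbb{E}_\rho\Vert\nabla f\Vert^2\le J_\nu(\rho)+2dL$. Inserting this into your Step 2 bound gives
\[
\frac{d}{dt}\KL(\rho_t\|\nu)\le-\Bigl(\frac34-4t^2L^2\Bigr)J_\nu(\rho_t)+8t^2dL^3+4tdL^2\le-\frac12 J_\nu(\rho_t)+6tdL^2,
\]
using only $tL\le\frac14$. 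Only now apply LSI, $-\frac12 J_\nu(\rho_t)\le-\alpha\KL(\rho_t\|\nu)$. This gives exactly $\frac{d}{dt}\KL(\rho_t\|\nu)\le-\alpha\KL(\rho_t\|\nu)+6\gamma_n dL^2$, and your Gr\"onwall step then goes through unchanged. An alternative fix is to apply your Talagrand bound at $\rho_0$ rather than $\rho_t$ and skip the absorption step. Gr\"onwall then gives $\bigl(e^{-3\alpha\gamma_n/2}+\alpha\gamma_n/12\bigr)\KL(\mu_{n-1}\|\nu)$, and $e^{-3\alpha\gamma_n/2}+\alpha\gamma_n/12\le e^{-\alpha\gamma_n}$ holds since $\alpha\gamma_n\le\frac14$.
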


\section{Continuous time convergence}
\label{app:conv_cont}

\subsection{Proof of Proposition~\ref{prop:tw_conv}}
\label{app:tw_conv}

A direct computation of the decay of $\KL(\mu_t||\pi) $ along this PDE gives \begin{align*}
   \frac{d}{dt} \KL(\mu_t||\pi) &=\int_{\real^d} \log \frac{\mu_t(x)}{\pi(x)}  \partial_t  \mu_t(x) dx+ \int_{\real^d} \partial_t\left(\log \frac{\mu_t(x)}{\pi(x)}\right)  \mu_t(x) dx\\
   &=\int_{\real^d} \log \frac{\mu_t(x)}{\pi(x)}  \partial_t  \mu_t(x) dx+ \int_{\real^d} \partial_t \mu_t(x) dx\\
   &=\int_{\real^d} \log \frac{\mu_t(x)}{\pi(x)}  \partial_t  \mu_t(x) dx
\end{align*}
where we may interchange differentiation and integration given Assumption~\ref{ass:ula}.

Consider the W PDE
\begin{align*}
\partial_t\mu_t(x) &=\nabla \cdot \left( \mu_t(x)\nabla \log\left(\frac{\mu_t(x)}{\pi_t(x)}\right) \right)
\end{align*}
where $\pi_{t} \propto \pi^{\lambda_t}\mu_0^{1-\lambda_t}$ and $\lambda_t: \mathbb{R}_+ \rightarrow [0,1]$, $\lambda_t$ is non-decreasing in $t$ and weakly differentiable.
An application of integration by parts and the regularity guaranteed by Assumption~\ref{ass:ula} give
\begin{align*}
\frac{d}{dt} \KL(\mu_t||\pi) &=-\int_{\real^d}  \mu_t(x)\Vert\nabla \log \frac{\mu_t(x)}{\pi(x)} \Vert^2dx+(1-\lambda_t)\int_{\real^d}  \log \frac{\mu_t(x)}{\pi(x)} \nabla \cdot \left( \mu_t(x)\nabla \log\left(\frac{\pi(x)}{\mu_0(x)}\right)\right)dx\\
&= -\int_{\real^d}  \mu_t(x)\Vert\nabla \log \frac{\mu_t(x)}{\pi(x)} \Vert^2dx\\
&+(1-\lambda_t)\int_{\real^d} \mu_t(x) \left[\Delta V_0(x)-\Delta V_\pi(x)-\ssp{\nabla V_\pi(x),\nabla V_0(x)- \nabla V_\pi(x)}\right]dx
\end{align*}
where we used the expression for $\pi_t$ and multiple applications of integrations by parts to obtain the last equality.

A consequence of the dissipativity~\eqref{eq:dissipativity} is 
\begin{align}
\label{eq:pi_finite2moment}
    \E_{\pi}[\Vert X\Vert^2] \leq \frac{1}{a_\pi}\E_{\pi}[\ssp{X, \nabla V_\pi(X)}] + \frac{b_\pi}{a_\pi} = \frac{d+b_\pi}{a_\pi}.
\end{align}
In addition, $V_{\pi}$ must be minimised at some point $z_{0}\in B(0, \sqrt{\frac{b_{\pi}}{a_{\pi}}})$, thus the Lipschitz assumption implies
\begin{align}
\label{eq:pi_locallybounded}
\|\nabla V_{\pi}(z)\|^2\leq 2L_{\pi}^2\left(\|z\|^2+\frac{b_{\pi}}{a_{\pi}}\right).
\end{align}
The same properties follow for $\mu_0$.

Using the above and Lemma~\ref{lem:chehab_lemma15} we have
\begin{align*}
&\int_{\real^d} \mu_t(x) \left[\Delta V_0(x)-\Delta V_\pi(x)-\ssp{\nabla V_\pi(x),\nabla V_0(x)- \nabla V_\pi(x)}\right]dx\\
&\qquad\qquad\leq  d(L_{\mu_0}-c_\pi)+2L_\pi^2\left(\E_{\mu_t}[\Vert x\Vert^2]+\frac{b_\pi}{a_\pi}\right)+\int_{\real^d} \mu_t(x) (\Vert\nabla V_\pi(x)\Vert^2\vee\Vert\nabla V_0(x)\Vert^2)dx\\
&\qquad\qquad\leq d(L_{\mu_0}-c_\pi)+4(L_\pi^2\vee L_{\mu_0}^2)\left(\E_{\mu_t}[\Vert x\Vert^2]+\left(\frac{b_\pi}{a_\pi}\vee\frac{b_0}{a_0}\right)\right)\\
&\qquad\qquad\leq d(L_{\mu_0}-c_\pi)+4(L_\pi^2\vee L_{\mu_0}^2)\left(\E_{\mu_0}[\Vert x\Vert^2]\vee\frac{d+b_{\mu_0}+b_\pi}{a_{\mu_0}\wedge a_\pi}+\left(\frac{b_\pi}{a_\pi}\vee\frac{b_0}{a_0}\right)\right):=A.
\end{align*}

Using Gr\"onwall's Lemma we obtain
\begin{align*}
    \KL(\mu_t||\pi) &\leq e^{-2tc_\pi}\KL(\mu_0||\pi)+A\int_0^t e^{-2(t-s)c_\pi}(1-\lambda_s)ds.
\end{align*}

\subsection{Proof of~\eqref{eq:mut_tempered_fr}}
\label{app:fr_sol}

From~\eqref{eq:tfr_flow} we find
\begin{align*}
    \frac{\partial \log \mu_t(x)}{\partial t} = \log \left( \frac{\pi_t(x)}{\mu_t(x)} \right) - \mathbb{E}_{\mu_t} \left[  \log \left( \frac{\pi_t}{\mu_t} \right)\right].
\end{align*}
It follows that
\begin{align*}
    \frac{\partial e^t\log \mu_t(x)}{\partial t} &= e^t   \frac{\partial \log \mu_t(x)}{\partial t}+e^t\log\mu_t(x)
    \\
    &=e^t\log \pi_t(x) - e^t\mathbb{E}_{\mu_t} \left[  \log \left( \frac{\pi_t}{\mu_t} \right) \right].
\end{align*}
Integrating both sides over $[0, t]$ we obtain
\begin{align*}
    e^t\log \mu_t(x)-\log\mu_0(x)&= \int_0^t\left( e^s  \log \pi_s(x) - e^s\mathbb{E}_{\mu_s} \left[  \log \left( \frac{\pi_s}{\mu_s} \right) \right]\right)ds,
\end{align*}
or, equivalently,
\begin{align*}
   \log \mu_t(x)&= e^{-t}\log\mu_0(x)+\int_0^t\left( e^{s-t}  \log \pi_s(x) - e^{s-t}\mathbb{E}_{\mu_s} \left[  \log \left( \frac{\pi_s}{\mu_s} \right) \right]\right)ds,
\end{align*}
Recall that $\pi_t\propto \mu_0^{1-\lambda_t}\pi^{\lambda_t}$, therefore
\begin{align*}
    \log \mu_t(x) &= \log \mu_0(x)(\int_0^t e^{s-t}(1-\lambda_s)ds+e^{-t})+\log \pi(x)\int_0^t e^{s-t}\lambda_sds 
    -\int_0^t e^{s-t}\mathbb{E}_{\mu_s} \left[  \log \left( \frac{\pi_s}{\mu_s} \right) \right]ds.
\end{align*}
It follows that
\begin{align*}
\mu_t\propto \mu_0^{e^{-t}+\int_0^te^{s-t} (1-\lambda_s) ds}   \pi^{\int_0^te^{s-t} \lambda_s ds} .    
\end{align*}
\subsection{Proof of Proposition~\ref{prop:tfr_conv}}
\label{app:tfr_conv}

Consider the FR PDE
\begin{align*}
\partial_t\mu_t(x) &= \mu_t(x)\left( \log \left( \frac{\pi_t(x)}{\mu_t(x)} \right) - \mathbb{E}_{\mu_t} \left[  \log \left( \frac{\pi_t}{\mu_t} \right) \right] \right),
\end{align*}
where $\pi_{t} \propto \pi^{\lambda_t}\mu_0^{1-\lambda_t}$ and $\lambda_t: \mathbb{R}_+ \rightarrow [0,1]$, $\lambda_t$ is non-decreasing in $t$ and weakly differentiable.
Using an argument similar to that of \citet[Appendix B.1]{chen2023sampling} we can show that this PDE reduces the Kullback--Leibler divergence w.r.t. $\pi$.

The normalising constant of $\mu_t$ satisfies
\begin{align*}
    Z_t &:= \int_{\real^d}  \mu_0(x)^{1-\int_0^te^{s-t} \lambda_s ds}   \pi(x)^{\int_0^te^{s-t} \lambda_s ds}  dx\\
    &= \int_{\real^d}  \pi(x)   \left(\frac{\pi(x)}{\mu_0(x)}\right)^{1-\int_0^te^{s-t} \lambda_s ds}  dx\\
    &\geq  \int_{\real^d}  \pi(x)   \exp\left(-M(1+\|x\|^2)(1-\int_0^te^{s-t} \lambda_s ds)\right)  dx\\
    &\geq  \exp\left(\int_{\real^d}  \pi(x) \left(  -M(1+\|x\|^2)(1-\int_0^te^{s-t} \lambda_s ds) \right)  dx\right) \\
    &\geq  \exp\left( -M(1+B)(1-\int_0^te^{s-t} \lambda_s ds)\right),
\end{align*}
where we used~\eqref{eq:assumption_fr} and Jensen's inequality.

Since $\int_0^te^{s-t} \lambda_s ds<1$ we also have
\begin{align*}
    \frac{\mu_t(x)}{\pi(x)} &= \frac{1}{Z_t}\frac{\mu_0(x)^{1-\int_0^te^{s-t} \lambda_s ds}  \pi(x)^{\int_0^te^{s-t} \lambda_s ds} }{\pi(x)}\\
    &= \frac{1}{Z_t}   \left(\frac{\mu_0(x)}{\pi(x)}\right)^{1-\int_0^te^{s-t} \lambda_s ds}\\
    &\leq \exp\left( M(1+B)(1-\int_0^te^{s-t} \lambda_s ds)+ M(1+\|x\|^2)(1-\int_0^te^{s-t} \lambda_s ds)\right)
\end{align*}
from which we obtain the result
\begin{align*}
    \KL(\mu_t||\pi) &\leq M(1+B)(1-\int_0^te^{s-t} \lambda_s ds)+M(1+\int \mu_t(x)\|x\|^2dx)(1-\int_0^te^{s-t} \lambda_s ds)\\
    &\leq M(1+B)(1-\int_0^te^{s-t} \lambda_s ds)+M(1+B\exp\left( M(1+B)(1-\int_0^te^{s-t} \lambda_s ds)\right))(1-\int_0^te^{s-t} \lambda_s ds)
\end{align*}
using Proposition~\ref{prop:tfr_moment}.

\begin{proposition}
\label{prop:tfr_moment}
    Assume that $\int_{\real^d}  \norm{x}^2\mu_0(x)dx \leq B$, $\int_{\real^d}  \norm{x}^2\pi(x)dx \leq B$
 and~\eqref{eq:assumption_fr}, then
     \begin{align*}
        \int_{\real^d}  \mu_t(x)\|x\|^2dx \leq B\exp\left( M(1+B)(1-\int_0^te^{s-t} \lambda_s ds)\right).
    \end{align*}
\end{proposition}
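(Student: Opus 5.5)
Write $\beta_t := 1-\int_0^t e^{s-t}\lambda_s\,ds\in[0,1]$, so that by \eqref{eq:mut_tempered_fr} we have $\mu_t = Z_t^{-1}\mu_0^{\beta_t}\pi^{1-\beta_t}$. The plan is to reuse the two pointwise controls from the proof of Proposition~\ref{prop:tfr_conv}: the lower bound on the normalising constant, $Z_t\geq \exp(-M(1+B)\beta_t)$ (from \eqref{eq:assumption_fr} and Jensen), and the density ratio representation $\mu_t/\pi = Z_t^{-1}(\mu_0/\pi)^{\beta_t}$.

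The key step is to write the second moment of $\mu_t$ as an integral against $\pi$ weighted by this ratio:
\begin{align*}
\int_{\real^d}\mu_t(x)\|x\|^2dx = \frac{1}{Z_t}\int_{\real^d}\pi(x)\left(\frac{\mu_0(x)}{\pi(x)}\right)^{\beta_t}\|x\|^2dx \le e^{M(1+B)\beta_t}\int_{\real^d}\pi(x)^{1-\beta_t}\mu_0(x)^{\beta_t}\|x\|^2dx .
\end{align*}
Then I would bound the remaining integral by the weighted AM--GM (Young) inequality $a^{1-\beta}b^{\beta}\le (1-\beta)a+\beta b$ for $a,b\ge0$, $\beta\in[0,1]$, applied pointwise with $a=\pi(x)$, $b=\mu_0(x)$. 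This gives
\begin{align*}
\int_{\real^d}\pi^{1-\beta_t}\mu_0^{\beta_t}\|x\|^2dx\le (1-\beta_t)\int_{\real^d}\pi\|x\|^2dx+\beta_t\int_{\real^d}\mu_0\|x\|^2dx\le B,
\end{align*}
using the two second-moment bounds in the hypothesis. Combining both displays gives the claim.

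The only point needing care is that $\beta_t\in[0,1]$; this holds because $0\le\lambda_s\le 1$ gives $0\le\int_0^te^{s-t}\lambda_s\,ds\le 1-e^{-t}<1$. The same fact justifies using the lower bound on $Z_t$ with exponent $\beta_t$ exactly as in the proof of Proposition~\ref{prop:tfr_conv}. I expect no real obstacle; the main thing to notice is that Young's inequality, rather than the pointwise bound \eqref{eq:assumption_fr} on the ratio, is what avoids an extra $e^{M\|x\|^2}$ factor.
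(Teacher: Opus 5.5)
Your argument is correct and follows the paper's proof: it uses the same representation $\mu_t = Z_t^{-1}\mu_0^{\beta_t}\pi^{1-\beta_t}$ and the same Jensen lower bound $Z_t\ge e^{-M(1+B)\beta_t}$ from the proof of Proposition~\ref{prop:tfr_conv}, followed by an interpolation bound on $\int \mu_0^{\beta_t}\pi^{1-\beta_t}\|x\|^2dx$. The only difference is the interpolation step. The paper applies H\"older with $p=\beta_t^{-1}$, $q=(1-\beta_t)^{-1}$ to get the geometric mean $(\int\mu_0\|x\|^2dx)^{\beta_t}(\int\pi\|x\|^2dx)^{1-\beta_t}\le B$, whereas your pointwise weighted AM--GM gives the arithmetic mean: slightly weaker but equally sufficient here, and it needs no convention for the degenerate exponent when $\int_0^te^{s-t}\lambda_s\,ds=0$.
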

\begin{proof}
    Using the expression for $\mu_t$ in~\eqref{eq:mut_tempered_fr} and H\"older inequality with $p=(1-\int_0^te^{s-t} \lambda_s ds)^{-1}$ and $q=(\int_0^te^{s-t} \lambda_s ds)^{-1}$ gives
    \begin{align*}
        \int_{\real^d}  \mu_t(x)\|x\|^2dx &= \frac{1}{Z_t} \int_{\real^d}  (\mu_0(x)\|x\|^2)^{1-\int_0^te^{s-t} \lambda_s ds}(\pi(x)\|x\|^2)^{\int_0^te^{s-t} \lambda_s ds}dx\\
        &\leq \frac{1}{Z_t} \left(\int_{\real^d}  \mu_0(x)\|x\|^2dx\right)^{1-\int_0^te^{s-t} \lambda_s ds}\left(\int_{\real^d}  \pi(x)\|x\|^2dx\right)^{\int_0^te^{s-t} \lambda_s ds}\\
        &\leq B\exp\left( M(1+B)(1-\int_0^te^{s-t} \lambda_s ds)\right).
    \end{align*}
\end{proof}

\subsection{Proof of Proposition~\ref{prop:fr_faster}}
\label{app:kl_tempering}
For the geometric tempering family the KL divergence is given by
\begin{align*}
    \KL(\rho_\alpha || \pi) &= \int_{\real^d} \rho_\alpha(x) \log \frac{\rho_\alpha(x)}{\pi(x)} \, dx=(1-\alpha) \mathbb{E}_{\rho_\alpha}[s] - \log Z_\alpha
\end{align*}
where $s(x) := \log \frac{\mu_0(x)}{\pi(x)}$. 

Differentiating $\KL$ w.r.t. $\alpha$ we find
\begin{align*}
\frac{d}{d\alpha} \KL(\rho_\alpha || \pi)&= -(1-\alpha) \mathrm{Var}_{\rho_\alpha}[s] \le 0.
\end{align*}
since
\begin{align*}
    \frac{d}{d\alpha} (1-\alpha)\mathbb{E}_{\rho_\alpha}[s] &= -\mathbb{E}_{\rho_\alpha}[s] + (1-\alpha)\frac{d}{d\alpha}\mathbb{E}_{\rho_\alpha}[s]\\
    &=-\mathbb{E}_{\rho_\alpha}[s] + (1-\alpha)\int_{\real^d}\frac{d}{d\alpha}\rho_\alpha(x)s(x)dx\\
    &=-\mathbb{E}_{\rho_\alpha}[s] - (1-\alpha)\int_{\real^d}\rho_\alpha(x)\left[s(x)-\mathbb{E}_{\rho_\alpha}[s]\right]s(x)dx,
\end{align*}
where we used the fact that $\frac{d}{d\alpha}\rho_\alpha(x)=\rho_\alpha(x)\frac{d}{d\alpha}\log\rho_\alpha(x)$.
Similarly, we obtain
\begin{align*}
    \frac{d}{d\alpha} (-\log Z_\alpha) &= \mathbb{E}_{\rho_\alpha}[s].
\end{align*}

Integrating with respect to $\alpha$ from $\int_0^t e^{s-t} \lambda_s ds$ to $1 - e^{-t}\geq \int_0^t e^{s-t} \lambda_s ds$  we find
\begin{equation*}
 \KL(\mu_t^{\textrm{FR}} || \pi)-\KL(\mu_t^{\textrm{T-FR}} || \pi)= -\int_{\int_0^te^{s-t} \lambda_s ds}^{1-e^{-t}} (1-u)\, \mathrm{Var}_{\rho_u}\left(\log\frac{\mu_0}{\pi}\right)\,du \le 0.
\end{equation*}

\section{Analytic solutions: Multivariate Gaussian case}
\label{sec:gaussian}
This section focuses on comparing the evolution of the PDEs introduce in this section for the special case of transporting a Gaussian $\mu_0(x) = \mathcal{N}(x; m_0, \Sigma_0)$ to target Gaussian $\pi(x) = \mathcal{N}(x; m_\pi, \Sigma_\pi)$.  In this case, we can obtain nice expressions for the moments along the flow described by each PDE.

In particular, observing that both the tempered W flow and the tempered FR flow preserve gaussianity, we have that $\mu_t(x) = \mathcal{N}(x; m_t, \Sigma_t)$ for each time $t\geq 0$.

We detail here the results needed to obtain the ODEs describing the tempered and standard PDEs for the special case of transporting a Gaussian $\mu_0(x) = \mathcal{N}(x; m_0, \Sigma_0)$ to target Gaussian $\pi(x) = \mathcal{N}(x; m_\pi, \Sigma_\pi)$.  

\paragraph*{Wasserstein Flow (Langevin)}
The PDE in this case is the Fokker-Planck equation for the overdamped Langevin SDE 
\begin{align*}
dX_t &= -\nabla V_\pi(X_t)dt + \sqrt{2}dB_t \\ 
     & = -\Sigma_\pi^{-1}(X_t-m_\pi)dt + \sqrt{2}dB_t .
\end{align*}
Since the potential takes the form $V_\pi(x) = \frac{1}{2}(x-m_\pi)^\top \Sigma_\pi^{-1}(x-m_\pi)$, the moment equations are standard as this essentially boils down to an Ornstein-Uhlenbeck process. They are given by 
\begin{align*}
      \dot{m}_t &= -\Sigma_\pi^{-1}(m_t - m_\pi) \\
       \dot{\Sigma}_t &= -\Sigma_\pi^{-1}\Sigma_t - \Sigma_t \Sigma_\pi^{-1} + 2I 
\end{align*}
and the explicit solution can be found as
\begin{align*}
    m_t &= e^{-t\Sigma_\pi^{-1} }m_0 + (I - e^{-t\Sigma_\pi^{-1} })m_\pi \\
    \Sigma_t &= e^{-t\Sigma_\pi^{-1}} \left(2\int_0^t e^{s\Sigma_\pi^{-1}}(e^{s\Sigma_\pi^{-1}})^\top ds + \Sigma_0 \right) (e^{-t\Sigma_\pi^{-1}})^\top .
\end{align*}
In the 1D case, we can easily evaluate the expression for $\Sigma_t$ as
\begin{align*}
    \Sigma_t = e^{-2t\Sigma_\pi^{-1}}[\Sigma_0 + \Sigma_\pi(e^{2t\Sigma_\pi^{-1}} - 1)].
\end{align*}

\paragraph*{Fisher--Rao flow}
In the Gaussian case, the ODEs for the mean and covariance respectively are given by 
\begin{align*}
    \dot{m}_t &= -\Sigma_t \Sigma_\pi^{-1}(m_t - m_\pi) \\
    \dot{\Sigma}_t &=  -\Sigma_t\Sigma_\pi^{-1}\Sigma_t + \Sigma_t .
\end{align*}
The explicit solutions in this case are known (see e.g. \citet[Lemma E.3]{chen2023sampling}), 
\begin{align*}
m_t &= m_\pi+e^{-t}\left((1-e^{-t})\Sigma_\pi^{-1}+e^{-t}\Sigma_0^{-1}\right)^{-1}\Sigma_0^{-1}(m_0-m_\pi)\\
&= m_\pi + e^{-t}\Sigma_t\Sigma_0^{-1}(m_0 - m_\pi) \\
    \Sigma_t^{-1} &= \Sigma_\pi^{-1}+e^{-t}(\Sigma_0^{-1}-\Sigma_\pi^{-1}).
\end{align*}

\paragraph*{Tempered Wasserstein flow (Tempered Langevin; \citep{Chehab2024})}
Recall the diffusion process characterising Tempered Langevin is given by 
\begin{align*}
     dX_t = -[(1-\lambda_t) \nabla_x V_0 + \lambda_t \nabla_x V_\pi] dt + \sqrt{2}dB_t 
\end{align*}
where $\lambda_t: \mathbb{R}_{+} \rightarrow [0,1)$ is a non-decreasing function of $t$.  The corresponding evolution PDE given by 
\begin{align*}
     \partial_t \mu_t(x)=  \nabla \cdot \left( \mu_t(x)\nabla \log\left( \frac{\mu_t(x)}{\pi_t (x)} \right) \right) = \nabla \cdot \left( \mu_t(x)\nabla \log\left( \frac{\mu_t(x)}{\pi (x)} \right)^{\lambda_t} \right) + \nabla \cdot \left( \mu_t(x)\nabla \log\left( \frac{\mu_t(x)}{\mu_0 (x)} \right)^{1 - \lambda_t} \right) 
\end{align*}
where $\pi_t \propto \pi^{\lambda_t} \mu_0^{1 - \lambda_t}$. The second equality shows that it can be seen as a sum of two Wasserstein parts with temperatures $\lambda_t$ and $1-\lambda_t$.  

Note that $\pi_t$ is a product of 2 Gaussian densities $\mathcal{N}\left(m_\pi, \frac{\Sigma_\pi}{\lambda_t} \right)$ and $\mathcal{N}\left(m_0, \frac{\Sigma_0}{1-\lambda_t} \right)$ therefore $\pi_t \propto \mathcal{N}(a_t, P_t)$ where 
\begin{align*}
    a_t &= \left(m_0\frac{\Sigma_\pi}{\lambda_t} + m_\pi \frac{\Sigma_0}{1 - \lambda_t} \right) \left(\frac{\Sigma_0}{1 - \lambda_t} + \frac{\Sigma_\pi}{\lambda_t} \right)^{-1}
    \\ 
    P_t &= \Sigma_\pi(\lambda_t \Sigma_0 + (1-\lambda_t)\Sigma_\pi)^{-1} \Sigma_0
\end{align*}
and the intermediate densities are still Gaussian.  Then we have the same structure for the moment ODEs as for infinite time Wasserstein gradient flow, except with $m_\pi, \Sigma_\pi$ replaced by $a_t, P_t$ respectively, 
\begin{align*}
      \dot{m}_t &= -\left(\Sigma_\pi(\lambda_t \Sigma_0 + (1-\lambda_t)\Sigma_\pi)^{-1} \Sigma_0\right)^{-1}\left(m_t - \left(m_0\frac{\Sigma_\pi}{\lambda_t} + m_\pi \frac{\Sigma_0}{1 - \lambda_t} \right) \left(\frac{\Sigma_0}{1 - \lambda_t} + \frac{\Sigma_\pi}{\lambda_t} \right)^{-1}\right) \\
      &= -(\lambda_t\Sigma_\pi^{-1}+(1-\lambda_t)\Sigma_0^{-1})\left(m_t-\left((1-\lambda_t)m_0\Sigma_\pi + \lambda_t m_\pi \Sigma_0 \right) \left(\lambda_t\Sigma_0 + (1 - \lambda_t)\Sigma_\pi \right)^{-1}\right)\\
       \dot{\Sigma}_t &= -\left(\Sigma_\pi(\lambda_t \Sigma_0 + (1-\lambda_t)\Sigma_\pi)^{-1} \Sigma_0\right)^{-1}\Sigma_t - \Sigma_t \left(\Sigma_\pi(\lambda_t \Sigma_0 + (1-\lambda_t)\Sigma_\pi)^{-1} \Sigma_0\right)^{-1} + 2I \\
       &=-(\lambda_t\Sigma_\pi^{-1}+(1-\lambda_t)\Sigma_0^{-1})\Sigma_t-\Sigma_t(\lambda_t\Sigma_\pi^{-1}+(1-\lambda_t)\Sigma_0^{-1}) +2I.
\end{align*}
In the 1D case, the explicit solution is given by
\begin{align*}
m_t &= m_0e^{-\int_0^t(1-\lambda_s)ds\Sigma_0^{-1}-\int_0^t\lambda_sds\Sigma_\pi^{-1}}\\
&+e^{-\int_0^t(1-\lambda_s)ds\Sigma_0^{-1}-\int_0^t\lambda_sds\Sigma_\pi^{-1}}\int_0^t[(1-\lambda_u)\Sigma_0^{-1}m_0+\lambda_u\Sigma_\pi^{-1}m_\pi] e^{\int_0^u(1-\lambda_s)ds\Sigma_0^{-1}+\int_0^u\lambda_sds\Sigma_\pi^{-1}}du\\
\Sigma_t &= \Sigma_0e^{-2\int_0^t(1-\lambda_s)ds\Sigma_0^{-1}-2\int_0^t\lambda_sds\Sigma_\pi^{-1}}\\
&+2e^{-2\int_0^t(1-\lambda_s)ds\Sigma_0^{-1}-2\int_0^t\lambda_sds\Sigma_\pi^{-1}}\int_0^te^{2\int_0^u(1-\lambda_s)ds\Sigma_0^{-1}+2\int_0^u\lambda_sds\Sigma_\pi^{-1}}du.
\end{align*}

\paragraph*{Tempered Fisher--Rao flow}

Recalling that $\pi_t\propto \mathcal{N}(a_t, P_t)$ with $a_t, P_t$ as before and observing that tempered FR PDE can be written as
\begin{align*}
\partial_t\mu_t &= \mu_t(x)\left( \log \left( \frac{\pi_t(x)}{\mu_t(x)} \right) - \mathbb{E}_{\mu_t} \left[  \log \left( \frac{\pi_t}{\mu_t} \right) \right] \right)\\
&=\mu_t(x)\left( \log \left( \frac{\pi(x)}{\mu_t(x)} \right)^{\lambda_t} - \mathbb{E}_{\mu_t} \left[  \log \left( \frac{\pi}{\mu_t} \right)^{\lambda_t} \right] \right)+\mu_t(x)\left( \log \left( \frac{\mu_0(x)}{\mu_t(x)} \right)^{1-\lambda_t} - \mathbb{E}_{\mu_t} \left[  \log \left( \frac{\mu_0}{\mu_t} \right)^{1-\lambda_t} \right] \right),
\end{align*}
we see that the tempered FR PDE is the sum of two Fisher--Rao paths with temperatures $\lambda_t$ and $1-\lambda_t$.

The ODEs describing the evolution of the moments of $\mu_t$ are obtained from those for the standard FR flow replacing $m_\pi, \Sigma_\pi$ with $a_t, P_t$
\begin{align*}
    \dot{m}_t &= -\Sigma_t (\lambda_t\Sigma_\pi^{-1}+(1-\lambda_t)\Sigma_0^{-1})(m_t - m_\pi) \\
    \dot{\Sigma}_t &=  -\Sigma_t(\lambda_t\Sigma_\pi^{-1}+(1-\lambda_t)\Sigma_0^{-1})\Sigma_t + \Sigma_t .
\end{align*}
In the 1D case, the explicit solution is given by
\begin{align*}
m_t &= m_\pi + \exp\Bigg(- \displaystyle \int_0^t \Sigma_s \big( \lambda_s \Sigma_\pi^{-1} + (1-\lambda_s) \Sigma_0^{-1} \big) ds \Bigg) (m_0 - m_\pi)\\
\Sigma_t &= \frac{1}{\, e^{-t} \Sigma_0^{-1} + \displaystyle \int_0^t e^{-(t-s)} \big( \lambda_s \Sigma_\pi^{-1} + (1-\lambda_s) \Sigma_0^{-1} \big) ds \,}.
\end{align*}

\section{Discrete time convergence}
\label{app:proofs3}
\subsection{Proof of Proposition~\ref{prop:tw_conv_discr}}
\label{app:tw_conv_discr}

To deal with the tempered W flow update we decompose
\begin{align}
\label{eq:twfr_step2}
\KL(\mu_{n}||\pi)&= \KL(\mu_{n}||\pi)+\KL(\mu_{n}||\pi_n)-\KL(\mu_{n}||\pi_n)\\
&=\KL(\mu_{n}||\pi_n)+\int_{\real^d}  \mu_{n}(x)\log\frac{\pi_n(x)}{\pi(x)}dx.\notag
\end{align}
We recall that $\mu_{n}$ is obtained as
\begin{align*}
   \mu_{n}(x) =(4\uppi\gamma_n)^{-d/2}\int_{\real^d}  \exp \left(-\frac{1}{4\gamma_n} \|x - y -\gamma_n \nabla \log \pi_n(y) \|^2   \right) \mu_{n-1}(y) dy 
\end{align*}
and using 
Lemma~\ref{lem:vempala} with $\gamma_n \leq c_n/(4 L_n^2)$ we obtain
\begin{align*}
   \KL(\mu_{n}||\pi_n)&\leq e^{-\gamma_n c_n}\KL(\mu_{n-1}||\pi_n)+6\gamma_n^2 d L_{n}^2.
\end{align*}
Thus we obtain
\begin{align*}
\KL(\mu_{n}||\pi)&\leq e^{-\gamma_n c_n}\KL(\mu_{n-1}||\pi_n)+6\gamma_n^2 d L_{n}^2+\int_{\real^d}  \mu_{n}(x)\log\frac{\pi_n(x)}{\pi(x)}dx\\
&\leq e^{-\gamma_n c_n}\KL(\mu_{n-1}||\pi)+6\gamma_n^2 d L_n^2+\int_{\real^d}  [\mu_{n-1}(x) - \mu_{n}(x)]\log\frac{\pi(x)}{\pi_n(x)}dx\\
&= e^{-\gamma_n c_n}\KL(\mu_{n-1}||\pi)+6\gamma_n^2 d L_n^2+(1-\lambda_n)\int_{\real^d}  [\mu_{n-1}(x) -  \mu_{n}(x)]\log\frac{\pi(x)}{\mu_0(x)}dx,
\end{align*}
where we used the fact that $\KL(\mu_{n-1}||\pi_n) = \KL(\mu_{n-1}||\pi)+\int_{\real^d}  \mu_{n-1}\log \nicefrac{\pi}{\pi_n}$ and $e^{-\gamma_n c_n}\leq 1$ and that $\pi_n = \mu_0^{1-\lambda_n}\pi^{\lambda_n}/\mathcal{Z}_n$.

Let us focus on 
\begin{align*}
    \int_{\real^d} [\mu_{n-1}(x) -  \mu_{n}(x)]\log\frac{\pi(x)}{\mu_0(x)}dx &= \int \mu_{n-1}(x)(V_0(x)-V_\pi(x))dx + \int_{\real^d} \mu_{n}(x)(V_\pi(x)-V_0(x))dx\\
    &\leq \int \mu_{n-1}(x)[V_0(x)-\inf_{x\in\real^d} V_\pi(x)]dx + \int_{\real^d} \mu_{n}(x)[V_\pi(x)-\inf_{x\in\real^d} V_0(x)]dx\\
    &\leq\int_{\real^d} \mu_{n-1}(x) [V_0(x)-\inf_{x\in\real^d} V_0(x)]dx+\int_{\real^d} \mu_{n}(x) [V_\pi(x)-\inf_{x\in\real^d} V_\pi(x)]dx
\end{align*}
as $\inf_{x\in\real^d} V_\pi(x), \inf_{x\in\real^d} V_0(x)$ are constants.
Using Lemma~\ref{lem:chehab_prop14} under~\eqref{eq:dissipativity}, we can bound
\begin{align*}
    \int_{\real^d} [\mu_{n-1}(x) -  \mu_{n}(x)]\log\frac{\pi(x)}{\mu_0(x)}dx 
    &\leq 2L_0\frac{b_0}{a_0}+2L_\pi\frac{b_\pi}{a_\pi} + 2L_0\int_{\real^d} \mu_{n-1}(x) \norm{x}^2dx+2L_\pi\int_{\real^d} \mu_{n}(x) \norm{x}^2 dx\\
    &\leq 2L_0\frac{b_0}{a_0}+2L_\pi\frac{b_\pi}{a_\pi} + 2(L_0+L_\pi)\max\left(\mathbb{E}_{\mu_0}[\norm{x}^2], \frac{2(3(b_0+b_\pi)/2+d)}{a_0\wedge a_\pi\wedge 1}\right):=A'
\end{align*}
using Lemma~\ref{lem:chehab_lemma16} to bound the second moment of $\mu_n$ for all $n\geq 0$ since $\gamma_n\leq \min(1, \frac{a_\pi\wedge a_0}{2(L_\pi+L_0)^2})$ for all $n\geq 0$.

Using induction we then obtain
\begin{align*}
    \KL(\mu_{n}||\pi)&\leq \KL(\mu_{0}||\pi)e^{-\sum_{k=1}^n \gamma_k  c_{k}}+\sum_{k=0}^{n-1}(6\gamma_k^2 d L_{k}^2+(1-\lambda_k)A')e^{-\sum_{i=1}^k \gamma_i  c_{i}}.
\end{align*}

\section{Auxiliary results for Section~\ref{app:tempered_functional}}
\label{app:gf}

\subsection{First variation of $\cF$}

We recall the definition of first variation of a functional $\cF:\X\rightarrow \real^+$ on $\X$. We denote by $\X^*$ the dual of $\X$ and for any $z\in\X$ and $z^\star\in\X^*$ we denote the dot product by $\ssp{z^\star, z}$.

\begin{definition}[Derivative]
\label{def:first_var}
If it exists, the derivative of $\cF$ at $z_1$ is the  function $\delta \cF(z_1):\X^\star \rightarrow \real$ s.t. for any $z_2 \in \X$, with $\xi = z_2-z_1$:
\begin{equation*}
\lim_{\epsilon \rightarrow 0}\frac{1}{\epsilon}(\cF(z_1+\epsilon  \xi) -\cF(z_1))
=\ssp{\delta\cF(z_1), \xi},
\end{equation*} 
and is defined uniquely up to an additive constant.
\end{definition}

Using this definition for $\cF(\mu, \lambda)= \KL(\mu|| \pi_\lambda)+\KL(\pi_\lambda||\pi)$ we have for $\mu_i$, $i=1, 2$, and $\xi_\mu = \mu_2-\mu_1$,
    \begin{align*}
        \cF(\mu_1+\epsilon  \xi_\mu, \lambda) -\cF(\mu_1, \lambda) &= \int_{\real^d} \log\left( [\mu_1+\epsilon\xi_\mu](x)\right)[\mu_1+\epsilon\xi_\mu]( x)d x-\int_{\real^d}\log  \pi_{\lambda}( x)[\mu_1+\epsilon\xi_\mu]( x)dx +\\
        &+\int_{\real^d}\log  \pi_{\lambda}( x)\mu_1( x)dx -\int_{\real^d} \log\left( \mu_1(x)\right)\mu_1( x)d x\\
        &=- \epsilon\int_{\real^d}  \log  \pi_{\lambda}( x)\xi_\mu( x)dx+ \epsilon\int_{\real^d} \log\left( [\mu_1+\epsilon\xi_\mu](x)\right)\xi_\mu( x)d x\\
        &+\int_{\real^d} \log\left(1+\epsilon \frac{\xi_\mu(x)}{ \mu_1(x)}\right)\mu_1( x)d x. 
    \end{align*}
    We then have that 
    \begin{align*}
        \lim_{\epsilon \rightarrow 0}\frac{1}{\epsilon}(\cF(z_1+\epsilon  \xi) -\cF(z_1)) & = - \int_{\real^d} \log\pi_{\lambda}(x)\xi_\mu( x)dx+\int_{\real^d}  \frac{\xi_\mu(x)}{ \mu_1(x)}\mu_1( x)d x + \int_{\real^d} \log\left( \mu_1(x)\right)\xi_\mu( x)d x,
    \end{align*}
    where the equality follows from the Taylor expansion of the logarithm as $\epsilon\rightarrow0$.
    Therefore, we can write
    \begin{align*}
        \lim_{\epsilon \rightarrow 0}\frac{1}{\epsilon}(\cF(z_1+\epsilon  \xi) -\cF(z_1)) & = \left\langle
         \log\frac{\mu_1(x)}{\pi_{\lambda}(x)}  +1 , 
            \xi_\mu
        \right\rangle.
    \end{align*}
    The result follows since $\delta \cF(\mu)/\delta\mu$ is defined up to additive constants.

\subsection{Proof of Proposition~\ref{prop:stability} }
\label{app:stability}

 Remind ourselves of the paired gradient flow system
\begin{align}
    \partial_t \mu_t(x) &= -\mu_t(x)\left(\log \frac{\mu_t(x)}{\pi_{\lambda_t}(x)}-\E_{\mu_t}\left[\log \frac{\mu_t(x)}{\pi_{\lambda_t}(x)}\right]\right) \\
    \dot{\lambda}_t&= \int_{\real^d}  \mu_{t}(x) \log \frac{\pi(x)}{\mu_0(x)}dx- \int_{\real^d}  \pi_{\lambda_t}(x) \log \frac{\pi(x)}{\mu_0(x)}dx+(1-\lambda_t) \textrm{Var}_{\pi_{\lambda_t}}\left(\log\frac{\mu_0}{\pi}\right).
\end{align}
For which we can instead write
\begin{align}
\label{eq:reformmu}
    \mu_t &\propto \mu_0^{e^{-t} + e^{-t} \int_0^t e^{s} ds - \beta_t} \pi^{\beta_t}  = \mu_0^{e^{-t} + e^{-t} (e^t - 1) - \beta_t} \pi^{\beta_t}  = \mu_0^{  1 - \beta_t} \pi^{\beta_t}
\end{align}
where
\begin{align*}
    \beta_t = \int_0^t e^{s-t} \lambda_s ds 
\end{align*}
Noticing that \eqref{eq:reformmu} has the form of \eqref{eq:alpha_tempering} with $\alpha = \beta_t$, so we now use the notation $\pi_{\beta_t}$ in place of $\mu_t$.
The ODE for $\beta_t$ is given by 
\begin{align*}
    \dot{\beta_t} = -e^{-t}\int_0^t e^{s} \lambda_s ds + e^{-t}e^{t} \lambda_t = -\beta_t + \lambda_t. 
\end{align*}
Then using the shorthand notation $g_0 = \log \frac{\pi}{\mu_0}$, we have
\begin{align*}
     \dot{\lambda}_t &= \mathbb{E}_{\pi_{\beta_t}}[ g_0] - \mathbb{E}_{\pi_{\lambda_t}}[g_0] +(1-\lambda_t) \textrm{Var}_{\pi_{\lambda_t}}\left(g_0\right).
\end{align*}
Now for the case $g_0 = \log \frac{\pi}{\mu_0}$, it holds that
\begin{align*}
    \frac{d}{d\alpha} \mathbb{E}_{\pi_{\alpha}}[ g_0] = \textrm{Var}_{\pi_\alpha}[g_0]
\end{align*}
see Appendix~\ref{app:kl_tempering} for a derivation. 
So then the ODE simplifies to 
\begin{align*}
     \dot{\lambda}_t &= \mathbb{E}_{\pi_{\beta_t}}[ g_0] - \mathbb{E}_{\pi_{\lambda_t}}[g_0] +(1-\lambda_t) \left.\frac{d}{d\alpha} \mathbb{E}_{\pi_{\alpha}}[ g_0] \right|_{\alpha = \pi_{\lambda_t}}
\end{align*}
So now we have a paired ODE system that can be understood purely in terms of $\beta_t, \lambda_t$ with $\mu_0, \pi$ as fixed constants. 
So we can instead check how $\cF$ varies with $(\lambda, \beta)$, which allows us to avoid dealing with gradient flows in probability measure space. In order to prove asymptotic stability of $(\lambda, \beta) = (1, 1)$ by LaSalle invariance principle type arguments, we choose $\cF(\lambda_t, \beta_t) = \KL(\pi_{\beta_t}||\pi_{\lambda_t}) + \KL(\pi_{\lambda_t}||\pi)$ as a Lyapunov functional and show 
\begin{enumerate}
    \item $\cF \geq 0$ for all $\lambda, \beta \in (0,1]$
    \item $\cF = 0$ iff $\lambda = 1, \beta = 1$
    \item $\frac{d}{dt}\cF \leq 0$ for all $\lambda, \beta \in (0,1]$.
    \item $\frac{d}{dt}\cF = 0$ iff  $\lambda = 1, \beta = 1$
\end{enumerate}
The above conditions are sufficient since $\lambda, \beta \in [0,1]$ and therefore vary in a compact interval. The first item trivially holds by non-negativity of the KL and second also holds trivially by the fact that $\KL(\pi_{\lambda_t}||\pi) = 0$ iff $\lambda_t = 1$ and therefore $\beta_t = 1$ also.  For the third item, differentiating with respect to time yields 
\begin{align*}
    \frac{d}{dt}\cF(\lambda_t, \beta_t) &= \frac{d}{dt} \KL(\pi_{\beta_t}||\pi_{\lambda_t}) + \KL(\pi_{\lambda_t}||\pi)\\
    &=\int_{\real^d}\log \frac{\pi_{\beta_t}(x)}{\pi_{\lambda_t}(x)}\partial_t \pi_{\beta_t}(x)dx-\int_{\real^d} \pi_{\beta_t}(x)\partial_t\log \pi_{\lambda_t}(x)dx+\frac{d}{d\lambda_t} \KL(\pi_{\lambda_t}||\pi) \dot{\lambda}_t.
\end{align*}
Using the derivative of KL w.r.t. $\lambda$ derived in Appendix~\ref{app:kl_tempering}, the fact that
\begin{align*}
    \partial_\lambda \log \pi_\lambda(x) &= \log \frac{\pi(x)}{\mu_0(x)}- \int_{\real^d}  \pi_\lambda(x) \log \frac{\pi(x)}{\mu_0(x)}dx,
\end{align*}
we obtain
\begin{align}
\notag
    \frac{d}{dt}\cF(\lambda_t, \beta_t) 
    &=\int_{\real^d}\log \frac{\pi_{\beta_t}(x)}{\pi_{\lambda_t}(x)}\partial_t \pi_{\beta_t}(x)dx-\int_{\real^d} \pi_{\beta_t}(x)\partial_t\log \pi_{\lambda_t}(x)dx+\frac{d}{d\lambda_t} \KL(\pi_{\lambda_t}||\pi) \dot{\lambda}_t\\
    \label{eq:funcdecay}
    &= -\text{Var}_{\pi_{\beta_t}}\left(\log \frac{\pi_{\beta_t}}{\pi_{\lambda_t}}\right)-(\dot{\lambda}_t)^2 < 0,
\end{align}
showing that $\frac{d}{dt} \mathcal{F}(\lambda_t, \beta_t) \leq 0$ for any $\lambda_t, \beta_t \in [0,1]$.
Furthermore, the rate of decay of $\cF$ is driven both by the tempered FR PDE and the optimally chosen $\lambda_t$.  Finally,  we check if $(\lambda, \beta) = (1, 1)$ is a unique fixed point of~\eqref{eq:fr_ode} and~\eqref{eq:lambda_ode2}, that is, to find $\lambda^\ast, \beta^\ast$ such that $\frac{d}{dt}\cF(\lambda^\ast, \beta^\ast) = 0$.  For this, we require individually, 
\begin{align*}
    \text{Var}_{\pi_{\beta_t}}\left(\log \frac{\pi_{\beta_t}}{\pi_{\lambda_t}}\right) = 0; \quad \dot{\lambda}_t = 0
\end{align*}
based on \eqref{eq:funcdecay}. Starting with the first condition, this can happen only when $\log \frac{\mu_t}{\pi_{\lambda_t}} = c$ for all $x \in \mathbb{R}^d$, which can only happen when $\pi_{\beta_t} = \pi_{\lambda_t}$, meaning $\beta_t = \lambda_t$.  Then substitute this into the ODE for $\lambda_t$ to obtain 
\begin{align*}
    \dot{\lambda}_t &= \int_{\real^d}  \pi_{\lambda_t}(x) \log \frac{\pi(x)}{\mu_0(x)}dx- \int_{\real^d}  \pi_{\lambda_t}(x) \log \frac{\pi(x)}{\mu_0(x)}dx+(1-\lambda_t) \textrm{Var}_{\pi_{\lambda_t}}\left(\log\frac{\mu_0}{\pi}\right) \\
    & = (1-\lambda_t) \textrm{Var}_{\pi_{\lambda_t}}\left(\log\frac{\mu_0}{\pi}\right).
\end{align*}
Clearly $\textrm{Var}_{\pi_{\lambda_t}}\left(\log\frac{\mu_0}{\pi}\right)$ cannot be zero unless $\pi_{\lambda_t}$ is the zero function.  Therefore, $\lambda_t = 1$, which also then means that $\beta_t = 1$.

\subsection{Proof of Proposition~\ref{prop:stabilityW}}
\label{app:stabilityW}

We proceed by the same reasoning as in Appendix~\ref{app:stability} with one slight difference, as we do not have access to a closed form solution to the W flow. We present a similar La Salle invariance principle type argument, with the main difference being that we must establish that $\mu_t$ trajectories from solving \eqref{eq:lambda_ode} are relatively compact in $\cP_2^{ac}(\mathbb{R}^d)$.  

Firstly, we trivially have that $\cF \geq 0$ for all $\mu \in \mathcal{P}_2^{ac}(\mathbb{R}^d), \lambda \in (0,1]$ and also $\mathcal{F} = 0$ iff $\lambda_t = 1, \mu_t = \pi$.  It straightforwardly follows from \eqref{eq:dfode_tempw} that $\frac{d \cF}{dt} \leq 0$ for all   $\mu \in \cP_2^{ac}(\mathbb{R}^d), \lambda \in (0,1]$.  We also have from the same reasoning as in Appendix~\ref{app:stability} that $\frac{d \cF}{dt} = 0$ iff $(\lambda_t, \mu_t)= (1,\pi)$.  Finally, in order to obtain asymptotic stability, we require that trajectories $\{\mu_t, \lambda_t\}_{t \geq 0}$ are relatively compact.  This trivially holds for $\lambda_t$ as it is confined to $(0,1]$.  Now we have that $\{\mu_t\}_{t > 0} \in \mathcal{P}_2^{ac}(\mathbb{R}^d)$ due to Lemma 1 in Appendix~\ref{app:preliminary} and under Assumption~\ref{ass:ula} and $\mu_0 \in \mathcal{P}_2^{ac}(\mathbb{R}^d)$.  A direct application of Proposition 7.1.5 in \cite{ambrosio2008gradient} yields the required relative compactness of trajectories of $\mu_t$.

\end{document}